 \newcommand*{\ARXIV}{}
\newcommand*{\CAMREADY}{}
	\renewcommand{\cite}[1]{\citep{#1}}
	\definecolor{mydarkblue}{rgb}{0,0.08,0.5}
	\def\footnoterule{\kern-3pt \hrule width 12pc \kern 2.6pt }
	\renewenvironment{abstract}%
	{%
		\vskip 0in%
		\centerline%
		{\large\bf Abstract}%
		\vspace{-1ex}%
		\begin{quote}%
		}
		{
			\par%
		\end{quote}%
		\vskip 0ex%
		\vspace{-2mm}
	}
	\title{\vskip -4ex \bf{Vanishing Gradients in Reinforcement Finetuning \\ of Language Models} \vskip -1ex}
	\newcommand{\aff}[1]{\textsuperscript{\normalfont #1}}
	\author{
		Noam Razin\thanks{Work done while interning at Apple.}~\aff{$\ddagger$},\, Hattie Zhou\footnotemark[1]~\aff{$\mathsection$}, Omid Saremi\aff{$\dagger$}, Vimal Thilak\aff{$\dagger$}, Arwen Bradley\aff{$\dagger$}, \\[0.5mm]
		Preetum Nakkiran\aff{$\dagger$}, Joshua Susskind\aff{$\dagger$}, Etai Littwin\aff{$\dagger$} \\[3mm]
		{
		\fontsize{10.5}{11}\selectfont
		\textsuperscript{$\dagger$}\textit{Apple}\;
		\textsuperscript{$\ddagger$}\textit{Tel Aviv University}\;
		\textsuperscript{$\mathsection$}\textit{Mila, Université de Montréal}
		}
	}
	\date{}
	\definecolor{mydarkblue}{rgb}{0,0.08,0.5}
	\newtheorem{claim}[theorem]{Claim}
	\newtheorem{fact}[theorem]{Fact}
	\newtheorem{procedure}{Procedure}
	\newtheorem{conjecture}{Conjecture}	
	\newtheorem{hypothesis}{Hypothesis}	
	\newcommand{\qed}{\hfill\ensuremath{\blacksquare}}
	\newtheorem{theorem}{Theorem}
	\newtheorem{proposition}{Proposition}
	\theoremstyle{definition}
\definecolor{green}{rgb}{0.0, 0.5, 0.0}
\definecolor{xcolor-gray}{gray}{0.95}
\definecolor{best}{rgb}{0.019, 0.627, 0.011}
\newcommand\best[1]{\textbf{#1}}
\definecolor{second_best}{rgb}{0.101, 0.392, 0.635}
\def\be{\begin{equation}}
	\def\ee{\end{equation}}
\def\beas{\begin{eqnarray*}}
	\def\eeas{\end{eqnarray*}}
\def\bea{\begin{eqnarray}}
	\def\eea{\end{eqnarray}}
\newcommand{\xbf}{{\mathbf x}}
\newcommand{\ybf}{{\mathbf y}}
\newcommand{\zbf}{{\mathbf z}}
\newcommand{\ebf}{{\mathbf e}}
\newcommand{\abf}{{\mathbf a}}
\newcommand{\D}{{\mathcal D}}
\newcommand{\RR}{{\mathcal R}}
\newcommand{\X}{{\mathcal X}}
\newcommand{\Y}{{\mathcal Y}}
\newcommand{\OO}{{\mathcal O}}
\renewcommand{\L}{\mathcal{L}}
\newcommand{\EE}{\mathop{\mathbb E}} 
\newcommand{\R}{{\mathbb R}}
\DeclareMathOperator*{\argmax}{argmax}
\newcommand{\clip}{\mathrm{clip}}
\newcommand{\smax}{\mathrm{softmax}}
\newcommand{\prob}{p}
\newcommand{\sftobj}{\L_{\mathrm{SFT}}}
\newcommand{\rftobj}{V}
\newcommand{\ppoclipobj}{V_{\mathrm{PPO}}}
\newcommand{\ppoklobj}{V_{\mathrm{PPO-KL}}}
\newcommand{\ppotheta}{\bar{\theta}}
\newcommand{\KL}{\mathbb{D}_{\mathrm{KL}}}
\newcommand{\TV}{\mathrm{TV}}
\DeclareMathOperator*{\std}{STD}
\newcommand{\rfttheta}{\theta_{\mathrm{RFT}}}
\newcommand{\sfttheta}{\theta_{\mathrm{SFT}}}
\newcommand{\rfttime}{t_{\mathrm{RFT}}}
\newcommand{\sfttime}{t_{\mathrm{SFT}}}
\DeclareFontFamily{U}{mathx}{\hyphenchar\font45}
\DeclareFontShape{U}{mathx}{m}{n}{<-> mathx10}{}
\DeclareSymbolFont{mathx}{U}{mathx}{m}{n}
\DeclareMathAccent{\widebar}{0}{mathx}{"73}
\definecolor{darkspringgreen}{rgb}{0.09, 0.45, 0.27}
	\renewcommand{\endnote}[1]{\null} 
	\newcommand*{\ABBR}{}
	\newcommand*{\ABBR}{}
	\newcommand*{\ABBR}{}
	\newcommand*{\ABBR}{}
	\newcommand*{\ABBR}{}
	\newcommand{\eg}{{\it e.g.}}
	\newcommand{\ie}{{\it i.e.}}
	\newcommand{\cf}{{\it cf.}}
\begin{document}
	
	
\ifdefined\ARXIV
		\maketitle
\fi
	\ifdefined\NEURIPS
	\title{Paper Title}
		\author{
			Author 1 \\
			Author 1 Institution \\	
			\texttt{author1@email} \\
			\And
			Author 1 \\
			Author 1 Institution \\	
			\texttt{author1@email} \\
		}
		\maketitle
	\fi
	\ifdefined\CVPR
		\title{Paper Title}
		\author{
			Author 1 \\
			Author 1 Institution \\	
			\texttt{author1@email} \\
			\and
			Author 2 \\
			Author 2 Institution \\
			\texttt{author2@email} \\	
			\and
			Author 3 \\
			Author 3 Institution \\
			\texttt{author3@email} \\
		}
		\maketitle
	\fi
	\ifdefined\AISTATS
		\twocolumn[
		\aistatstitle{Paper Title}
		\ifdefined\CAMREADY
			\aistatsauthor{Author 1 \And Author 2 \And Author 3}
			\aistatsaddress{Author 1 Institution \And Author 2 Institution \And Author 3 Institution}
		\else
			\aistatsauthor{Anonymous Author 1 \And Anonymous Author 2 \And Anonymous Author 3}
			\aistatsaddress{Unknown Institution 1 \And Unknown Institution 2 \And Unknown Institution 3}
		\fi
		]	
	\fi
	\ifdefined\ICML
		\icmltitlerunning{Paper Title}
		\twocolumn[
		\icmltitle{Paper Title} 
		\icmlsetsymbol{equal}{*}
		\begin{icmlauthorlist}
			\icmlauthor{Author 1}{inst} 
			\icmlauthor{Author 2}{inst}
		\end{icmlauthorlist}
		\icmlaffiliation{inst}{Some Institute}
		\icmlcorrespondingauthor{Author 1}{author1@email}
		\icmlkeywords{}
		\vskip 0.3in
		]
		\printAffiliationsAndNotice{} 
	\fi
	\ifdefined\ICLR
		\title{Vanishing Gradients in Reinforcement \\ Finetuning of Language Models}
            \newcommand{\aff}[1]{\textsuperscript{\normalfont #1}}
		\author{
		Noam Razin\thanks{Work done while interning at Apple.}\aff{$\ddagger$},\, Hattie Zhou\footnotemark[1]\aff{$\mathsection$}, Omid Saremi\aff{$\dagger$}, Vimal Thilak\aff{$\dagger$}, Arwen Bradley\aff{$\dagger$}, \\[1.2mm]
        ~\textbf{Preetum Nakkiran\aff{$\dagger$}, Joshua Susskind\aff{$\dagger$}, Etai Littwin\aff{$\dagger$}} \\[2mm]
            \textsuperscript{$\dagger$}Apple\;
            \textsuperscript{$\ddagger$}Tel Aviv University\;
            \textsuperscript{$\mathsection$}Mila, Université de Montréal
		}

		\maketitle
	\fi
	\ifdefined\COLT
		\title{Paper Title}
		\coltauthor{
			\Name{Author 1} \Email{author1@email} \\
			\addr Author 1 Institution
			\And
			\Name{Author 2} \Email{author2@email} \\
			\addr Author 2 Institution
			\And
			\Name{Author 3} \Email{author3@email} \\
			\addr Author 3 Institution}
		\maketitle
	\fi

	\begin{abstract}
Pretrained language models are commonly aligned with human preferences and downstream tasks via \emph{reinforcement finetuning (RFT)}, which refers to maximizing a (possibly learned) reward function using policy gradient algorithms.
This work identifies a fundamental optimization obstacle in RFT: we prove that the expected gradient for an input vanishes when its reward standard deviation under the model is small, even if the expected reward is far from optimal.
Through experiments on an RFT benchmark and controlled environments, as well as a theoretical analysis, we then demonstrate that vanishing gradients due to small reward standard deviation are prevalent and detrimental, leading to extremely slow reward maximization.
Lastly, we explore ways to overcome vanishing gradients in RFT.
We find the common practice of an initial \emph{supervised finetuning (SFT)} phase to be the most promising candidate, which sheds light on its importance in an RFT pipeline.
Moreover, we show that a relatively small number of SFT optimization steps on as few as $1\%$ of the input samples can suffice, indicating that the initial SFT phase need not be expensive in terms of compute and data labeling efforts.
Overall, our results emphasize that being mindful for inputs whose expected gradient vanishes, as measured by the reward standard deviation, is crucial for successful execution of RFT.\footnote{
Code for reproducing our experiments is available at \url{https://github.com/apple/ml-rlgrad}.
}
\end{abstract}

	\ifdefined\COLT
		\medskip
		\begin{keywords}
			\emph{TBD}, \emph{TBD}, \emph{TBD}
		\end{keywords}
	\fi

	
	\vspace{-0.5mm}
\section{Introduction}
\label{sec:intro}
\vspace{-0.5mm}

The dominant machine learning paradigm for textual data relies on pretraining a language model on vast corpora, and subsequently aligning it with human preferences and downstream tasks via finetuning (see, \eg,~\citet{yang2019xlnet,raffel2020exploring,wei2022finetuned,thoppilan2022lamda,openai2023gpt,touvron2023llama,zhao2023survey}).
\emph{Supervised finetuning (SFT)} provides a straightforward procedure for adapting the model, in which the cross entropy loss is minimized over pairs of textual inputs and desirable outputs.
Despite the simplicity of SFT, the cost of obtaining high-quality labeled data, as well as the difficulty in expressing human preferences through ground truth labels, has led to wide adoption of a reinforcement learning-based approach, herein referred to as \emph{reinforcement finetuning (RFT)}~\citep{wu2018learning,ziegler2019fine,stiennon2020learning,ouyang2022training,bai2022training,openai2023gpt,touvron2023llama}.

In RFT, instead of minimizing a loss over labeled inputs, policy gradient algorithms are used for maximizing a reward function.
The reward function can be learned from human preferences, in which case RFT is commonly known as \emph{reinforcement learning from human feedback (RLHF)}, or tailored to a downstream task.
Specifically, denote by $V (\xbf; \theta)$ the expected reward that a model parameterized by~$\theta$ achieves, given as input a sequence of tokens $\xbf$.
For a distribution over inputs, RFT maximizes $\EE_{\xbf} \brk[s]{V (\xbf; \theta)}$ with respect to~$\theta$ by following estimates of its gradient.
Proximal Policy Optimization (PPO)~\citep{schulman2017proximal}~---~a policy gradient algorithm widely used for RFT~---~adheres to this scheme, up to replacing the expected reward with a surrogate objective.

In this work, we highlight a fundamental optimization obstacle in RFT: we prove that the expected gradient for an input $\xbf$, \ie~$\nabla_\theta V (\xbf; \theta)$, vanishes when the reward standard deviation of $\xbf$ under the model is small, even if the expected reward $V (\xbf; \theta)$ is far from optimal (\cref{sec:vanish_grad}).
The same holds for PPO, and stems from a combination of the reward maximization objective and the softmax operator used for producing distributions over tokens.
In contrast, under SFT the expected gradient does not necessarily vanish in analogous circumstances.

Vanishing gradients in RFT can arise when attempting to reverse an existing behavior of the model or when the text distribution of the downstream task differs from that of the pretraining corpora.
In both of these common use cases, over some inputs, the pretrained model is likely to give non-negligible probability only to outputs of roughly the same suboptimal reward.
The reward standard deviation for such inputs is small, implying that their expected gradients are near-zero.
Thus, RFT is anticipated to be ineffective for achieving a higher reward over them.

Using the GRUE benchmark~\citep{ramamurthy2023reinforcement} for RFT of language models, we empirically affirm the prevalence of inputs with small reward standard deviation, for which the expected gradient vanishes, and showcase its harmful effects (\cref{sec:evidence:grue}).
Specifically, we find that three of seven GRUE datasets contain a considerable amount of inputs with near-zero reward standard deviation under pretrained models, while their expected reward is suboptimal.
As anticipated, RFT has limited impact on the rewards of such inputs.
Moreover, our experiments show that, in datasets where inputs with small reward standard deviation are more common, the reward that RFT achieves compared to SFT is worse.
This indicates that vanishing gradients in RFT hinder the ability to maximize the reward function.
Controlled experiments that remove possible confounding factors (\cref{sec:evidence:controlled}), such as insufficient exploration, and a theoretical analysis for a simplified setting (\cref{sec:evidence:theoretical_illus}), corroborate that vanishing gradients in RFT can lead to extremely slow optimization.

Lastly, we explore ways to overcome vanishing gradients in RFT of language models (\cref{sec:overcoming}).
Conventional heuristics, including increasing the learning rate, using a temperature hyperparameter, and entropy regularization, are shown to be inadequate (\cref{sec:overcoming:inadequacy}).
On the other hand, in agreement with prior evidence from~\citet{ramamurthy2023reinforcement}, we find the common practice of an initial SFT phase (\eg, used in~\citet{ziegler2019fine,stiennon2020learning,ouyang2022training,dubois2023alpacafarm,touvron2023llama}) to be beneficial, both in terms of the resulting reward and in significantly reducing the number of input samples with small reward standard deviation.
This sheds light on the importance of SFT in an RFT pipeline~---~it can help alleviate vanishing gradients.

If the benefits of an initial SFT phase stem, at least partially, from alleviating vanishing gradients for the subsequent RFT phase, as our results suggest, one would expect that a relatively few SFT steps on a small number of labeled inputs can suffice.
Meaning, we need only perform SFT to decrease the number of input samples with near-zero reward standard deviation, after which the potency of RFT will substantially improve.
We show that this is indeed the case (\cref{sec:overcoming:partial_sft}).
Remarkably, for some datasets, SFT over as few as $1\%$ of the input samples allows RFT to reach $96\%$ of the reward achieved when SFT is performed over all input samples.
This demonstrates that the initial SFT phase need not be expensive in terms of compute and data labeling efforts, highlighting its effectiveness in addressing vanishing gradients in RFT.

Overall, we identify a fundamental optimization challenge in RFT, demonstrate its prevalence and detrimental effects, and explore possible solutions.
Our results emphasize that being mindful for inputs whose expected gradient vanishes, as can be measured by the reward standard deviation, is crucial for successful execution of RFT.
We believe further investigating ways to overcome vanishing gradients in RFT, \eg,~by modifying the objective (as recently done in~\citet{lu2022quark,rafailov2023direct,hu2023aligning,dong2023raft,gulcehre2023reinforced}), is a valuable direction for future research.

\smallskip
\textbf{Related work:} We discuss related work throughout and defer a concentrated account with additional references to~\cref{sec:related}.

        \section{Preliminaries: Supervised and Reinforcement Finetuning}
\label{sec:prelim}

Contemporary language models, such as GPT-4~\citep{openai2023gpt} and Llama2~\citep{touvron2023llama}, are trained on large corpora to produce a probability distribution over sequences of tokens.
The distribution is modeled in an autoregressive manner, often conditioned on an input sequence, where the output of the model is converted into a distribution over tokens through the \emph{softmax} operator~---~$\smax (\zbf)_k := \exp (z_k) / \sum\nolimits_{k' = 1}^K \exp (z_{k'} )$ for $\zbf \in \R^K$ and $k \in \{1, \ldots, K \}$.

Formally, let $\X$ be a finite vocabulary of tokens.
We consider a model $f$ parameterized by $\theta \in \R^P$ that, conditioned on an input $\xbf = (x_1, \ldots, x_{L_{in}}) \in \X^{L_{in}}$ of length $L_{in}$, defines a distribution over outputs $\ybf = (y_1, \ldots, y_{L_{out}}) \in \X^{L_{out}}$ of length $L_{out}$ as follows:
\[
\prob_\theta \brk*{ \ybf | \xbf } = \prod\nolimits_{l = 1}^{L_{out}} \prob_\theta \brk*{ y_l | \xbf, \ybf_{\leq l - 1} } = \prod\nolimits_{l = 1}^{L_{out}} \smax \brk*{ f \brk*{ \xbf, \ybf_{\leq l - 1} ; \theta } }_{y_l}
\text{\,,}
\]
where $\ybf_{\leq l - 1} := (y_1, \ldots, y_{l - 1})$ and $f \brk*{ \xbf, \ybf_{\leq l - 1} ; \theta } \in \R^{\abs{\X}}$.
In particular, $f$ can stand for a neural network that intakes a sequence of tokens and produces logits for the distribution of the next token.

\textbf{Supervised finetuning (SFT):}
SFT is a straightforward approach for adapting a pretrained language model to comply with human intent and downstream tasks.
Given pairs of textual inputs and ouputs, \eg~instructions and desirable completions~\citep{wei2022finetuned,chung2022scaling,wang2022super,longpre2023flan}, minimize the cross entropy loss through gradient-based methods.
The SFT objective $\sftobj (\theta)$ is defined by:
\be
 \sftobj (\theta) := \EE\nolimits_{\xbf \sim \D} \brk[s]*{ \sftobj(\xbf ; \theta) }
 \quad , \quad 
 \sftobj (\xbf ; \theta) := \EE\nolimits_{ \ybf \sim \D (\cdot | \xbf) } \brk[s]1{ - \ln \prob_\theta \brk*{ \ybf | \xbf } }
 \text{\,,}
\label{eq:sft_objective}
\ee
where $\D$ is a distribution over inputs, $\D(\cdot | \xbf)$ is a ground truth conditional distribution over outputs, and $\sftobj (\xbf ; \theta)$ is the loss over a single input $\xbf$.
Although the simplicity of SFT is appealing, it comes with a few drawbacks.
Namely: \emph{(i)} human preferences can be difficult to express through ground truth labels, \eg,~reducing the toxicity and bias of the model's outputs; and \emph{(ii)} obtaining high-quality labeled data can be expensive.

\textbf{Reinforcement finetuning (RFT):}
Given a reward function $r : \X^{L_{in}} \times \X^{L_{out}} \to [-1, 1]$,\footnote{
Rewards can be assumed to take values within any bounded interval instead of $[-1, 1]$.
} instead of minimizing a loss over labeled inputs, the goal of RFT is to maximize the expected reward $\rftobj (\theta)$ with respect to the model:
\be
 \rftobj (\theta) := \EE\nolimits_{\xbf \sim \D} \brk[s]*{ \rftobj(\xbf ; \theta) }
 \quad , \quad 
 \rftobj (\xbf ; \theta) := \EE\nolimits_{ \ybf \sim \prob_{\theta} \brk*{ \cdot | \xbf } } \brk[s]1{ r (\xbf, \ybf) }
\text{\,.}
\label{eq:rft_objective}
\ee
In reinforcement learning terminology, RFT forms a horizon-one (bandit) environment, where each input is a state, each output is an action that the model (\ie~policy) can take, and $\D$ is a distribution over initial states.
The reward function $r$ can be learned from human preferences, \eg, based on rankings of the model's outputs, which are often easier to obtain than ground truth labels~\citep{ziegler2019fine,stiennon2020learning,ouyang2022training}.
Alternatively, it may be a task specific metric, such as ROUGE~\citep{lin2004rouge}, METEOR~\citep{banerjee2005meteor}, or a learned sentiment classifier~\citep{ramamurthy2023reinforcement}.

Policy gradient algorithms are used to maximize $\rftobj (\theta)$, by updating the parameters~$\theta$ according to sample-based estimates of $\nabla_\theta \rftobj (\theta)$.
PPO~\citep{schulman2017proximal}~---~a policy gradient algorithm widely used for RFT~---~adheres to this scheme, up to replacing $\rftobj ( \xbf; \theta)$ with a surrogate (clipped) objective.
That is, PPO maximizes $\ppoclipobj (\theta) := \EE\nolimits_{\xbf \sim \D} \brk[s]{ \ppoclipobj (\xbf; \theta) }$ with:
\be
\ppoclipobj (\xbf ; \theta) := \EE\nolimits_{ \ybf \sim \prob_{\ppotheta} \brk{\cdot | \xbf} } \brk[s]*{ \min \brk[c]*{ \frac{ \prob_{\theta} \brk{\ybf | \xbf} }{ \prob_{\ppotheta} \brk{\ybf | \xbf} } A_{\ppotheta} (\xbf, \ybf) , \clip \brk2{ \frac{ \prob_{\theta} \brk{\ybf | \xbf} }{ \prob_{\ppotheta} \brk{\ybf | \xbf} } , 1 - \delta, 1 + \delta } A_{\ppotheta} (\xbf, \ybf) } }
\text{\,,}
\label{eq:ppo_clip_sample_objective}
\ee
where $\ppotheta \in \R^P$ stands for fixed reference parameters, which are updated to the current parameter values at the start of each PPO iteration, $A_{\ppotheta} (\xbf, \ybf) := r (\xbf, \ybf) - V (\xbf; \ppotheta)$ is the advantage function with respect to \smash{$\ppotheta$}, and $\clip (z, 1 - \delta, 1 + \delta)$ projects $z \in \R$ onto the interval $[1 - \delta, 1 + \delta]$ for $\delta \in (0, 1)$.\footnote{
We consider a horizon-one formulation of PPO.
One can also treat the environment as having a horizon of~$L_{out}$, with all intermediate rewards being zero and a discount factor of one.
}
Another, less common, variant of PPO includes the Kullback–Leibler (KL) divergence between $\prob_\theta (\cdot | \xbf)$ and $\prob_{\ppotheta} (\cdot | \xbf)$ as a regularization term instead of clipping.
For conciseness, we defer the presentation and analysis for that variant of PPO to~\cref{app:vanish_grad_ppo_kl}.


	\section{Small Reward Standard Deviation Implies Vanishing Gradients \\ in Reinforcement Finetuning}
\label{sec:vanish_grad}

We begin by introducing a, yet unnoticed, cause for vanishing gradients in RFT: the expected gradient for an input is near-zero if its reward standard deviation is small.

\begin{theorem}
\label{thm:rft_vanish_grad}
For parameters $\theta \in \R^P$ and input $\xbf \in \X^{L_{in}}$, denote the reward standard deviation of $\xbf$ under the model by $\std\nolimits_{\ybf \sim \prob_{\theta} \brk{ \cdot | \xbf} } \brk[s]*{ r (\xbf, \ybf) } := \sqrt{ \EE\nolimits_{ \ybf \sim \prob_{\theta} \brk{ \cdot | \xbf} } \brk[s]{ (r(\xbf, \ybf) - \rftobj(\xbf; \theta))^2 } }$, where $\rftobj (\xbf; \theta)$ is the expected reward defined in~\cref{eq:rft_objective}.
Then, it holds that:
\[
\norm*{ \nabla_\theta \rftobj (\xbf ; \theta) } \leq 6 L_{out} \gamma (\xbf; \theta) \cdot \std\nolimits_{\ybf \sim \prob_{\theta} \brk{ \cdot | \xbf} } \brk[s]*{ r (\xbf, \ybf) }^{2/3}
\text{\,,}
\]
where $\gamma (\xbf; \theta) := \max\nolimits_{l \in \{1, \ldots, L_{out}\}, \ybf_{\leq l - 1} \in \X^{l - 1} } \norm{ J_{ f ( \xbf, \ybf_{\leq l - 1} ; \theta) } }_2$ with $J_{ f ( \xbf, \ybf_{\leq l - 1} ; \theta) }$ standing for the Jacobian of $f ( \xbf, \ybf_{\leq l - 1} ; \theta)$ with respect to $\theta$, and $\norm{\cdot}$ and $\norm{\cdot}_2$ denote the Euclidean and operator norms, respectively.
\end{theorem}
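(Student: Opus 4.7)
My plan is to combine the standard policy gradient (score function) identity with a pointwise bound on $\| \nabla_\theta \prob_\theta(\ybf \vert \xbf) \|$ coming from the softmax parameterization. The key insight is that softmax-induced scores are uniformly bounded, so after subtracting a baseline, a small reward standard deviation translates directly into a small gradient norm.

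First I would use the log-derivative identity $\nabla \prob_\theta = \prob_\theta \, \nabla \log \prob_\theta$, together with the score-function identity $\EE_{\ybf}[\nabla_\theta \log \prob_\theta(\ybf \vert \xbf)] = 0$, to write the policy gradient with a baseline:
\[
\nabla_\theta V(\xbf;\theta) \;=\; \EE_{\ybf \sim \prob_\theta(\cdot \vert \xbf)} \bigl[ \bigl(r(\xbf,\ybf) - V(\xbf;\theta)\bigr) \, \nabla_\theta \log \prob_\theta(\ybf \vert \xbf) \bigr].
\]
Taking norms and applying the triangle inequality then gives $\| \nabla_\theta V(\xbf;\theta) \| \leq \EE_{\ybf}\bigl[ \vert r(\xbf,\ybf) - V(\xbf;\theta) \vert \cdot \| \nabla_\theta \log \prob_\theta(\ybf \vert \xbf) \| \bigr]$.

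Next I would bound the score pointwise. From the autoregressive factorization and the identity $\nabla_\zbf \log \smax(\zbf)_k = \ebf_k - \smax(\zbf)$, the chain rule gives $\nabla_\theta \log \prob_\theta(\ybf \vert \xbf) = \sum_{l=1}^{L_{out}} J_{f(\xbf, \ybf_{\leq l-1}; \theta)}^{\top} \bigl( \ebf_{y_l} - \prob_\theta(\cdot \vert \xbf, \ybf_{\leq l-1}) \bigr)$. A direct calculation shows $\| \ebf_k - \pbf \| \leq \sqrt{2}$ for any one-hot $\ebf_k$ and any probability vector $\pbf$; combining this with $\| J_{f(\xbf, \ybf_{\leq l-1}; \theta)} \|_2 \leq \gamma(\xbf;\theta)$ and the triangle inequality yields $\| \nabla_\theta \log \prob_\theta(\ybf \vert \xbf) \| \leq \sqrt{2}\, L_{out}\, \gamma(\xbf;\theta)$ uniformly in $\ybf$.

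Finally I would plug the pointwise bound into the expectation: $\| \nabla_\theta V(\xbf;\theta) \| \leq \sqrt{2}\, L_{out}\, \gamma(\xbf;\theta) \cdot \EE_{\ybf}\bigl[ \vert r(\xbf,\ybf) - V(\xbf;\theta) \vert \bigr]$. Jensen's inequality bounds the first moment by $\std_{\ybf \sim \prob_\theta(\cdot \vert \xbf)}[r(\xbf,\ybf)]$, and since $r \in [-1,1]$ forces $\std \leq 1$ (hence $\std \leq \std^{2/3}$), the claim follows with constant $\sqrt{2} \leq 6$. The same $\std^{2/3}$ rate can alternatively be obtained without invoking $\std \leq 1$ by splitting the expectation at $\vert r - V(\xbf;\theta) \vert > \std^{2/3}$, bounding the bulk by $\std^{2/3}$ and controlling the tail via Chebyshev's inequality. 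The only substantive step is the pointwise bound on the score: it crucially uses the softmax parameterization, since for general parameterizations of categorical distributions $\|\nabla \log \prob_\theta\|$ can blow up as probabilities approach zero and no uniform $O(L_{out}\, \gamma)$ bound would be available. The remaining ingredients --- the baseline-subtracted policy gradient identity and Jensen's inequality --- are routine.
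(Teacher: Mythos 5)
Your proof is correct, but it takes a genuinely different route from the paper's. The paper does not use the baseline-subtracted policy-gradient identity; instead it starts from $\nabla_\theta V(\xbf;\theta) = \sum_{\ybf}\prob_\theta(\ybf|\xbf)\,r(\xbf,\ybf)\sum_l J^\top_{f(\xbf,\ybf_{\leq l-1};\theta)}(\ebf_{y_l}-\prob_\theta(\cdot|\xbf,\ybf_{\leq l-1}))$, introduces a truncated reward $\tilde{r}$ that replaces $r$ by $V(\xbf;\theta)$ on the Chebyshev tail event $\{|r - V|>c\}$, and splits the gradient into a ``bulk'' term and a ``tail'' term. The tail term is bounded via Chebyshev and the crude pointwise bound on the score; the bulk term requires a more delicate per-position argument, re-organizing the sum by prefix $\ybf_{\leq l-1}$ and showing that each aggregated vector $\abf^{(\ybf_{\leq l-1})}$ is a difference of conditional expectations of $\tilde{r}$, hence has $\ell_1$-norm at most $2c$. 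Choosing $c = \std^{2/3}$ balances the two contributions. Your argument sidesteps all of this: subtract $V$ as a baseline at the level of the full output, bound the score $\|\nabla_\theta\log\prob_\theta(\ybf|\xbf)\|$ uniformly by $\sqrt{2}\,L_{out}\,\gamma(\xbf;\theta)$, and apply Jensen to get $\|\nabla_\theta V\|\leq \sqrt{2}\,L_{out}\,\gamma\cdot\std$. Together with the elementary observation that $r\in[-1,1]$ forces $\std\leq 1$ (so $\std\leq\std^{2/3}$ and $\sqrt{2}\leq 6$), this recovers the stated bound, and in fact proves the strictly stronger statement $\|\nabla_\theta V(\xbf;\theta)\|\leq\sqrt{2}\,L_{out}\,\gamma(\xbf;\theta)\cdot\std_{\ybf\sim\prob_\theta(\cdot|\xbf)}[r(\xbf,\ybf)]$, improving both the constant and the exponent. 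Your alternative ``split at $|r-V|>\std^{2/3}$ and apply Chebyshev'' route is closer in spirit to the paper, but is still cleaner because it is applied after the baseline subtraction rather than requiring the truncation construction. In short: your approach is simpler, correct, and yields a sharper inequality; its only ingredient beyond the paper's toolkit is the standard score-function identity $\EE_{\ybf}[\nabla_\theta\log\prob_\theta(\ybf|\xbf)]=0$.
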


The gradient $\nabla_\theta \rftobj (\xbf; \theta)$ vanishes, when the reward standard deviation of $\xbf$ is small, due to a combination of the reward maximization objective and the use of softmax for producing distributions over output tokens (see proof of~\cref{thm:rft_vanish_grad} in~\cref{app:proofs:rft_vanish_grad}).
Notably, this occurs regardless of what the expected reward $\rftobj (\xbf; \theta)$ is and, in particular, whether it is nearly optimal or not.

Vanishing gradients in RFT can be problematic when attempting to reverse an existing behavior of the model or when the text distribution of the downstream task differs from that of the pretraining corpora.
In both of these common use cases, over some inputs, the pretrained model is likely to give non-negligible probability only to outputs of roughly the same suboptimal reward.
The reward standard deviation for such inputs is small, implying that their expected gradients are near-zero.
Thus, RFT is anticipated to be ineffective for achieving higher reward over them.
We empirically affirm this expectation in~\cref{sec:evidence} through experiments on an RFT benchmark and controlled environments.

 We make two important distinctions.
 First,~\cref{thm:rft_vanish_grad} considers the expected gradient for an individual input $\xbf$, as opposed to the gradient across a batch or the population of inputs.
 Thus, although adaptive optimizers such as Adam~\citep{kingma2015adam}~---~the standard optimizer choice for RFT~---~normalize the batch gradient, the contribution of inputs whose reward standard deviation is small is still negligible compared to the contribution of other inputs.
 Indeed, the experiments of~\cref{sec:evidence} verify that adaptive (batch) gradient normalization does not solve the harmful effects of vanishing gradients in RFT, since they are carried out using the Adam optimizer.
  Second, in contrast to RFT, under SFT the expected gradient $\nabla_\theta \sftobj (\xbf; \theta)$ need not vanish unless $\sftobj( \xbf; \theta)$ is nearly minimal (and usually does not, \eg, as demonstrated by the experiments of~\cref{sec:evidence:controlled}).

\textbf{Vanishing gradients in PPO due to small reward standard deviation:}
A common practice for maximizing the reward in RFT is to use PPO, which replaces the expected reward with a surrogate objective (\cref{eq:ppo_clip_sample_objective}).
\cref{prop:ppo_vanish_grad} establishes that, if the reward standard deviation for an input is small, then its expected gradient under PPO vanishes, just as its expected reward gradient vanishes.
Specifically, we show that the distance between $\nabla_\theta \rftobj (\xbf; \theta)$ and $\nabla_\theta \ppoclipobj (\xbf; \theta)$ depends on the total variation distance between $\prob_\theta \brk{ \cdot | \xbf }$ and $\prob_{\ppotheta} \brk{ \cdot | \xbf }$, where $\ppotheta$ denotes the PPO reference parameters.
At the beginning of each PPO iteration $\ppotheta$ is updated to the current parameter values.
Hence, the distance between $\prob_\theta \brk{ \cdot | \xbf }$ and $\prob_{\ppotheta} \brk{ \cdot | \xbf }$ typically remains small throughout optimization, and is exactly zero at the beginning of each PPO iteration.
The proof of~\cref{prop:ppo_vanish_grad} is deferred to~\cref{app:proofs:ppo_vanish_grad}.
We provide an analogous result for the KL regularized PPO variant in~\cref{app:vanish_grad_ppo_kl}.

\begin{proposition}
\label{prop:ppo_vanish_grad}
Under the notation of~\cref{thm:rft_vanish_grad}, let $\ppotheta \in \R^P$ be the reference parameters at some iteration of PPO (\cref{eq:ppo_clip_sample_objective}).
For an input $\xbf \in \X^{L_{in}}$, we denote the total variation distance between $\prob_\theta \brk{ \cdot | \xbf }$ and $\prob_{\ppotheta} \brk{ \cdot | \xbf }$ by $\TV \brk{ \prob_\theta \brk{ \cdot | \xbf } || \prob_{\ppotheta} \brk{ \cdot | \xbf } } := \frac{1}{2} \sum\nolimits_{\ybf \in \X^{L_{out}} } \abs{ \prob_\theta \brk{ \ybf | \xbf } - \prob_{\ppotheta} \brk{ \ybf | \xbf } }$.
Then, it holds that:
\[
\norm*{ \nabla_\theta \ppoclipobj (\xbf; \theta) - \nabla_\theta \rftobj (\xbf; \theta) } \leq \frac{ 12 L_{out} \gamma (\xbf; \theta) }{\delta} \cdot \TV \brk*{ \prob_{\theta} \brk{ \cdot | \xbf} || \prob_{\ppotheta} \brk{ \cdot | \xbf} }
\text{\,,}
\]
where $\delta \in (0, 1)$ is the PPO clipping hyperparameter.
Consequently, by~\cref{thm:rft_vanish_grad}:
\[
\norm*{ \nabla_\theta \ppoclipobj (\xbf; \theta) } \leq 
6 L_{out} \gamma (\xbf; \theta) \brk2{ \std\nolimits_{\ybf \sim \prob_{\theta} \brk{ \cdot | \xbf} } \brk[s]*{ r (\xbf, \ybf) }^{2/3} + \frac{ 2 }{ \delta} \cdot \TV \brk*{ \prob_{\theta} \brk{ \cdot | \xbf} || \prob_{\ppotheta} \brk{ \cdot | \xbf} } }
\text{\,,}
\]
where at the beginning of each PPO iteration $\theta = \ppotheta$, meaning $\TV \brk*{ \prob_{\theta} \brk{ \cdot | \xbf} || \prob_{\ppotheta} \brk{ \cdot | \xbf} } = 0$.
\end{proposition}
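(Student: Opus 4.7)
The plan is to express $\nabla_\theta \ppoclipobj(\xbf;\theta)$ and $\nabla_\theta \rftobj(\xbf;\theta)$ in a common form as weighted sums over outputs $\ybf$, identify the set $S$ of outputs on which PPO's clipping zeros the contribution to $\nabla_\theta \ppoclipobj$, and then show that $S$ carries small $\prob_\theta$-mass whenever the two distributions are close in total variation. The second displayed inequality will then follow directly from a triangle inequality together with~\cref{thm:rft_vanish_grad}.

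First I would exploit the baseline-invariance identity: because $V(\xbf;\ppotheta)$ does not depend on $\theta$ and the probabilities sum to one, $\nabla_\theta \rftobj(\xbf;\theta) = \sum_\ybf A_{\ppotheta}(\xbf,\ybf)\,\nabla_\theta \prob_\theta(\ybf|\xbf)$. For the PPO objective, set $\rho(\ybf) := \prob_\theta(\ybf|\xbf)/\prob_{\ppotheta}(\ybf|\xbf)$ and note the identity $\prob_{\ppotheta}(\ybf|\xbf)\,\nabla_\theta \rho(\ybf) = \nabla_\theta \prob_\theta(\ybf|\xbf)$. A case analysis on the piecewise-linear integrand in~\cref{eq:ppo_clip_sample_objective}, split according to the sign of $A_{\ppotheta}(\xbf,\ybf)$ and whether $\rho(\ybf)$ lies in $[1-\delta,1+\delta]$, yields $\nabla_\theta \ppoclipobj(\xbf;\theta) = \sum_{\ybf\notin S} A_{\ppotheta}(\xbf,\ybf)\,\nabla_\theta \prob_\theta(\ybf|\xbf)$, where $S$ denotes the outputs on which the subgradient vanishes (namely $A_{\ppotheta} > 0$ with $\rho > 1+\delta$, or $A_{\ppotheta} < 0$ with $\rho < 1-\delta$); in particular every $\ybf \in S$ satisfies $|\rho(\ybf) - 1| \geq \delta$. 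Subtracting, using the triangle inequality, and using $|A_{\ppotheta}| \leq 2$ then gives $\|\nabla_\theta\ppoclipobj(\xbf;\theta) - \nabla_\theta\rftobj(\xbf;\theta)\| \leq 2 \sum_{\ybf\in S}\|\nabla_\theta \prob_\theta(\ybf|\xbf)\|$. A softmax derivative bound, each $\nabla_\theta \ln \smax(f(\xbf,\ybf_{\leq l-1};\theta))_{y_l}$ having norm at most $\sqrt{2}\,\gamma(\xbf;\theta)$ via the Jacobian chain rule and summing over $l \leq L_{out}$, gives $\|\nabla_\theta\prob_\theta(\ybf|\xbf)\| \leq \sqrt{2}\,L_{out}\,\gamma(\xbf;\theta)\,\prob_\theta(\ybf|\xbf)$, and hence the left-hand side is bounded by $2\sqrt{2}\,L_{out}\,\gamma(\xbf;\theta)\cdot\prob_\theta(S|\xbf)$.

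The main technical step, and the part I expect to require the most care, is converting $\prob_\theta(S|\xbf)$ into the advertised TV bound. I would partition $S = S^+ \cup S^-$ according to whether $\rho(\ybf) > 1+\delta$ or $\rho(\ybf) < 1-\delta$, and establish pointwise that $\prob_\theta(\ybf|\xbf) \leq (2/\delta)\,|\prob_\theta(\ybf|\xbf) - \prob_{\ppotheta}(\ybf|\xbf)|$ for every $\ybf \in S$. On $S^-$, the inequality $\prob_{\ppotheta} - \prob_\theta > \delta\,\prob_{\ppotheta} \geq \delta\,\prob_\theta$ gives the bound with constant $1/\delta$; on $S^+$, $\prob_\theta - \prob_{\ppotheta} > \delta\,\prob_{\ppotheta}$ rearranges to $\delta\,\prob_\theta \leq (1+\delta)(\prob_\theta - \prob_{\ppotheta}) \leq 2(\prob_\theta - \prob_{\ppotheta})$, giving the stated $2/\delta$ constant since $\delta < 1$. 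Summing over $\ybf \in S$ and recalling that $\TV(\prob_\theta(\cdot|\xbf)\,\|\,\prob_{\ppotheta}(\cdot|\xbf)) = \frac{1}{2}\sum_\ybf |\prob_\theta - \prob_{\ppotheta}|$ produces $\prob_\theta(S|\xbf) \leq (4/\delta)\,\TV$, so $\|\nabla_\theta\ppoclipobj - \nabla_\theta\rftobj\| \leq 8\sqrt{2}\,L_{out}\,\gamma(\xbf;\theta)\,\TV/\delta$. Since $8\sqrt{2} < 12$, this proves the first displayed inequality.

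Finally, the second inequality is immediate: the triangle inequality gives $\|\nabla_\theta\ppoclipobj(\xbf;\theta)\| \leq \|\nabla_\theta\rftobj(\xbf;\theta)\| + \|\nabla_\theta\ppoclipobj - \nabla_\theta\rftobj\|$, and substituting the bound from~\cref{thm:rft_vanish_grad} for the first term and the just-derived bound for the second, then factoring out $6 L_{out}\,\gamma(\xbf;\theta)$, reproduces the stated expression. A minor subtlety worth noting is that the PPO integrand is non-differentiable on a measure-zero set of $\rho$-values; this is handled routinely by selecting a measurable subgradient that agrees with the piecewise-linear expression above almost everywhere, which does not affect expectations.
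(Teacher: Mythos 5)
Your proof is correct and follows essentially the same route as the paper: you decompose the clipped PPO objective into the sets where clipping is (in)active, use baseline invariance to identify $\sum_\ybf A_{\ppotheta}\nabla_\theta\prob_\theta(\ybf|\xbf)$ with $\nabla_\theta \rftobj$, bound $\|\nabla_\theta\prob_\theta(\ybf|\xbf)\|$ via the log-derivative trick and $\gamma(\xbf;\theta)$, and then convert the clipped probability mass into a $O(\TV/\delta)$ bound. The only deviations are cosmetic constant-saving choices (using $\|\ebf_{y_l}-\prob_\theta(\cdot)\|_2\le\sqrt2$ rather than $\le 2$, and a pointwise rather than set-level conversion of $\prob_\theta(S|\xbf)$ into the TV bound), which the paper does not need since it aims for the looser $12/\delta$ constant.
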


\textbf{Known causes of vanishing gradients:}
Focusing mostly on tabular Markov Decision Processes, prior works identified other conditions that lead to vanishing gradients in policy gradient algorithms.
\citet{agarwal2021theory} showed that the gradient norm can decay exponentially with the horizon when the rewards are sparse.
In RFT, however, the horizon is equal to one and the rewards are not sparse.
More relevant is the fact that the expected gradient for a softmax parameterized model vanishes if it outputs near-deterministic distributions~\citep{ahmed2019understanding,hennes2019neural,schaul2019ray,mei2020escaping,mei2020global,agarwal2021theory,garg2022alternate}~---~a special case of the reward standard deviation being small, \ie~of \cref{thm:rft_vanish_grad}.
Though, this is unlikely to be an issue for RFT of language models since, barring extreme cases, the output distributions produced by language models are not near-deterministic.

        \section{Prevalence and Detrimental Effects of Vanishing Gradients \\ in Reinforcement Finetuning}
\label{sec:evidence}

In this section, we first show that the vanishing gradients phenomenon from~\cref{sec:vanish_grad} is prevalent in the GRUE benchmark, using the GPT-2~\citep{radford2019language} and T5-base~\citep{raffel2020exploring} pretrained language models considered in~\citet{ramamurthy2023reinforcement} (\cref{sec:evidence:grue}).
Specifically, among its seven datasets, three contain a substantial amount of inputs with small reward standard deviation under the corresponding pretrained model, \ie~their expected gradient vanishes, although their expected reward is suboptimal.
As anticipated, RFT has limited impact on the rewards of these inputs compared to its impact on the rewards of other inputs.
Moreover, in datasets where inputs with small reward standard deviation are more common, the reward that RFT achieves relative to SFT is worse, indicating that vanishing gradients in RFT hinder the ability to maximize the reward function.
Controlled experiments that remove possible confounding factors (\cref{sec:evidence:controlled}), such as insufficient exploration, and a theoretical analysis for a simplified setting with linear models (\cref{sec:evidence:theoretical_illus}), further establish that vanishing gradients in RFT can lead to extremely slow reward maximization.

For brevity, we often refer to the reward standard deviation of an input under a pretrained model as the input's “pretrain reward standard deviation.''

\subsection{Vanishing Gradients in the GRUE Benchmark}
\label{sec:evidence:grue}

We followed the experimental setup of~\citet{ramamurthy2023reinforcement}, up to slight adjustments (specified in \cref{app:experiments:details:grue}) for a fairer comparison between RFT and SFT.
In particular, we adopted their default hyperparameters and considered the following text generation datasets from GRUE: NarrativeQA~\citep{kovcisky2018narrativeqa}, ToTTo~\citep{parikh2020totto}, CommonGen~\citep{lin2020commongen}, IWSLT 2017~\citep{cettolo2017overview}, CNN/Daily Mail~\citep{hermann2015teaching}, DailyDialog~\citep{li2017dailydialog}, and IMDB~\citep{maas2011learning}~---~see Table~1 in~\citet{ramamurthy2023reinforcement} for a complete description of the datasets.
We ran PPO using the Adam optimizer for RFT, with the reward function in each dataset being either a task specific metric or a learned reward model (as specified in~\cref{app:experiments:details:grue}).
In contrast, SFT was performed based on ground truth completions (\ie~labels), also using Adam.
Similarly to~\citet{ramamurthy2023reinforcement}, we used GPT-2 as the pretrained language model for DailyDialog and IMDB, and T5-base for the remaining datasets.

Our findings are detailed below.
For conciseness, we defer some experiments, \eg,~using NLPO~\citep{ramamurthy2023reinforcement} instead of PPO for RFT, and implementation details to~\cref{app:experiments}.

\begin{figure*}[t]
	\vspace{0mm}
	\begin{center}
            \includegraphics[width=1\textwidth]{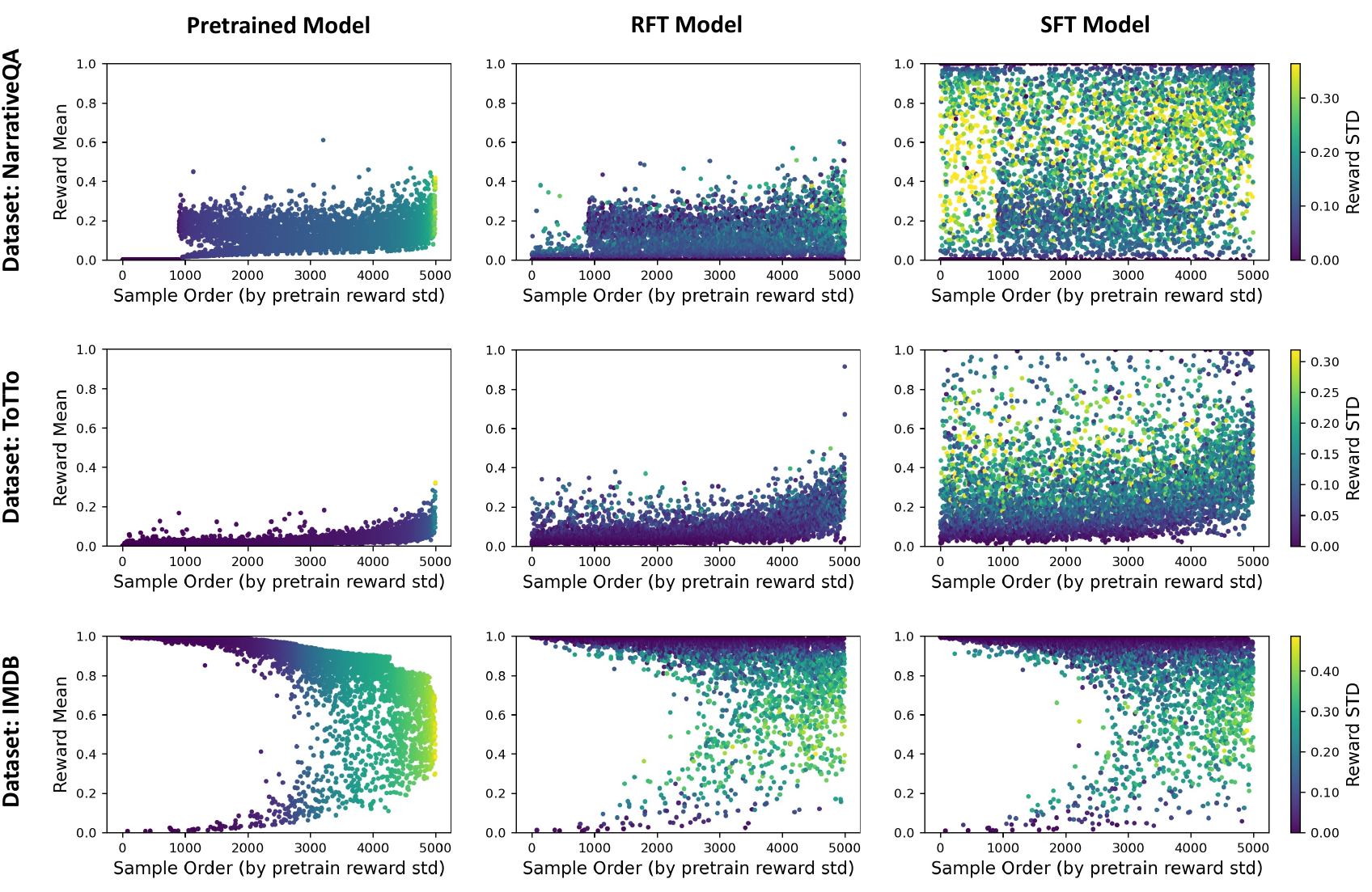}
	\end{center}
	\vspace{-2.5mm}
	\caption{
            \textbf{Inputs with small reward standard deviation under the pretrained model, \ie~with vanishing expected gradient, are prevalent in the GRUE benchmark.}
            For randomly chosen subsets of $5000$ train samples from the NarrativeQA, ToTTo, and IMDB datasets, presented are the reward means and standard deviations (estimated based on ten generations per input) under the pretrained, RFT, and SFT models.
            The samples are ordered according to their pretrain reward standard deviation, with each marker representing the reward mean and standard deviation (depicted by color) of an individual sample.
            Notice that a significant number of samples from NarrativeQA and ToTTo have small pretrain reward standard deviation, while their reward mean is low.
            Accordingly, RFT struggles to improve the reward of these inputs, especially compared to SFT.
            In contrast, IMDB does not suffer from this issue, and the effect of RFT and SFT over it is more similar.
            See~\cref{fig:grue_reward_mean_std_scatter_fig_app} in~\cref{app:experiments:further} for identical experiments with the remaining GRUE datasets.
        }
    \label{fig:grue_reward_mean_std_scatter_fig}
\end{figure*}

\textbf{Inputs with small reward standard deviation are prevalent:} 
As evident from~\cref{fig:grue_reward_mean_std_scatter_fig}, NarrativeQA and ToTTo contain a considerable amount of input samples whose pretrain reward standard deviation is near-zero, \ie~their expected gradient vanishes, while their reward mean is low.
On the other hand, in IMDB the vast majority of samples have a relatively large pretrain reward standard deviation or a high reward mean.
\cref{fig:grue_reward_mean_std_scatter_fig_app} in~\cref{app:experiments:further} further shows that the remaining datasets lie on a spectrum between these two extremes, with CommonGen also exhibiting a noticeable amount of input samples with small pretrain reward standard deviation.

As was expected (in~\cref{sec:vanish_grad}), inputs with small pretrain reward standard deviation are more common in datasets whose text distribution differs markedly from the pretraining corpora.
Specifically, NarrativeQA consists of questions concatenated to text segments and the task is to output free-form answers based on the texts, while in ToTTo each input is a table and the task is to produce a one-sentence description of it.
By manual inspection, unsurprisingly, the pretrained model often generates continuations for the input text instead of obliging to the task at hand.
When the desired output does not overlap with a “natural'' (according to the pretraining corpora) continuation of an input, the model assigns non-negligible probability only to outputs of roughly the same low reward, leading to a small reward standard deviation and vanishing expected gradient.
We provide representative examples of inputs with small and inputs with large pretrain reward standard deviation, including outputs produced by the pretrained, RFT, and SFT models, in~\cref{app:text_samples}.

\textbf{RFT affects inputs with small reward standard deviation less than other inputs:}
\cref{fig:grue_reward_mean_std_scatter_fig} qualitatively demonstrates that the rewards of train samples with small reward standard deviation change less due to RFT than the rewards of other train samples, whereas under SFT the change is more uniform across the samples.
To quantify this observation, for each dataset,~\cref{table:reward_std_reward_change_pearson} reports the Pearson correlation between the pretrain reward standard deviation and the absolute reward mean change due to finetuning, across the train samples.
Indeed, the correlations for RFT are positive and significantly higher than those for SFT.

\textbf{RFT performance is worse when inputs with small reward standard deviation are prevalent:}
Lastly, we examine whether there is a connection between the prevalence of inputs with small pretrain reward standard deviation, \ie~with vanishing expected gradient, and the reward that RFT achieves.
A simple way to measure the prevalence of such inputs in a dataset is to examine some percentile of the pretrain reward standard deviation, over the train samples.
Remarkably,~\cref{fig:rf_sf_rew_diff_vs_pre_reward_sd} shows that the lower the $10$'th percentile of the pretrain reward standard deviation is, the worse the reward that RFT achieves relative to SFT.
This observation is not sensitive to the choice of percentile, as long as it is not too large~---~\cref{fig:rf_sf_rew_diff_vs_pre_reward_sd_percentiles} in~\cref{app:experiments:further} demonstrates a similar trend for varying percentile choices.\footnote{
    For completeness, the train and test rewards that the pretrained, RFT, and SFT models obtain over all datasets are provided in~\cref{table:grue_rft_sft_reward} of~\cref{app:experiments:further}.
}

\begin{table}[t]
\vspace{0mm}
\begin{minipage}[b]{0.45\linewidth}
\centering
\fontsize{8}{12}\selectfont
\begin{tabular}{lccc}
            \toprule
            & NarrativeQA & ToTTo  & IMDB \\
            \midrule
            RFT & $0.48$ & $0.46$ &  $0.72$ \\
            SFT & $0.05$ & $0.16$ & $0.72$ \\
            \bottomrule
        \end{tabular}
        \caption{
            \textbf{RFT affects inputs with small reward standard deviation under the pretrained model less than it affects other inputs.}
            Per dataset, reported is the Pearson correlation between the pretrain reward standard deviation and the absolute reward mean change due to finetuning, across the train samples.
            The correlations are computed based on the subsets of train samples from~\cref{fig:grue_reward_mean_std_scatter_fig}.
            As anticipated, the correlation is considerably higher for RFT compared to SFT on NarrativeQA and ToTTo, which contain a substantial amount of samples with small pretrain reward standard deviation.
            In contrast, on IMDB the correlation is roughly the same since most samples with a small pretrain reward standard deviation already have a high reward mean, thus, their reward mean does not change much under both RFT and SFT.
            See~\cref{table:reward_std_reward_change_pearson_app} in~\cref{app:experiments:further} for the correlations on the remaining GRUE datasets.
        }
        \label{table:reward_std_reward_change_pearson}
\end{minipage}\hfill
\begin{minipage}[b]{0.51\linewidth}
\centering
\hspace*{-2mm}
\includegraphics[width=0.9\textwidth]{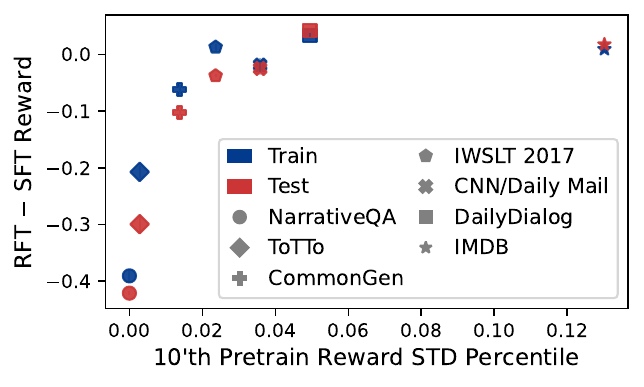}
\vspace{-2mm}
\captionof{figure}{
    \textbf{RFT performance (relative to SFT) is worse when inputs with small reward standard deviation are prevalent.}
    Per dataset, the difference between the mean reward achieved by RFT and SFT is plotted against the $10$'th percentile of the reward standard deviation under the pretrained model.
    Means are taken over five runs and error bars (indiscernible) mark standard deviations.
    We exclude inputs with near-optimal reward mean under the pretrained model (higher than $0.9$) when computing the percentiles, since small reward standard deviation is only problematic if the reward mean is not high to begin with.
    Observe that, the lower the pretrain reward standard deviation percentile is, \ie~the more train samples have small pretrain reward standard deviation, the worse the reward that RFT achieves relative to SFT.
}
\label{fig:rf_sf_rew_diff_vs_pre_reward_sd}
\end{minipage}
\end{table}

Our investigation reveals that the extent of vanishing expected gradients in a dataset, as measured by the reward standard deviation, is indicative of how well RFT performs.
Yet one cannot infer a causal relation due to possible confounding factors.
Mainly, the large output space in language generation introduces the well-known challenge of exploration~\citep{ranzato2016sequence,nguyen2017reinforcement,choshen202020weaknesses}, which in the context of RFT translates to the challenge of obtaining accurate estimates for expected gradients~\citep{greensmith2004variance,schulman2015high,tucker2018mirage}.
A natural question is whether the difficulty of RFT to maximize the rewards of inputs with small reward standard deviation indeed stems from their expected gradients vanishing, or rather solely from insufficient exploration.
We address this question via controlled experiments (\cref{sec:evidence:controlled}) and a theoretical analysis (\cref{sec:evidence:theoretical_illus}).
They show how vanishing gradients in RFT, due to small reward standard deviation, can lead to extremely slow reward maximization even under perfect exploration, \ie~even when we have access to expected gradients.

\subsection{Controlled Experiments}
\label{sec:evidence:controlled}

To account for the possible confounding effect of insufficient exploration, \ie~of inaccurate gradient estimates, we consider environments where exploration is not an obstacle.
Namely, environments with a relatively small number of outputs in which one can compute the expected gradient for an input instead of estimating it.
We observe that RFT is still unable to achieve high reward over inputs whose pretrain reward standard deviation is small, in stark contrast to SFT which easily does so.
This demonstrates that, in the presence of inputs with vanishing expected gradient, RFT can suffer from extremely slow optimization despite perfect exploration.
For conciseness, we defer some experiments and implementation details to~\cref{app:experiments}.

We seek to create finetuning environments in which: \emph{(i)} it is possible to compute the expected reward gradient $\nabla_\theta \rftobj (\xbf; \theta)$ for an input $\xbf$; and \emph{(ii)} there are inputs whose pretrain reward standard deviation is small.
To that end, we take a classification dataset with a modest number of labels, \eg~STS-B~\citep{cer2017semeval}.\footnote{
    We quantize the continuous labels of STS-B, which reside in $[0, 5]$, to six values via rounding.
}
This ensures \emph{(i)}: we can compute the expected gradient given an input by simply querying the reward for all labels.

For pretraining, we assign to each train sample multiple ground truth labels by randomly choosing additional labels, and minimize the cross entropy loss starting from a randomly initialized model, \eg~a BERT model~\citep{devlin2018bert}.
At the end of this process, given a train sample, the model roughly outputs a uniform distribution over the sample's pretraining labels.
During finetuning, every train sample is assigned a (possibly different) single ground truth label, whose reward for RFT is set to~$1$, whereas the reward of an incorrect label is set to~$-1$ .
For a subset of the samples, we set their finetuning label such that it was not included in their pretraining labels.
This guarantees \emph{(ii)}: the pretrain reward standard deviation of these train samples is small since the model (roughly) produces a uniform distribution over labels with the same suboptimal reward.
On the other hand, the reward standard deviation is large for the remaining train samples, whose finetuning label existed in their pretraining labels.

We followed the blueprint above for three different model and dataset configurations: a multilayer perceptron (MLP) on MNIST~\citep{lecun1998mnist}, ResNet18~\citep{he2016deep} on CIFAR10~\citep{krizhevsky2009learning}, and BERT-mini~\citep{turc2019well} on STS-B~\citep{cer2017semeval}.
RFT was performed by maximizing the expected reward (\cref{eq:rft_objective}) via the Adam optimizer, and SFT was performed by minimizing the cross entropy loss, also via the Adam optimizer.
\cref{fig:sft_vs_rft_controlled_exps} tracks the train reward, reward standard deviation, and gradient norm for RFT and SFT throughout optimization, separately for train samples with small and train samples with large pretrain reward standard deviation.
The results confirm that, despite having access to expected gradients and the small output space, RFT is unable to achieve high reward in a reasonable time over inputs with small pretrain reward standard deviation.
On the other hand, it does not encounter difficulty for inputs with large pretrain reward standard deviation.
The behavior of SFT strikingly differs~---~it quickly reaches the maximal reward for both types of inputs.

\textbf{Insignificance of the pretrain expected reward:}
The pretrain expected reward (\ie~expected reward under the pretrained model) of inputs with small pretrain reward standard deviation is often close to minimal, as shown on the GRUE benchmark in~\cref{sec:evidence:grue}.
This is also the case in the controlled experiments of~\cref{fig:sft_vs_rft_controlled_exps}.
However, we highlight that the vanishing gradient phenomenon established in~\cref{sec:vanish_grad} is independent of the expected reward~---~the expected gradient for an input vanishes if its reward standard deviation is small, regardless of what the expected reward is.
We empirically demonstrate this independence through experiments analogous to those of~\cref{fig:sft_vs_rft_controlled_exps}, in which, although the initial expected reward of inputs with small pretrain reward standard deviation is relatively high, RFT is still unable to maximize their reward~---~see~\cref{fig:sft_vs_rft_controlled_exps_high_init_reward} in~\cref{app:experiments:further}.

\begin{figure*}[t]
	\vspace{-1.5mm}
	\begin{center}
            \hspace*{-2mm}
        \includegraphics[width=1\textwidth]{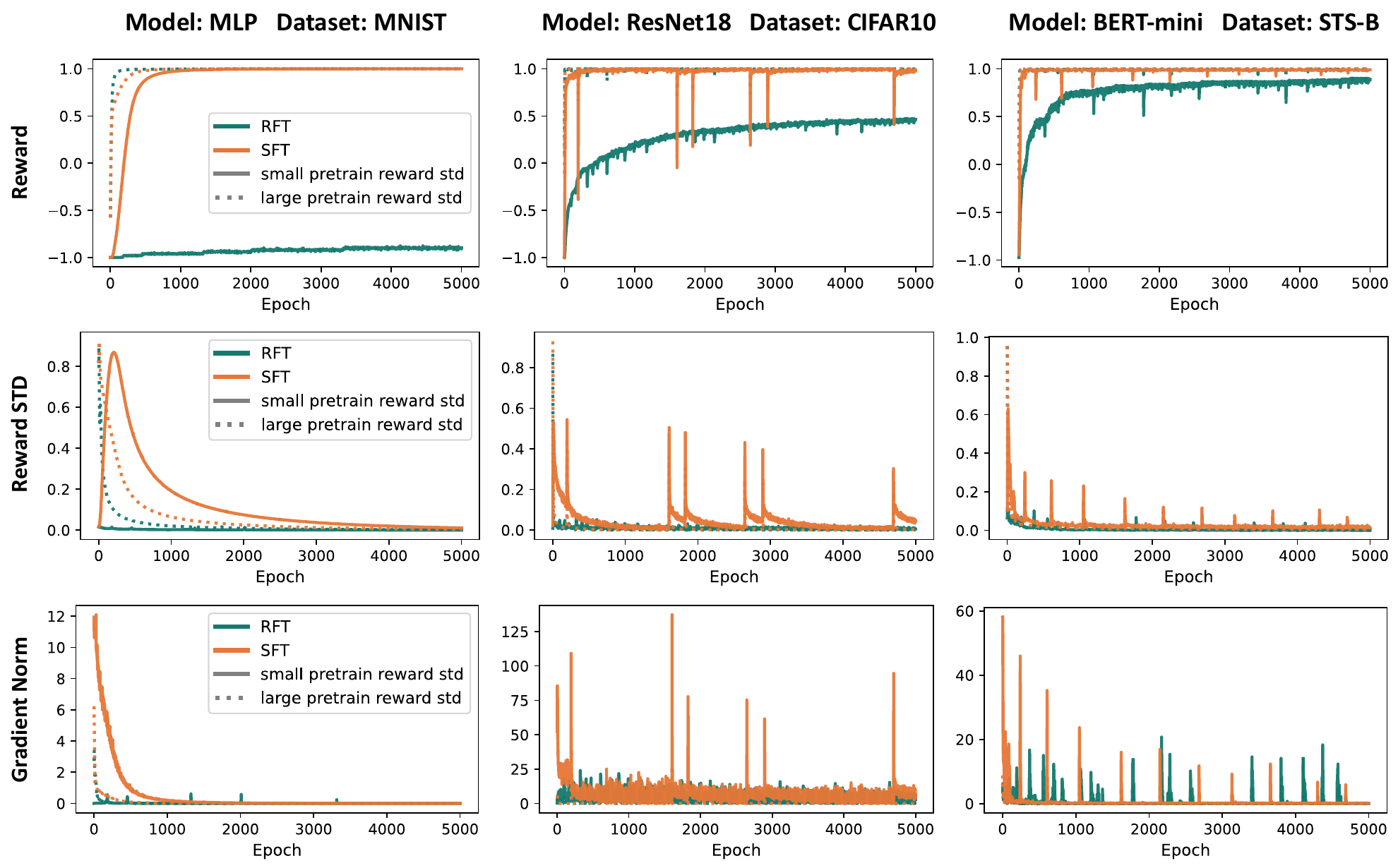}
	\end{center}
	\vspace{-3.5mm}
	\caption{
            RFT struggles to maximize the reward over inputs with small reward standard deviation under the pretrained model, \ie~inputs with vanishing expected gradient, even with perfect exploration.
            On the contrary, SFT easily leads to maximal reward.
            For the controlled environments described in~\cref{sec:evidence:controlled}, in which RFT has access to expected gradients, displayed are the train reward (top), reward standard deviation (middle), and gradient norm (bottom) for RFT and SFT throughout optimization, separately for train samples with small and train samples with large pretrain reward standard deviation.
            See~\cref{fig:sft_vs_rft_controlled_exps_sgd} in~\cref{app:experiments:further} for an identical experiment with stochastic gradient descent instead of Adam.
        }
	\label{fig:sft_vs_rft_controlled_exps}
\end{figure*}

\subsection{Theoretical Analysis: Slow Optimization Due to Vanishing Gradients}
\label{sec:evidence:theoretical_illus}

Through an analysis for a simplified setting, we theoretically demonstrate that vanishing gradients in RFT, due to small reward reward standard deviation, can lead to impractically slow optimization.
Specifically, we prove an exponential optimization time separation between RFT and SFT, with respect to the pretrain reward standard deviation of an input $\xbf$: the time it takes RFT to assign maximal probability for the correct label of $\xbf$ is \smash{$\Omega \brk{ 1 / \std\nolimits_{\ybf \sim \prob_{\theta^{(0)}} \brk{ \cdot | \xbf} } \brk[s]{ r (\xbf, \ybf) }^{2} }$}, where $\theta^{(0)}$ is the pretrained model parameters, while the time it takes SFT to do so is only $\OO \brk{ \ln \brk{ 1 / \std\nolimits_{\ybf \sim \prob_{\theta^{(0)}} \brk{ \cdot | \xbf} } \brk[s]{ r (\xbf, \ybf) } } }$.
Our analysis considers linear models trained over orthonormal inputs using a small learning rate.
Although the setting is rather simplistic, it nonetheless gives insight into the detrimental effects of vanishing gradients due to small reward standard deviation.
The full details of our analysis are given in~\cref{app:theoretical_illus}; we regard extending it to more complex settings as an interesting avenue for future work.

        \section{Overcoming Vanishing Gradients in Reinforcement Finetuning}
\label{sec:overcoming}

\cref{sec:vanish_grad} established that small reward standard deviation leads to vanishing gradients in RFT, which \cref{sec:evidence} demonstrated to be detrimental in a real-world benchmark and controlled environments.
In this section, we explore ways to overcome vanishing gradients in RFT.
We show that the conventional heuristics of increasing the learning rate, applying temperature to the logits, and entropy regularization are inadequate (\cref{sec:overcoming:inadequacy}).
On the other hand, the common practice of an initial SFT phase (\eg,~used in~\citet{ziegler2019fine,stiennon2020learning,ouyang2022training,dubois2023alpacafarm,touvron2023llama}) is shown to be highly effective, both in terms of the resulting reward and in reducing the number of inputs with small reward standard deviation.
This sheds light on the importance of SFT in an RFT pipeline~---~it can help alleviate vanishing gradients.
Moreover, our experiments reveal that a relatively small number of SFT optimization steps on as few as $1\%$ of the input samples can suffice, indicating that the initial SFT phase need not be expensive in terms of compute and data labeling efforts (\cref{sec:overcoming:partial_sft}).

We focus on three datasets from the GRUE benchmark, which were found in~\cref{sec:evidence:grue} to suffer the most from vanishing gradients: NarrativeQA, ToTTo, and CommonGen.
For conciseness, we defer to~\cref{app:experiments} the results for ToTTo and CommonGen, as well as some implementation details.

\subsection{Inadequacy of Conventional Heuristics}
\label{sec:overcoming:inadequacy}

\begin{table*}[t]
	\vspace{-1mm}
	\caption{
		Conventional heuristics are inadequate for overcoming vanishing gradients in RFT.
		For the NarrativeQA dataset, which was shown in~\cref{sec:evidence:grue} to suffer from vanishing gradients, reported are means and standard deviations of train and test rewards, taken over three random seeds, with best result in bold.
		While the common heuristics of increasing the learning rate, temperature, and entropy regularization coefficient are unable to improve upon the default RFT configuration, an initial SFT phase is highly effective.
		See~\cref{table:totto_commongen_rft_hyperparam} in~\cref{app:experiments:further} for analogous experiments on the ToTTo and CommonGen datasets. 
	}
	\vspace{-2mm}
	\begin{center}
		\fontsize{8}{10.5}\selectfont
		\begin{tabular}{lcc}
			\multicolumn{3}{l}{Dataset: NarrativeQA} \\[0.3em]
			\toprule
			& Train Reward & Test Reward \\
			\midrule
			RFT\textsuperscript{*} & $0.101\,\pm\,0.009$ & $0.116\,\pm\,0.000$ \\
			SFT + RFT & \best{$\mathbf{0.537\,\pm\,0.005}$} & \best{$\mathbf{0.544\,\pm\,0.003}$} \\
			\midrule
			RFT with learning rate $2 \cdot 10^{-5}$ & $0.012\,\pm\,0.010$ & $0.020\,\pm\,0.017$ \\
			RFT with learning rate $2 \cdot 10^{-4}$ & $0.053\,\pm\,0.010$ & $0.048\,\pm\,0.016$ \\
			RFT with learning rate $2 \cdot 10^{-3}$ & $0.039\,\pm\,0.018$ & $0.012\,\pm\,0.020$ \\    
			\midrule
			RFT with temperature $1.5$ & $0.077\,\pm\,0.021$ & $0.118\,\pm\,0.002$ \\
			RFT with temperature $2$ & $0.060\,\pm\,0.018$ & $0.104\,\pm\,0.009$ \\
			RFT with temperature $2.5$ & $0.044\,\pm\,0.004$ & $0.088\,\pm\,0.009$ \\
			\midrule
			RFT with entropy regularization $0.01$ & $0.080\,\pm\,0.006$ & $0.113\,\pm\,0.002$ \\
			RFT with entropy regularization $0.1$ & $0.024\,\pm\,0.005$ & $0.019\,\pm\,0.007$ \\    
			RFT with entropy regularization $1$ & $0.011\,\pm\,0.000$ & $0.013\,\pm\,0.010$ \\  
			\bottomrule\\[-2mm]
			\multicolumn{3}{l}{\textsuperscript{*}With default hyperparameters: learning rate $2 \cdot 10^{-6}$, temperature $1$, entropy regularization $0$} \\
		\end{tabular}
	\end{center}
	\label{table:narqa_rft_hyperparam}
\end{table*}

We validate the resilience of vanishing gradients in RFT to common heuristics.
Three sensible methods that may seem suitable for addressing the problem are: increasing the learning rate, applying temperature to the logits, and entropy regularization (which is known theoretically to improve convergence rates in simple settings~\citep{mei2020global}).
However, we do not expect modifying the learning rate to help since it increases the effective gradient norm over a batch of inputs.
Thus, the contribution of inputs whose reward standard deviation is small will continue to be negligible compared to the contribution of other inputs.
Furthermore, while applying temperature to the logits or entropy regularization may yield non-zero expected gradients, the gradients will likely not be in a direction that aids maximizing the reward due to the large output space of language models.

Based on the default hyperparameters from~\citet{ramamurthy2023reinforcement}, we carried out RFT runs with: \emph{(i)} several values of larger learning rates, temperature, and entropy regularization coefficient; and, for comparison, \emph{(ii)} an initial SFT phase, whereby the pretrained model is first finetuned in a supervised manner before applying RFT.
Confirming our expectation, none of the RFT configurations led to a higher train reward compared to the default one.
By contrast, in line with prior evidence~\citep{ramamurthy2023reinforcement}, an initial SFT phase greatly increases the reward that RFT attains~---~see~\cref{table:narqa_rft_hyperparam}.
Furthermore, to gauge the effect of the considered heuristics on inputs with small reward standard deviation,~\cref{fig:narqa_rft_hyperparam_reward_mean_std_scatter_fig} in~\cref{app:experiments:further} presents the reward mean and standard deviation of individual train samples after applying RFT with each of the heuristics.
In comparison to the effect of the default RFT configuration and SFT (shown in~\cref{fig:grue_reward_mean_std_scatter_fig}), it is evident that the considered heuristics do not reduce the number of samples with small reward standard deviation nor substantially improve their rewards, whereas SFT is highly effective in doing so.

\subsection{A Few Supervised Finetuning Steps on a Small Number of Inputs Suffice}
\label{sec:overcoming:partial_sft}

Although an initial SFT phase often yields improved performance, it comes with a major drawback: it requires additional compute and, more importantly, relies on labeled data, which can be especially expensive to obtain for language generation.
The results of~\cref{sec:overcoming:inadequacy} suggest that the benefits of an initial SFT phase stem, at least partially, from mitigating vanishing gradients for the subsequent RFT phase.
This gives rise to the possibility that a relatively few SFT steps on a small number of labeled inputs suffice.
Meaning, we need only perform SFT to decrease the number of input samples with near-zero reward standard deviation, after which the potency of RFT will substantially improve.

\begin{figure*}[t]
	\vspace{0mm}
	\begin{center}
        \hspace*{-2mm}
        \includegraphics[width=1\textwidth]{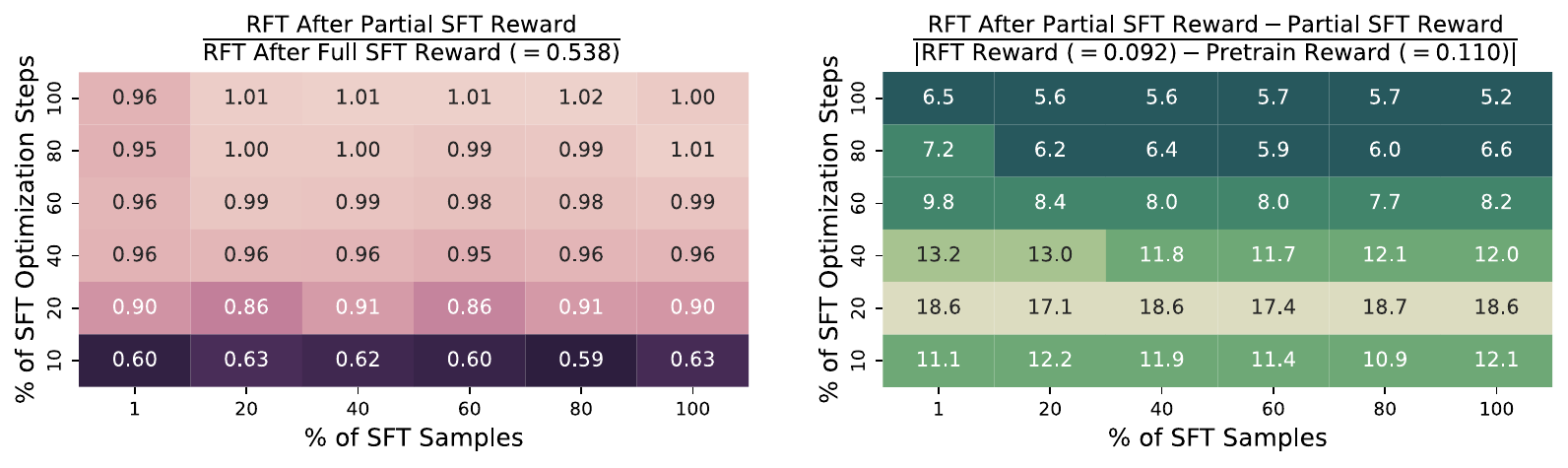}
	\end{center}
	\vspace{-3.5mm}
	\caption{
            On datasets in which RFT suffers from vanishing gradients, a few initial SFT optimization steps on a small number of labeled inputs substantially boost the efficacy of RFT.
            For the NarrativeQA dataset, reported metrics are based on the mean train reward achieved when performing RFT after an initial SFT phase with various percentages of optimization steps and labeled inputs (over three random seeds).
            A “full'' SFT phase refers to $100\%$ of the steps and labeled inputs used by~\citet{ramamurthy2023reinforcement}.
            Observe that the number of optimization steps and labeled inputs can be greatly reduced without causing a significant degradation in reward (left).
            Furthermore, RFT becomes roughly $5$ to $18$ times more potent after the initial SFT phase (right).
            We refer to~\cref{app:experiments:further} for analogous plots reporting metrics based on the mean test reward (\cref{fig:narqa_partial_sft_app}), as well as identical experiments on the ToTTo and CommonGen datasets (\cref{fig:totto_partial_sft_app,fig:commongen_partial_sft_app}, respectively).
        }
	\label{fig:narqa_partial_sft}
\end{figure*}

We investigate this prospect by decreasing the percentage of optimization steps and labeled inputs used for the initial SFT phase, with $100\%$ corresponding to the default setup from~\citet{ramamurthy2023reinforcement}.
As~\cref{fig:narqa_partial_sft} shows, the number of SFT optimization steps and labeled inputs can be greatly reduced without significantly harming the reward achieved by RFT.
In particular, using $40\%$ of the optimization steps and as few as $1\%$ of the input samples allows RFT to reach $96\%$ of the reward achieved when performing a “full'' initial SFT phase.
Moreover, the efficacy of RFT, as quantified by the difference between the reward after and before RFT, exhibits a substantial increase as a result of the initial SFT phase.
To understand the cause for this increase, we examine the impact of a partial SFT phase on individual train samples (see~\cref{fig:narqa_partial_sft_reward_mean_std_scatter} in~\cref{app:experiments:further}).
We observe that the number of samples with small reward standard deviation, \ie~with vanishing expected gradient, is considerably reduced.
This supports the perspective that the initial SFT phase is beneficial due to mitigating vanishing gradients for the following RFT phase, and showcases that the SFT phase need not be expensive in terms of compute and data labeling efforts.


        \section{Conclusion}
\label{sec:conclusion}

In this work, we identified a fundamental optimization challenge in RFT: the expected gradient for an input vanishes when its reward standard deviation is small.
We then demonstrated the prevalence and detrimental effects of this phenomenon, as well as explored possible solutions.
Overall, our results emphasize that being mindful for inputs whose expected gradient vanishes, as can be measured by the reward standard deviation, is crucial for successful execution of RFT.

\subsection{Limitations and Future Work}
\label{sec:conclusion:limitations_future_work}

To facilitate efficient experimentation, we considered language models of small to moderate size and did not incorporate reward functions learned from human feedback.
An exciting next step is to study the extent to which our empirical findings carry over to more complex finetuning pipelines, including larger models and iterative human feedback (\cf~\citet{bai2022training,openai2023gpt,touvron2023llama}).
Of particular interest is how aspects of the learned reward function, \eg, the objective and data used to train it, affect the severity of vanishing gradients in RFT.
	
Lastly,~\cref{sec:overcoming} demonstrated that three common heuristics, which may seem suitable for overcoming vanishing gradients in RFT, are inadequate, whereas performing a few initial SFT steps over a small number of labeled samples is a promising solution.
Our investigation of possible solutions is not exhaustive.
We believe further exploring ways to overcome vanishing gradients in RFT, \eg,~by modifying the objective (as recently done in~\citet{lu2022quark,rafailov2023direct,hu2023aligning,dong2023raft,gulcehre2023reinforced}), is a valuable direction for future research.

	\ifdefined\NEURIPS
		\begin{ack}
			We thank Eshbal Hezroni for aid in preparing illustrative figures.
NR is supported by the Apple Scholars in AI/ML PhD fellowship.
		\end{ack}
	\else
		\newcommand{We thank Eshbal Hezroni for aid in preparing illustrative figures.
NR is supported by the Apple Scholars in AI/ML PhD fellowship.}{}
	\fi
	\ifdefined\ARXIV
		\section*{Acknowledgements}
		
	\else
		\ifdefined\COLT
			\acks{}
		\else
			\ifdefined\CAMREADY
				\ifdefined\ICLR
					\newcommand*{\subsuback}{}
				\fi
				\ifdefined\NEURIPS
				\else
					\section*{Acknowledgements}
					
				\fi
			\fi
		\fi
	\fi

	\section*{References}
	{\small
		\ifdefined\ICML
			\bibliographystyle{icml2021}
		\else
			\bibliographystyle{plainnat}
		\fi
		\bibliography{refs}
	}

	\clearpage
	\appendix
	
	
	\ifdefined\ENABLEENDNOTES
		\theendnotes
	\fi
	


        \section{Related Work}
\label{sec:related}

\textbf{Reinforcement learning for language generation:}
Our work highlights an optimization challenge in policy gradient-based RFT, motivated by its widespread use for adapting pretrained language models~\citep{ziegler2019fine,stiennon2020learning,ouyang2022training,bai2022training,dubois2023alpacafarm,openai2023gpt,touvron2023llama}.
Reinforcement learning has been applied more broadly in task specific settings, for example: translation~\citep{ranzato2016sequence,bahdanau2017actor,nguyen2017reinforcement,kiegeland2021revisiting}, summarization~\citep{ranzato2016sequence,paulus2017deep,wu2018learning}, and dialog~\citep{li2016deep,zhou2017end,jaques2019way}.
Policy gradient algorithms are most commonly the method of choice.
Though, value-based methods have also been used (as in~\citet{jaques2019way}), which notably do not suffer from the vanishing gradients phenomenon identified in this work.

\textbf{Optimization challenges in policy gradient:}
The objective landscape of policy gradient is non-convex even in simple settings (\eg~with linear models).
Thus, deriving conditions under which efficient optimization occurs has proven difficult~\citep{mei2020escaping,mei2020global,agarwal2021theory,li2021softmax}.
In environments with a large output (\ie~action) space, for which language generation is a prominent example, another well-known challenge is that of exploration (see, \eg,~\citet{ranzato2016sequence,nguyen2017reinforcement,choshen202020weaknesses}), which in the context of RFT boils down to obtaining accurate expected gradient estimates~\citep{greensmith2004variance,schulman2015high,tucker2018mirage}.
Interestingly, our results highlight that, for inputs with small reward standard deviation, obtaining more accurate estimates is unlikely to be useful since their expected gradient vanishes.

As discussed in~\cref{sec:vanish_grad}, the expected gradient for a softmax parameterized model is known to vanish when the model outputs near-deterministic distributions~\citep{mei2020escaping,mei2020global,agarwal2021theory}~---~a special case of the reward standard deviation being small, \ie~of \cref{thm:rft_vanish_grad}.
Under this special case, it was empirically demonstrated that policy gradient algorithms struggle to maximize the reward, even with perfect exploration, \ie~even with access to expected gradients~\citep{ahmed2019understanding,hennes2019neural,schaul2019ray,mei2020escaping,mei2020global,garg2022alternate}.
However, the relevance of near-deterministic output distributions to RFT is limited since language models rarely produce such distributions.
In contrast,~\cref{sec:vanish_grad} establishes a more general condition leading to vanishing gradients, relevant to RFT, that is demonstrated in~\cref{sec:evidence} to result in optimization difficulties.

\textbf{Overcoming optimization challenges in policy gradient:}
To address the poor optimization landscape of softmax parameterized policy gradient, prior work proposed to replace softmax with an alternative transformation~\citep{mei2020escaping} and modify the gradient update step~\citep{hennes2019neural,garg2022alternate}.
Examining the utility of these methods for RFT may be an interesting topic for future~work.

        \section{Small Reward Standard Deviation Implies Vanishing Gradients in KL Regularized PPO}
\label{app:vanish_grad_ppo_kl}

\cref{prop:ppo_vanish_grad} in \cref{sec:vanish_grad} established that, if the reward standard deviation for an input is small, then its expected gradient under the clipped PPO objective (\cref{eq:ppo_clip_sample_objective}) vanishes.
An alternative variant of PPO replaces the clipping with a KL regularization term (\cf~\citet{schulman2017proximal}):
\be
\ppoklobj (\xbf ; \theta) := \EE\nolimits_{ \ybf \sim \prob_{\ppotheta} \brk{\cdot | \xbf} } \brk[s]*{ \frac{ \prob_{\theta} \brk{\ybf | \xbf} }{ \prob_{\ppotheta} \brk{\ybf | \xbf} } A_{\ppotheta} (\xbf, \ybf) } - \lambda \cdot \KL \brk*{ \prob_{\ppotheta} \brk{ \cdot | \xbf} || \prob_{\theta} \brk*{\cdot | \xbf} }
\text{\,,}
\label{eq:ppo_kl_sample_objective}
\ee
for $\lambda \geq 0$.
While the clipped objective is the current standard for RFT of language models~\citep{ouyang2022training,bai2022training,ramamurthy2023reinforcement,touvron2023llama}, for completeness, we provide in~\cref{prop:ppo_kl_vanish_grad} a result analogous to~\cref{prop:ppo_vanish_grad} for the KL regularized PPO objective.
The proof of~\cref{prop:ppo_kl_vanish_grad} can be found in~\cref{app:proofs:ppo_kl_vanish_grad}.

\begin{proposition}
\label{prop:ppo_kl_vanish_grad}
Under the notation of~\cref{thm:rft_vanish_grad}, let $\ppotheta \in \R^P$ be the reference parameters at some iteration of the KL regularized PPO variant (\cref{eq:ppo_kl_sample_objective}).
For an input $\xbf \in \X^{L_{in}}$, we denote the total variation distance between $\prob_\theta \brk{ \cdot | \xbf }$ and $\prob_{\ppotheta} \brk{ \cdot | \xbf }$ by $\TV \brk{ \prob_\theta \brk{ \cdot | \xbf } || \prob_{\ppotheta} \brk{ \cdot | \xbf } } := \frac{1}{2} \sum\nolimits_{\ybf \in \X^{L_{out}} } \abs{ \prob_\theta \brk{ \ybf | \xbf } - \prob_{\ppotheta} \brk{ \ybf | \xbf } }$.
Then, it holds that:
\[
\norm*{ \nabla_\theta \ppoklobj (\xbf; \theta) - \nabla_\theta \rftobj (\xbf; \theta) } \leq 4 L_{out} \gamma (\xbf; \theta) \lambda \cdot \TV \brk*{ \prob_{\theta} \brk{ \cdot | \xbf} || \prob_{\ppotheta} \brk{ \cdot | \xbf} }
\text{\,,}
\]
where $\lambda \geq 0$ is the KL regularization coefficient.
Consequently, by~\cref{thm:rft_vanish_grad}:
\[
\norm*{ \nabla_\theta \ppoklobj (\xbf; \theta) } \leq 
6 L_{out} \gamma (\xbf; \theta) \brk2{ \std\nolimits_{\ybf \sim \prob_{\theta} \brk{ \cdot | \xbf} } \brk[s]*{ r (\xbf, \ybf) }^{2/3} + \frac{2 \lambda}{3} \cdot \TV \brk*{ \prob_{\theta} \brk{ \cdot | \xbf} || \prob_{\ppotheta} \brk{ \cdot | \xbf} } }
\text{\,,}
\]
where at the beginning of each PPO-KL iteration $\theta = \ppotheta$, meaning $\TV \brk*{ \prob_{\theta} \brk{ \cdot | \xbf} || \prob_{\ppotheta} \brk{ \cdot | \xbf} } = 0$.
\end{proposition}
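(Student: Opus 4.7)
The plan is to compute $\nabla_\theta \ppoklobj(\xbf; \theta)$ directly from~\cref{eq:ppo_kl_sample_objective}, recognize that the importance-sampled reward term reduces exactly to $\nabla_\theta \rftobj(\xbf; \theta)$, and then bound the residual KL contribution in terms of $\TV(\prob_\theta(\cdot|\xbf) \| \prob_{\ppotheta}(\cdot|\xbf))$. The consequent inequality will follow from the triangle inequality together with~\cref{thm:rft_vanish_grad}.

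First I would expand the advantage-weighted importance ratio: since $V(\xbf; \ppotheta)$ does not depend on $\theta$,
\[
\nabla_\theta \EE_{\ybf \sim \prob_{\ppotheta}} \!\left[ \frac{ \prob_{\theta}(\ybf | \xbf) }{ \prob_{\ppotheta}(\ybf | \xbf) } A_{\ppotheta}(\xbf, \ybf) \right] = \sum_\ybf \nabla_\theta \prob_\theta(\ybf | \xbf) \, r(\xbf, \ybf) - V(\xbf; \ppotheta) \sum_\ybf \nabla_\theta \prob_\theta(\ybf | \xbf),
\]
and the second sum vanishes because $\sum_\ybf \prob_\theta(\ybf | \xbf) \equiv 1$, so the whole quantity equals $\nabla_\theta \rftobj(\xbf; \theta)$. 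Consequently $\nabla_\theta \ppoklobj(\xbf; \theta) - \nabla_\theta \rftobj(\xbf; \theta) = -\lambda \, \nabla_\theta \KL(\prob_{\ppotheta}(\cdot|\xbf) \,\|\, \prob_\theta(\cdot|\xbf))$. Differentiating the KL under the expectation (its reference measure $\prob_{\ppotheta}$ has no $\theta$-dependence) and recentering via the score-function identity $\sum_\ybf \prob_\theta(\ybf|\xbf) \nabla_\theta \ln \prob_\theta(\ybf|\xbf) = \mathbf{0}$ gives
\[
\nabla_\theta \KL(\prob_{\ppotheta}(\cdot|\xbf) \,\|\, \prob_\theta(\cdot|\xbf)) = \sum_\ybf \bigl[ \prob_\theta(\ybf|\xbf) - \prob_{\ppotheta}(\ybf|\xbf) \bigr] \, \nabla_\theta \ln \prob_\theta(\ybf|\xbf).
\]

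The main technical step is the pointwise bound $\|\nabla_\theta \ln \prob_\theta(\ybf|\xbf)\| \le 2 L_{out} \gamma(\xbf; \theta)$, which I would obtain by differentiating the autoregressive factorization token by token and using that each factor satisfies $\nabla_\theta \ln \smax(f(\xbf, \ybf_{\le l-1}; \theta))_{y_l} = J_{f(\xbf, \ybf_{\le l-1}; \theta)}^\top (\mathbf{e}_{y_l} - \smax(f(\xbf, \ybf_{\le l-1}; \theta)))$, whose Euclidean norm is at most $2 \gamma(\xbf; \theta)$ (since $\mathbf{e}_{y_l}$ and $\smax(\cdot)$ both have norm $\le 1$). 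This estimate is of precisely the form used inside the proofs of~\cref{thm:rft_vanish_grad} and~\cref{prop:ppo_vanish_grad}, so it transfers without new work. Plugging it into the rewritten KL gradient and using $\sum_\ybf |\prob_\theta(\ybf|\xbf) - \prob_{\ppotheta}(\ybf|\xbf)| = 2 \TV(\prob_\theta(\cdot|\xbf) \| \prob_{\ppotheta}(\cdot|\xbf))$ gives $\|\nabla_\theta \KL\| \le 4 L_{out} \gamma(\xbf; \theta) \TV$, and multiplying by $\lambda$ delivers the claimed distance bound.

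For the consequent inequality I would just apply the triangle inequality,
\[
\|\nabla_\theta \ppoklobj(\xbf; \theta)\| \le \|\nabla_\theta \rftobj(\xbf; \theta)\| + 4 L_{out} \gamma(\xbf; \theta) \lambda \cdot \TV(\prob_\theta(\cdot|\xbf) \| \prob_{\ppotheta}(\cdot|\xbf)),
\]
bound the first term via~\cref{thm:rft_vanish_grad}, and factor out the common $6 L_{out} \gamma(\xbf; \theta)$ to expose the $\tfrac{2\lambda}{3}$ coefficient on $\TV$. The only real obstacle is the pointwise norm bound on $\nabla_\theta \ln \prob_\theta(\ybf|\xbf)$~---~everything else is bookkeeping with the score-function identity and a total-variation inequality.
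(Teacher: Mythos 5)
Your proof is correct, and the decomposition at the start is exactly the paper's: the importance-sampled advantage term differentiates to $\nabla_\theta \rftobj(\xbf;\theta)$, leaving $\nabla_\theta \ppoklobj - \nabla_\theta \rftobj = -\lambda\,\nabla_\theta \KL$. Where you diverge from the paper is in how you bound $\norm{\nabla_\theta \KL}$. You recenter via the score-function identity, writing $\nabla_\theta \KL = \sum_\ybf \brk[s]{\prob_\theta(\ybf|\xbf) - \prob_{\ppotheta}(\ybf|\xbf)}\nabla_\theta \ln \prob_\theta(\ybf|\xbf)$, bound the full-sequence score uniformly by $\norm{\nabla_\theta \ln \prob_\theta(\ybf|\xbf)} \le 2L_{out}\gamma(\xbf;\theta)$ via the token-wise triangle inequality, and then absorb the remaining $\ell_1$ weight into $2\,\TV$. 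The paper instead keeps the expression in the form $-\sum_\ybf \prob_{\ppotheta}(\ybf|\xbf)\sum_l \nabla_\theta \ln\prob_\theta(y_l|\xbf,\ybf_{\le l-1})$, marginalizes over suffixes to rewrite it as a sum over positions $l$ and prefixes $\ybf_{\le l-1}$ of $J^\top_{f(\xbf,\ybf_{\le l-1};\theta)}\brk{\prob_{\ppotheta}(\cdot|\xbf,\ybf_{\le l-1}) - \prob_\theta(\cdot|\xbf,\ybf_{\le l-1})}$, and then uses a telescoping argument relating the prefix-weighted conditional $\ell_1$ gaps at each position to the marginal total variation distances, each of which is at most the full-sequence $\TV$. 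Both routes land on the identical $4L_{out}\gamma(\xbf;\theta)\cdot\TV$ constant, and the consequent bound follows the same way by the triangle inequality and \cref{thm:rft_vanish_grad}. Your argument is the more elementary of the two; the paper's per-position decomposition is doing extra structural work (bounding each position's contribution by conditional TV gaps) that happens to collapse to the same constant here, but would matter if one wanted position-dependent or sharper estimates.
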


        \section{Details of Theoretical Analysis: Slow Optimization Due to \\ Vanishing Gradients}
\label{app:theoretical_illus}

In this appendix, we deliver our formal analysis of slow optimization due to vanishing gradients in RFT, referred to in~\cref{sec:evidence:theoretical_illus}.

Consider a linear multiclass classification problem with $D$-dimensional inputs and $K \geq 2$ labels, where the model $f: \R^D \times \R^{K \times D} \to \R^K$ is parameterized by a matrix $\theta \in \R^{K \times D}$.
Conditioned on an input $\xbf \in \R^D$, the model produces a distribution over labels $\ybf \in \{1, \ldots, K\}$ through the softmax operator as follows:
\[
\prob_{\theta} \brk{ \ybf | \xbf} =  \smax \brk*{ f \brk{ \xbf; \theta}  }_{\ybf} = \smax \brk*{  \theta \xbf  }_{\ybf}
\text{\,.}
\]
Suppose that we are given parameters $\theta^{(0)} \in \R^{K \times D}$ obtained through some pretraining process, and a finetuning training set $(\xbf_1, \ybf_1), \ldots, (\xbf_N, \ybf_N) \in \R^D \times \{1, \ldots, K\}$ consisting of $N$ orthonormal inputs (meaning, $N \leq D$) and corresponding ground truth labels.
A natural way to define a reward function in this case is, given a train sample $\xbf_n$, set the reward of the ground truth label to $1$, \ie~$r(\xbf_n, \ybf_n) = 1$, and the reward of all other labels to $-1$, \ie~$r (\xbf_n, \ybf) = - 1$ for all $\ybf \neq \ybf_n$.

We analyze the optimization dynamics of RFT when carried out via gradient descent with the rewards specified above over the training set, \ie~of gradient descent over $- \rftobj (\theta)$ (\cref{eq:rft_objective}) with $\D$ being the uniform distribution over $\xbf_1, \ldots, \xbf_N$.
The dynamics of gradient descent are commonly studied via the infinitesimal limit of a small learning rate, known as gradient flow (\cf~\citet{saxe2014exact,du2019gradient,lyu2020gradient,razin2020implicit,elkabetz2021continuous,frei2023benign}), which in the context of RFT boils down to assuming the parameters are governed by the following differential equation:
\[
\rfttheta (0) = \theta^{(0)} \quad , \quad \frac{d}{dt} \rfttheta (t) := \nabla_\theta \rftobj (\rfttheta (t)) ~~,~t\geq 0
\text{\,,}
\]
where $\rfttheta (t)$ denotes the parameters at time $t \geq 0$ of optimization.
Analogously, in SFT, gradient flow for minimizing the cross entropy loss $\sftobj (\theta)$ (\cref{eq:sft_objective}) over the training set amounts to:
\[
\sfttheta (0) = \theta^{(0)} \quad , \quad \frac{d}{dt} \sfttheta (t) := - \nabla_\theta \sftobj (\sfttheta (t)) ~~,~t\geq 0
\text{\,,}
\]
with the distribution $\D$ in the definition of $\sftobj (\theta)$ being uniform over the training set, and $\sfttheta (t)$ denoting the parameters at time $t \geq 0$.

\cref{prop:linear_rft_sft_optim_separation} establishes an exponential optimization time separation between RFT and SFT with respect to the reward standard deviation of an input under $\theta^{(0)}$.
Resonating with the controlled experiments of~\cref{sec:evidence:controlled}, we assume that there exists a train sample misclassified by $\theta^{(0)}$, for which $\theta^{(0)}$ assigns the same probability to all incorrect labels.

\begin{theorem}
\label{prop:linear_rft_sft_optim_separation}
Consider the setting described above (in \cref{app:theoretical_illus}), and let $\xbf_n$ be a train sample that $\theta^{(0)}$ misclassifies, \ie~$\prob_{\theta^{(0)}} \brk{\ybf_n | \xbf_n} \notin \argmax\nolimits_{\ybf \in \{1, \ldots, K\} } \prob_{\theta^{(0)}} \brk{\ybf | \xbf_n}$, for which $\theta^{(0)}$ assigns the same probability to all incorrect labels, \ie~$\prob_{\theta^{(0)}} \brk{\ybf | \xbf_n} = \prob_{\theta^{(0)}} \brk{\ybf' | \xbf_n}$ for all~$\ybf, \ybf' \neq \ybf_n$.
Denote by $\rfttime, \sfttime \geq 0$ the initial times at which RFT and SFT classify $\xbf_n$ correctly, respectively.
Formally:
\[
\begin{split}
\rfttime & := \min \brk[c]*{ t \geq 0 : \prob_{\rfttheta (t)} \brk{\ybf_n | \xbf_n} \in \argmax\nolimits_{\ybf \in \{1, \ldots, K\} } \prob_{\rfttheta (t)} \brk{\ybf | \xbf_n} } \text{\,,} \\[1.5mm]
\sfttime & := \min \brk[c]*{ t \geq 0 : \prob_{\sfttheta (t)} \brk{\ybf_n | \xbf_n} \in \argmax\nolimits_{\ybf \in \{1, \ldots, K\} } \prob_{\sfttheta (t)} \brk{\ybf | \xbf_n} } \text{\,.}
\end{split}
\]
Then, it holds that:
\[
\rfttime = \Omega \brk3{ \frac{ 1 }{ \std\nolimits_{\ybf \sim \prob_{\theta^{(0)}} \brk{ \cdot | \xbf_n} } \brk[s]{ r (\xbf_n, \ybf) }^2 }  } ~~,~~ \sfttime = \OO \brk2{ \ln \brk2{ \frac{ 1 }{ \std\nolimits_{\ybf \sim \prob_{\theta^{(0)}} \brk{ \cdot | \xbf_n} } \brk[s]{ r (\xbf_n, \ybf) } } } } 
\text{\,.} 
\]
That is, there exist constants $c_1, c'_1  \in \R_{> 0}$ and $c_2, c'_2 \in \R$ (\ie~$c_1, c'_1, c_2, c'_2$ do not depend on the initial parameters $\theta^{(0)}$) such that $\rfttime \geq c_1 \cdot \brk{ 1 / \std\nolimits_{\ybf \sim \prob_{\theta^{(0)}} \brk{ \cdot | \xbf} } \brk[s]{ r (\xbf, \ybf) }^{2} } + c_2$, whereas $\sfttime \leq c'_1 \cdot \ln \brk{ 1 / \std\nolimits_{\ybf \sim \prob_{\theta^{(0)}} \brk{ \cdot | \xbf} } \brk[s]{ r (\xbf, \ybf) } } + c'_2$.
\end{theorem}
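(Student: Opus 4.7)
The plan is to exploit the orthonormality of the inputs to decouple the gradient flow dynamics across training samples, then use the assumed symmetry of $\theta^{(0)}$ over incorrect labels to reduce the per-input dynamics to a one-dimensional ODE in the logit gap $\Delta := (\theta \xbf_n)_{\ybf_n} - (\theta \xbf_n)_k$ (any $k \neq \ybf_n$), which can be integrated in closed form. First, since $f(\xbf;\theta) = \theta\xbf$ is linear, each per-sample gradient $\nabla_\theta V(\xbf_{n'};\theta)$ has the rank-one form $\vbf_{n'}\xbf_{n'}^\top$ for some $\vbf_{n'} \in \R^K$. Orthonormality then gives $(\nabla_\theta V(\xbf_{n'};\theta))\xbf_n = \0$ whenever $n' \neq n$, so $\zbf_n(t) := \rfttheta(t)\xbf_n$ evolves autonomously, driven only by the gradient of $V(\xbf_n;\theta)$ up to a fixed factor of $1/N$; the analogous decoupling holds for SFT. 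Next, I would verify that the incorrect-label coordinates $(\zbf_n)_k$, $k \neq \ybf_n$, satisfy identical scalar ODEs, so starting equal (by the assumption on $\theta^{(0)}$) they remain equal throughout gradient flow by uniqueness, leaving a genuinely scalar dynamics in $\Delta$.

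Armed with this reduction, set $p := \prob_{\rfttheta(t)}(\ybf_n|\xbf_n) = (1 + (K-1)e^{-\Delta})^{-1}$. The softmax Jacobian yields $\partial V(\xbf_n;\theta) / \partial (\zbf_n)_{\ybf_n} = 2p(1-p)$ and $\partial V(\xbf_n;\theta) / \partial (\zbf_n)_k = -2p(1-p)/(K-1)$ for each $k \neq \ybf_n$, producing the RFT ODE $d\Delta/dt = \frac{2K}{N(K-1)} p(1-p)$. The analogous SFT computation with loss $-\ln p$ gives $d\Delta/dt = \frac{K}{N(K-1)} (1-p)$. Since $\xbf_n$ is correctly classified precisely when $\Delta \geq 0$, both $\rfttime$ and $\sfttime$ equal the time for $\Delta$ to travel from $\Delta^{(0)} < 0$ to $0$. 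Separation of variables, combined with the algebraic identities $1/(p(1-p)) = e^\Delta/(K-1) + 2 + (K-1)e^{-\Delta}$ and $1/(1-p) = 1 + e^\Delta/(K-1)$, evaluates both integrals termwise: the dominant $(K-1)e^{-\Delta}$ contribution near $\Delta^{(0)}$ gives $\rfttime = \Omega(e^{-\Delta^{(0)}})$, while the SFT integral is linear in $\Delta^{(0)}$, yielding $\sfttime = \OO(\abs{\Delta^{(0)}})$.

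To convert these into the stated form, let $p^{(0)} := \prob_{\theta^{(0)}}(\ybf_n|\xbf_n)$. The $\pm 1$ reward structure gives $\std\nolimits_{\ybf \sim \prob_{\theta^{(0)}}(\cdot|\xbf_n)}\brk[s]*{r(\xbf_n,\ybf)}^2 = 4p^{(0)}(1-p^{(0)})$. The misclassification hypothesis forces $p^{(0)} \leq 1/K$: by symmetry each incorrect label has probability $(1-p^{(0)})/(K-1)$, and since some such label attains the argmax, $(1-p^{(0)})/(K-1) \geq p^{(0)}$. Hence $1-p^{(0)} \geq (K-1)/K$, so $p^{(0)} = \Theta(\std\nolimits_{\ybf}\brk[s]*{r(\xbf_n, \ybf)}^2)$ with constants depending only on $K$. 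Substituting yields $e^{-\Delta^{(0)}} = (K-1)(1-p^{(0)})/p^{(0)} = \Theta(1/\std\nolimits_{\ybf}\brk[s]*{r(\xbf_n, \ybf)}^2)$ and $\abs{\Delta^{(0)}} = \OO(\ln(1/\std\nolimits_{\ybf}\brk[s]*{r(\xbf_n, \ybf)}))$, giving the claimed bounds. The main obstacle is careful bookkeeping to ensure the hidden constants are independent of $\theta^{(0)}$ and that the regime where the reward standard deviation is not small gets absorbed into the additive constants $c_2, c'_2$ of the theorem; this amounts to chasing inequalities rather than any new idea.
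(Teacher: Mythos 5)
Your proposal follows essentially the same route as the paper: decouple via orthonormality, use the ODE-uniqueness symmetry argument to collapse the incorrect-label coordinates, integrate the resulting scalar ODE in the logit gap by separation of variables, and then relate the initial gap to the reward standard deviation via the $\pm 1$ reward structure and the misclassification bound $p^{(0)} < 1/K$. One small slip: from $p^{(0)} = (1+(K-1)e^{-\Delta^{(0)}})^{-1}$ one gets $e^{-\Delta^{(0)}} = (1-p^{(0)})/\bigl((K-1)p^{(0)}\bigr)$, not $(K-1)(1-p^{(0)})/p^{(0)}$; this is only a $K$-dependent constant factor and leaves the asymptotic conclusions unchanged.
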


\begin{proof}[Proof sketch (full proof in~\cref{app:proofs:linear_rft_sft_optim_separation})]
Since $\xbf_1, \ldots, \xbf_N$ are orthogonal, the logits that the model produces for $\xbf_n$, \ie~$f (\xbf_n; \rfttheta (t))$ in RFT and $f (\xbf_n; \sfttheta (t))$ in SFT, evolve independently of the remaining train samples.
Furthermore, the assumption that $\theta^{(0)}$ assigns equal probabilities to all incorrect labels of $\xbf_n$ implies that they remain equal throughout optimization, and so do the respective logits.
Let $\mu (t)$ be the difference between the logit of $\ybf_n$ and of some $\ybf \neq \ybf_n$ at time $t$, given $\xbf_n$, and note that the initial time at which $\xbf_n$ is classified correctly is the initial time at which $\mu(t) = 0$.
The above facilitate deriving the exact dependence of $\rfttime$ and $\sfttime$ on $f (\xbf_n ; \theta^{(0)})$, \ie~on the initial logits produced by the model given $\xbf_n$, from the differential equation governing $\mu (t)$.
The proof concludes by relating  $f(\xbf_n; \theta^{(0)})$ to the reward standard deviation of $\xbf_n$ under~$\theta^{(0)}$.
\end{proof}

 We note that our lower bound on $\rfttime$ is similar in spirit to Theorem~1 of~\citet{mei2020escaping}.
 For a multi-armed bandit problem,~\citet{mei2020escaping} lower bounds the time until reaching near-optimal reward, in terms of the initial optimal action probability.
 The main difference between the results, besides the exact technical conditions, is that we derive the dependence of optimization time on the initial reward standard deviation and contrast it with the optimization time under SFT.

		\section{Deferred Proofs}
\label{app:proofs}

\subsection{Proof of~\cref{thm:rft_vanish_grad}}
\label{app:proofs:rft_vanish_grad}

Differentiating $\rftobj (\xbf; \theta)$ with respect to $\theta$ using the log-derivative trick we get:
\[
\begin{split}
\nabla_\theta\rftobj(\xbf; \theta) & = \EE\nolimits_{ \ybf \sim \prob_{\theta} \brk{\cdot | \xbf} } \brk[s]2{ r (\xbf, \ybf) \nabla_\theta \ln \prob_\theta \brk{ \ybf | \xbf } } \\
& = \EE\nolimits_{ \ybf \sim \prob_{\theta} \brk{\cdot | \xbf} } \brk[s]2{ r (\xbf, \ybf) \sum\nolimits_{l = 1}^{L_{out}} \nabla_\theta \ln \prob_\theta \brk{ y_l | \xbf, \ybf_{\leq l-1} } } \\
& = \EE\nolimits_{ \ybf \sim \prob_{\theta} \brk{\cdot | \xbf} } \brk[s]2{ r (\xbf, \ybf) \sum\nolimits_{l = 1}^{L_{out}} \nabla_\theta \ln \smax \brk{ f \brk{ \xbf, \ybf_{\leq l-1} ; \theta } }_{y_l} } \\
& = \sum\nolimits_{ \ybf \in \X^{L_{out}} } \prob_{\theta} \brk*{ \ybf | \xbf} r( \xbf, \ybf ) \sum\nolimits_{l = 1}^{L_{out}} J^\top_{ f ( \xbf, \ybf_{\leq l - 1} ; \theta) } \brk*{ \ebf_{y_l} - \prob_{\theta} \brk*{ \cdot | \xbf, \ybf_{\leq l - 1}} }
\text{\,,}
\end{split}
\]
where $\ebf_{y_l} \in \R^{\abs{\X}}$ denotes the $y_l$'th standard basis vector, \ie~the $y_l$'th entry of $\ebf_{y_l}$ is one and its other entries are zero, and $\prob_\theta \brk{ \cdot | \xbf, \ybf_{\leq l - 1} }$ stands for the vector holding the next-token probabilities, \ie~$\prob_\theta \brk{ \cdot | \xbf, \ybf_{\leq l - 1} } = \smax \brk{ f \brk{ \xbf, \ybf_{\leq l-1} ; \theta } } \in \R^{\abs{\X}}$.

For $c > 0$ to be determined later, denote by $\Y_c$ the set of outputs whose rewards deviate by more than $c$ from the expected reward, \ie:
\[
\Y_c := \brk[c]1{ \ybf \in \X^{L_{out}} : \abs{ r (\xbf, \ybf) - \rftobj (\xbf; \theta) } > c }
\text{\,.}
\]
Defining a modified reward function $\tilde{r} : \X^{L_{in}} \times \X^{L_{out}} \to [-1, 1]$ by:
\[
\tilde{r} (\xbf, \ybf) := \begin{cases}
r(\xbf, \ybf)  & ,~ \ybf \notin \Y_c \\
\rftobj(\xbf; \theta) & ,~ \ybf \in \Y_c
\end{cases}
\text{\,,}
\]
we may write $\nabla_\theta \rftobj (\xbf; \theta)$ as follows:
\[
\begin{split}
& \nabla_\theta \rftobj (\xbf; \theta) \\
& \hspace{4mm} = \underbrace{ \sum\nolimits_{ \ybf \in \X^{L_{out}} } \prob_{\theta} \brk*{ \ybf | \xbf} \tilde{r} ( \xbf, \ybf ) \sum\nolimits_{l = 1}^{L_{out}} J^\top_{ f ( \xbf, \ybf_{\leq l - 1} ; \theta) } \brk*{ \ebf_{y_l} - \prob_{\theta} \brk*{ \cdot | \xbf, \ybf_{\leq l - 1}} } }_{ (I) } \\
& \hspace{8mm} + \underbrace{ \sum\nolimits_{ \ybf \in \Y_c } \prob_{\theta} \brk*{ \ybf | \xbf} \brk[s]*{ r( \xbf, \ybf) - \tilde{r} ( \xbf, \ybf ) } \sum\nolimits_{l = 1}^{L_{out}} J^\top_{ f ( \xbf, \ybf_{\leq l - 1} ; \theta) } \brk*{ \ebf_{y_l} - \prob_{\theta} \brk*{ \cdot | \xbf, \ybf_{\leq l - 1}} } }_{ (II) }
\text{\,.}
\end{split}
\]
We upper bound the Euclidean norms of $(I)$ and $(II)$ separately.
Starting with $(II)$, by Chebyshev's inequality we know that:
\[
\prob_{\theta} \brk*{ \Y_c | \xbf} \leq \frac{ \std\nolimits_{\ybf \sim \prob_{\theta} \brk{ \cdot | \xbf} } \brk[s]*{ r (\xbf, \ybf) }^2 }{c^2}
\text{\,.}
\]
Notice that $\norm*{ \ebf_{y_l} - \prob_{\theta} \brk*{ \cdot | \xbf, \ybf_{\leq l - 1}} } \leq \norm*{ \ebf_{y_l} - \prob_{\theta} \brk*{ \cdot | \xbf, \ybf_{\leq l - 1}} }_1 \leq 2$ for all $\ybf_{\leq l} \in \X^{l}$, where $\norm{\cdot}_1$ denotes the $\ell_1$ norm.
Thus, for any $\ybf \in \Y_{c}$ and $l \in \{1, \ldots, L_{out}\}$:
\[
\begin{split}
\norm1{ J^\top_{ f ( \xbf, \ybf_{\leq l - 1} ; \theta) } \brk*{ \ebf_{y_l} - \prob_{\theta} \brk*{ \cdot | \xbf, \ybf_{\leq l - 1}} } } & \leq \norm1{ J^\top_{ f ( \xbf, \ybf_{\leq l - 1} ; \theta) } }_2 \cdot \norm{ \brk*{ \ebf_{y_l} - \prob_{\theta} \brk*{ \cdot | \xbf, \ybf_{\leq l - 1}} } } \\
& \leq 2 \gamma (\xbf; \theta)
\text{\,.}
\end{split}
\]
Since rewards lie in $[-1, 1]$, by the triangle inequality this implies that:
\be
\norm{ (II) } \leq 4 L_{out} \gamma (\xbf; \theta) \cdot \prob_\theta \brk*{ \Y_c | \xbf } \leq 4 L_{out} \gamma (\xbf; \theta) \cdot \frac{ \std\nolimits_{\ybf \sim \prob_{\theta} \brk{ \cdot | \xbf} } \brk[s]*{ r (\xbf, \ybf) }^2 }{ c^2 }
\text{\,.}
\label{eq:proof_vanish_grad_ii_bound}
\ee
As for $(I)$, denoting for $\ybf_{\leq l - 1} \in \X^{l - 1}$:
\[
\abf^{(\ybf_{\leq l - 1})} := \sum\nolimits_{ \ybf_{\geq l} \in \X^{L_{out} - l + 1} } \prob_{\theta} \brk*{ \ybf_{\geq l} | \xbf, \ybf_{\leq l - 1}} \tilde{r} ( \xbf, \ybf ) \brk*{ \ebf_{y_l} - \prob_{\theta} \brk*{ \cdot | \xbf, \ybf_{\leq l - 1}} } \in \R^{\abs{\X}}
\text{\,,}
\]
where $\ybf_{\geq l} := (y_l, \ldots, y_{L_{out}})$, we have that:
\[
(I) = \sum\nolimits_{l = 1}^{L_{out}} \sum\nolimits_{ \ybf_{\leq l - 1} \in \X^{l - 1} } \prob_{\theta} \brk*{ \ybf_{\leq l - 1} | \xbf} J^\top_{ f ( \xbf, \ybf_{\leq l - 1} ; \theta) } \abf^{(\ybf_{\leq l - 1})}
\text{\,.}
\]
The $y$'th entry of $\abf^{(\ybf_{\leq l - 1})}$ is given by:
\[
\begin{split}
a^{(\ybf_{\leq l-1})}_y & = \sum\nolimits_{\ybf_{\geq l + 1} \in \X^{L_{out} - l}} \prob_{\theta} \brk*{ \ybf_{\geq l + 1} | \xbf, \ybf_{\leq l - 1}, y} \prob_{\theta} \brk*{ y | \xbf, \ybf_{\leq l - 1}} \tilde{r} ( \xbf, \ybf_{\leq l - 1}, y, \ybf_{\geq l + 1} ) \\
& \hspace{4mm} - \sum\nolimits_{\ybf_{\geq l} \in \X^{L_{out} - l + 1}} \prob_{\theta} \brk*{ \ybf_{\geq l} | \xbf, \ybf_{\leq l - 1}} \prob_{\theta} \brk*{ y | \xbf, \ybf_{\leq l - 1}} \tilde{r} ( \xbf, \ybf ) \\
& = \prob_{\theta} \brk*{ y | \xbf, \ybf_{\leq l - 1}} \Big ( \EE\nolimits_{\ybf_{\geq l + 1} \sim \prob_{\theta} \brk{ \cdot | \xbf, \ybf_{\leq l - 1}, y}} \brk[s]*{ \tilde{r} (\xbf, \ybf_{\leq l - 1}, y, \ybf_{\geq l + 1}) } \\
& \hspace{4mm} - \EE\nolimits_{\ybf_{\geq l} \sim \prob_{\theta} \brk{ \cdot | \xbf, \ybf_{\leq l - 1}}} \brk[s]{ \tilde{r} ( \xbf, \ybf ) } \Big )
\text{\,.}
\end{split}
\]
Since by the definition of $\tilde{r}$ it holds that $\abs{ \tilde{r} (\xbf, \ybf) - \rftobj (\xbf; \theta) } \leq c$ for all $\ybf \in \X^{L_{out}}$, we can bound the difference of expectations in the equation above through adding and subtracting $\rftobj (\xbf; \theta)$ and the triangle inequality:
\[
\begin{split}
& \abs*{ \EE\nolimits_{\ybf_{\geq l + 1} \sim \prob_{\theta} \brk{ \cdot | \xbf, \ybf_{\leq l - 1}, y}} \brk[s]*{ \tilde{r} (\xbf, \ybf_{\leq l - 1}, y, \ybf_{\geq l + 1}) } - \EE\nolimits_{\ybf_{\geq l} \sim \prob_{\theta} \brk{ \cdot | \xbf, \ybf_{\leq l - 1}}} \brk[s]{ \tilde{r} ( \xbf, \ybf ) } } \\
& \leq \abs*{ \EE\nolimits_{\ybf_{\geq l +1} \sim \prob_{\theta} \brk{ \cdot | \xbf, \ybf_{\leq l - 1}, y}} \brk[s]*{ \tilde{r} (\xbf, \ybf_{\leq l - 1}, y, \ybf_{\geq l + 1}) } - \rftobj (\xbf; \theta) } \\
& \hspace{3mm} + \abs*{ \rftobj (\xbf; \theta) - \EE\nolimits_{\ybf_{\geq l} \sim \prob_{\theta} \brk{ \cdot | \xbf, \ybf_{\leq l - 1}}} \brk[s]{ \tilde{r} ( \xbf, \ybf ) } } \\
& \leq 2 c
\text{\,.}
\end{split}
\]
Thus:
\[
\norm1{ \abf^{(\ybf_{\leq l - 1})} } \leq \norm1{ \abf^{(\ybf_{\leq l - 1})} }_1 \leq \sum\nolimits_{y \in \X} \prob_\theta \brk{y | \xbf, \ybf_{\leq l - 1}} \cdot 2c = 2c
\text{\,.}
\]
Recalling that:
\[
(I) = \sum\nolimits_{l = 1}^{L_{out}}  \sum\nolimits_{ \ybf_{\leq l - 1} \in \X^{l - 1} } \prob_\theta \brk{ \ybf_{\leq l - 1} | \xbf} J^\top_{ f ( \xbf, \ybf_{\leq l - 1} ; \theta) } \abf^{(\ybf_{\leq l - 1})}
\text{\,,}
\]
taking the norm of both sides and applying the triangle inequality yields:
\be
\begin{split}
\norm{ (I) } & \leq \sum\nolimits_{l = 1}^{L_{out}}  \sum\nolimits_{ \ybf_{\leq l - 1} \in \X^{l - 1} } \prob_{\theta} \brk{ \ybf_{\leq l - 1} | \xbf} \gamma (\xbf; \theta) \norm1{ \abf^{(\ybf_{\leq l - 1})} } \\
& \leq \sum\nolimits_{l = 1}^{L_{out}} 2 \gamma (\xbf; \theta) \cdot c \sum\nolimits_{ \ybf_{\leq l - 1} \in \X^{l - 1} } \prob_{\theta} \brk{ \ybf_{\leq l - 1} | \xbf} \\
& = 2 L_{out} \gamma (\xbf; \theta) \cdot c
\text{\,.}
\end{split}
\label{eq:proof_vanish_grad_i_bound}
\ee
Combining the bounds on the norms of $(I)$ and $(II)$ (\cref{eq:proof_vanish_grad_ii_bound,eq:proof_vanish_grad_i_bound}) then leads to:
\[
\norm{ \nabla_\theta \rftobj (\xbf; \theta) } \leq 4 L_{out} \gamma (\xbf; \theta) \cdot \frac{ \std\nolimits_{\ybf \sim \prob_{\theta} \brk{ \cdot | \xbf} } \brk[s]*{ r (\xbf, \ybf) }^2 }{c^2} + 2L_{out} \gamma (\xbf; \theta) \cdot c
\text{\,.}
\]
Lastly, choosing $c = \std\nolimits_{\ybf \sim \prob_{\theta} \brk{ \cdot | \xbf} } \brk[s]*{ r (\xbf, \ybf) }^{2 / 3}$ concludes the proof:
\[
\norm{ \nabla_\theta \rftobj (\xbf; \theta) } \leq 6 L_{out} \gamma (\xbf; \theta) \cdot \std\nolimits_{\ybf \sim \prob_{\theta} \brk{ \cdot | \xbf} } \brk[s]*{ r (\xbf, \ybf) }^{2 / 3}
\text{\,.}
\]
\qed

\subsection{Proof of~\cref{prop:ppo_vanish_grad}}
\label{app:proofs:ppo_vanish_grad}

Define:
\[
\begin{split}
\RR^+_{\delta} & := \brk[c]*{ \ybf \in \X^{L_{out}} : \frac{ \prob_{\theta} \brk{\ybf | \xbf} }{ \prob_{\ppotheta} \brk{\ybf | \xbf} } > 1 + \delta ~,~ A_{\ppotheta} (\xbf, \ybf) > 0} \text{\,,} \\[0.3em]
\RR^-_{\delta} & := \brk[c]*{ \ybf \in \X^{L_{out}} : \frac{ \prob_{\theta} \brk{\ybf | \xbf} }{ \prob_{\ppotheta} \brk{\ybf | \xbf} } < 1 - \delta ~,~ A_{\ppotheta} (\xbf, \ybf) < 0} \text{\,,}  \\[0.3em]
\RR_{\delta} & := \brk[c]*{ \ybf \in \X^{L_{out}} : \abs*{ \frac{ \prob_{\theta} \brk{\ybf | \xbf} }{ \prob_{\ppotheta} \brk{\ybf | \xbf} } - 1 } > \delta}
\text{\,,}
\end{split}
\]
\ie~$\RR^+_{\delta}$ is the set of outputs for which PPO will clip the objective when the probability ratio is strictly greater than $1 + \delta$, $\RR^-_{\delta}$ is the set of outputs for which clipping occurs when the probability ratio is strictly less than $1 - \delta$, and $\RR_{\delta}$ is the set of outputs for which the probability ratio is either strictly greater than $1 + \delta$ or strictly less than $1 - \delta$.
Note that $\RR^+_\delta \cup \RR^-_\delta \subseteq \RR_\delta$.

We can decompose $\ppoclipobj (\xbf; \theta)$ according to $\RR^+_{\delta}, \RR^+_{\delta}$, and the remaining outputs as follows:
\[
\begin{split}
\ppoclipobj (\xbf; \theta) & = \sum\nolimits_{\ybf \in \X^{L_{out}} \setminus (\RR^+_\delta \cup \RR^-_\delta) }  \prob_{\theta} \brk{\ybf | \xbf}  A_{\ppotheta} (\xbf, \ybf) \\
& \hspace{4mm} + \sum\nolimits_{ \ybf \in \RR^+_\delta }  \prob_{\ppotheta} \brk{\ybf | \xbf} (1 + \delta) A_{\ppotheta} (\xbf, \ybf) \\
& \hspace{4mm} + \sum\nolimits_{ \ybf \in \RR^-_\delta }  \prob_{\ppotheta} \brk{\ybf | \xbf} (1 - \delta) A_{\ppotheta} (\xbf, \ybf)
\text{\,.}
\end{split}
\]
Since the second and third sums in the equation above do not depend on $\theta$ we have that:
\[
\nabla_\theta \ppoclipobj (\xbf; \theta) = \sum\nolimits_{\ybf \in \X^{L_{out}} \setminus (\RR^+_\delta \cup \RR^-_\delta) }  A_{\ppotheta} (\xbf, \ybf) \nabla_\theta\prob_{\theta} \brk{\ybf | \xbf}
\text{\,.}
\]
Adding and subtracting $\sum\nolimits_{\ybf \in \RR^+_\delta \cup \RR^-_\delta }  A_{\ppotheta} (\xbf, \ybf) \nabla_\theta \prob_{\theta} \brk{\ybf | \xbf}$ yields:
\be
\nabla_\theta\ppoclipobj (\xbf; \theta) = \sum\nolimits_{\ybf \in \X^{L_{out}} }  A_{\ppotheta} (\xbf, \ybf) \nabla_\theta\prob_{\theta} \brk{\ybf | \xbf} - \sum\nolimits_{\ybf \in \RR^+_\delta \cup \RR^-_\delta }  A_{\ppotheta} (\xbf, \ybf) \nabla_\theta\prob_{\theta} \brk{\ybf | \xbf}
\text{\,.}
\label{eq:ppo_clip_grad_add_sub}
\ee
Recall that $A_{\ppotheta} (\xbf, \ybf) = r (\xbf, \ybf) - \rftobj (\xbf; \ppotheta)$.
Because $\rftobj (\xbf; \ppotheta)$ does not depend on $\ybf$ we get that:
\[
\begin{split}
\sum\nolimits_{\ybf \in \X^{L_{out}} } \!\! A_{\ppotheta} (\xbf, \ybf) \nabla_\theta \prob_{\theta} \brk{\ybf | \xbf} & = \sum\nolimits_{\ybf \in \X^{L_{out}} } \!\! r (\xbf, \ybf) \nabla_\theta\prob_{\theta} \brk{\ybf | \xbf} - \rftobj (\xbf; \ppotheta) \sum\nolimits_{\ybf \in \X^{L_{out}} } \!\!\!\! \nabla_\theta\prob_{\theta} \brk{\ybf | \xbf}  \\
& = \nabla_\theta \rftobj (\xbf; \theta) - \rftobj (\xbf; \ppotheta) \nabla_\theta\sum\nolimits_{\ybf \in \X^{L_{out}} } \prob_{\theta} \brk{\ybf | \xbf} \\
&=  \nabla_\theta \rftobj (\xbf; \theta) - \rftobj (\xbf; \ppotheta) \underbrace{ \nabla_\theta 1 }_{= 0} \\[-2mm]
& = \nabla_\theta \rftobj (\xbf; \theta)
\text{\,.}
\end{split}
\]
Plugging this into~\cref{eq:ppo_clip_grad_add_sub}, rearranging the equality, and taking the Euclidean norm of both sides, it follows that:
\[
\norm1{ \nabla_\theta \ppoclipobj (\xbf; \theta) - \nabla_\theta \rftobj (\xbf ; \theta) } \leq \norm2{ \sum\nolimits_{\ybf \in \RR^+_\delta \cup \RR^-_\delta } A_{\ppotheta} (\xbf, \ybf) \nabla_\theta \prob_{\theta} \brk{\ybf | \xbf} }
\text{\,.}
\]
Rewards lie in the interval $[-1, 1]$, hence,  $\abs{A_{\ppotheta} (\xbf, \ybf)} \leq 2$.
So, applying the triangle inequality we obtain:
\[
\begin{split}
\norm1{ \nabla_\theta \ppoclipobj (\xbf; \theta) - \nabla_\theta \rftobj (\xbf ; \theta) } & \leq 2 \sum\nolimits_{\ybf \in \RR^+_\delta \cup \RR^-_\delta } \norm*{ \nabla_\theta\prob_{\theta} \brk{\ybf | \xbf} } \\
& = 2 \sum\nolimits_{\ybf \in \RR^+_\delta \cup \RR^-_\delta } \norm*{ \prob_{\theta} \brk{\ybf | \xbf} \nabla_\theta \ln \prob_{\theta} \brk{\ybf | \xbf} } \\
& = 2 \sum\nolimits_{\ybf \in \RR^+_\delta \cup \RR^-_\delta } \prob_{\theta} \brk{\ybf | \xbf} \cdot \norm2{ \sum\nolimits_{l = 1}^{L_{out}} \nabla_\theta \ln \prob_{\theta} \brk{y_l | \xbf, \ybf_{\leq l-1}} }
\text{\,,}
\end{split}
\]
where the first equality is by the log-derivative trick.
Since
\[
\prob_{\theta} \brk{y_l | \xbf, \ybf_{\leq l-1}} = \smax ( f ( \xbf, \ybf_{l - 1}; \theta ) )_{y_l}
\text{\,,}
\]
for any $l \in \{1, \ldots, L_{out}\}$ and $\ybf \in \X^{L_{out}}$ we have:
\[
\nabla_\theta \ln \prob_{\theta} \brk{y_l | \xbf, \ybf_{\leq l-1}} = J^\top_{ f ( \xbf, \ybf_{\leq l - 1} ; \theta) } \brk*{ \ebf_{y_l} - \prob_{\theta} \brk*{ \cdot | \xbf, \ybf_{\leq l - 1}} }
\text{\,,}
\]
where $\ebf_{y_l} \in \R^{\abs{\X}}$ denotes the $y_l$'th standard basis vector, \ie~the $y_l$'th entry of $\ebf_{y_l}$ is one and its other entries are zero, and $\prob_\theta \brk{ \cdot | \xbf, \ybf_{\leq l - 1} } := \smax \brk{ f \brk{ \xbf, \ybf_{\leq l-1} ; \theta } } \in \R^{\abs{\X}}$.
Noticing that $\norm{ \ebf_{y_l} - \prob_{\theta} \brk*{ \cdot | \xbf, \ybf_{\leq l - 1}} } \leq \norm{ \ebf_{y_l} - \prob_{\theta} \brk*{ \cdot | \xbf, \ybf_{\leq l - 1}} }_1 \leq 2$, where $\norm{\cdot}_1$ denotes the $\ell_1$ norm, we have:
\[
\norm*{ \nabla_\theta \ln \prob_{\theta} \brk{y_l | \xbf, \ybf_{\leq l-1}} } \leq 2\gamma (\xbf; \theta)
\text{\,.}
\]
By the triangle inequality this implies that:
\be
\begin{split}
\norm1{ \nabla_\theta \ppoclipobj (\xbf; \theta) - \nabla_\theta \rftobj (\xbf ; \theta) } & \leq 2 \sum\nolimits_{\ybf \in \RR^+_\delta \cup \RR^-_\delta } \prob_{\theta} \brk{\ybf | \xbf} \cdot \sum\nolimits_{l = 1}^{L_{out}} \norm*{ \nabla_\theta \ln \prob_{\theta} \brk{y_l | \xbf, \ybf_{\leq l-1}} } \\
& \leq 4 L_{out} \gamma (\xbf; \theta) \sum\nolimits_{\ybf \in \RR^+_\delta \cup \RR^-_\delta } \prob_{\theta} \brk{\ybf | \xbf} \\
& = 4 L_{out} \gamma (\xbf; \theta) \cdot \prob_{\theta} \brk*{ \RR^+_\delta \cup \RR^-_\delta | \xbf} \\
& = 4 L_{out} \gamma (\xbf; \theta) \cdot \brk*{ \prob_{\theta} \brk*{ \RR^+_\delta | \xbf} + \prob_{\theta} \brk*{ \RR^-_\delta | \xbf} }
\text{\,,}
\end{split}
\label{eq:ppo_value_grad_prob_temp_bound}
\ee
where the last transition is due to $\RR^+_\delta$ and $\RR^-_\delta$ being disjoint.
The proof concludes by showing that both $\prob_{\theta} \brk{ \RR^+_\delta | \xbf} \leq 2 \delta^{-1} \cdot \TV ( \prob_{\theta} \brk{ \cdot | \xbf} || \prob_{\ppotheta} \brk{ \cdot | \xbf} )$ and $\prob_{\theta} \brk{ \RR^-_\delta | \xbf} \leq \delta^{-1} \cdot \TV ( \prob_{\theta} \brk{ \cdot | \xbf} || \prob_{\ppotheta} \brk{ \cdot | \xbf} )$.

Focusing on $\prob_{\theta} \brk{ \RR^+_\delta | \xbf}$, by the definition of the total variation distance:
\[
\abs*{ \prob_{\theta} \brk1{ \RR^+_\delta | \xbf} - \prob_{\ppotheta} \brk1{ \RR^+_\delta | \xbf} } \leq \TV ( \prob_{\theta} \brk{ \cdot | \xbf} || \prob_{\ppotheta} \brk{ \cdot | \xbf} )
\text{\,,}
\]
from which it follows that:
\[
\prob_{\theta} \brk1{ \RR^+_\delta | \xbf} \cdot \abs*{ 1 - \frac{ \prob_{\ppotheta} \brk1{ \RR^+_\delta | \xbf} }{ \prob_{\theta} \brk1{ \RR^+_\delta | \xbf} } } \leq \TV ( \prob_{\theta} \brk{ \cdot | \xbf} || \prob_{\ppotheta} \brk{ \cdot | \xbf} )
\text{\,,}
\]
where note that $\prob_{\theta} \brk{ \RR^+_\delta | \xbf} \neq 0$ due to the softmax parameterization, under which the probability of any output cannot be equal to zero.
For each $\ybf \in \RR^+_\delta$ we know that $\prob_{\theta} \brk{ \ybf | \xbf} / \prob_{\ppotheta} \brk{ \ybf | \xbf} > 1 + \delta$.
Thus, $\prob_{\theta} \brk{ \RR^+_\delta | \xbf} / \prob_{\ppotheta} \brk{ \RR^+_\delta | \xbf} > 1 + \delta$ and $\prob_{\ppotheta} \brk{ \RR^+_\delta | \xbf} / \prob_{\theta} \brk{ \RR^+_\delta | \xbf} < 1 - \frac{\delta}{1 + \delta}$.
Plugging this into the inequality above, we have that:
\[
\prob_{\theta} \brk1{ \RR^+_\delta | \xbf} \cdot \frac{\delta}{1 + \delta} < \prob_{\theta} \brk1{ \RR^+_\delta | \xbf} \cdot \abs*{ 1 - \frac{ \prob_{\ppotheta} \brk1{ \RR^+_\delta | \xbf} }{ \prob_{\theta} \brk1{ \RR^+_\delta | \xbf} } } \leq \TV ( \prob_{\theta} \brk{ \cdot | \xbf} || \prob_{\ppotheta} \brk{ \cdot | \xbf} )
\text{\,,}
\]
and so:
\be
\prob_{\theta} \brk1{ \RR^+_\delta | \xbf} \leq \frac{1 + \delta }{ \delta } \cdot \TV ( \prob_{\theta} \brk{ \cdot | \xbf} || \prob_{\ppotheta} \brk{ \cdot | \xbf} ) \leq \frac{ 2 }{ \delta } \cdot \TV ( \prob_{\theta} \brk{ \cdot | \xbf} || \prob_{\ppotheta} \brk{ \cdot | \xbf} )
\text{\,.}
\label{eq:prob_clip_plus_bound}
\ee

Turning our attention to $\prob_{\theta} \brk{ \RR^-_\delta | \xbf}$, by an analogous derivation we get:
\[
\prob_{\theta} \brk1{ \RR^-_\delta | \xbf} \cdot \abs*{ 1 - \frac{ \prob_{\ppotheta} \brk1{ \RR^-_\delta | \xbf} }{ \prob_{\theta} \brk1{ \RR^-_\delta | \xbf} } } \leq \TV ( \prob_{\theta} \brk{ \cdot | \xbf} || \prob_{\ppotheta} \brk{ \cdot | \xbf} )
\text{\,.}
\]
For each $\ybf \in \RR^-_\delta$ we know that $\prob_{\theta} \brk{ \ybf | \xbf} / \prob_{\ppotheta} \brk{ \ybf | \xbf} < 1 - \delta$, implying that $\prob_{\theta} \brk{ \RR^-_\delta | \xbf} / \prob_{\ppotheta} \brk{ \RR^-_\delta | \xbf} < 1 - \delta$ and $\prob_{\ppotheta} \brk{ \RR^-_\delta | \xbf} / \prob_{\theta} \brk{ \RR^-_\delta | \xbf} > 1 + \frac{\delta}{1 - \delta}$.
Consequently:
\[
\prob_{\theta} \brk1{ \RR^-_\delta | \xbf} \cdot \frac{\delta}{1 - \delta} < \prob_{\theta} \brk1{ \RR^-_\delta | \xbf} \cdot \abs*{ 1 - \frac{ \prob_{\ppotheta} \brk1{ \RR^-_\delta | \xbf} }{ \prob_{\theta} \brk1{ \RR^-_\delta | \xbf} } } \leq \TV ( \prob_{\theta} \brk{ \cdot | \xbf} || \prob_{\ppotheta} \brk{ \cdot | \xbf} )
\text{\,,}
\]
from which it follows that:
\be
\prob_{\theta} \brk1{ \RR^-_\delta | \xbf} \leq \frac{1 - \delta }{ \delta } \cdot \TV ( \prob_{\theta} \brk{ \cdot | \xbf} || \prob_{\ppotheta} \brk{ \cdot | \xbf} ) \leq \frac{1}{ \delta } \cdot \TV ( \prob_{\theta} \brk{ \cdot | \xbf} || \prob_{\ppotheta} \brk{ \cdot | \xbf} ) 
\text{\,.}
\label{eq:prob_clip_minus_bound}
\ee

Finally, combining~\cref{eq:ppo_value_grad_prob_temp_bound,,eq:prob_clip_plus_bound,eq:prob_clip_minus_bound} establishes the sought-after result:
\[
\begin{split}
\norm*{ \nabla_\theta \ppoclipobj (\xbf; \theta) - \nabla_\theta \rftobj (\xbf; \theta) } & \leq 4 L_{out} \gamma (\xbf; \theta) \cdot \brk*{ \prob_{\theta} \brk*{ \RR^+_\delta | \xbf} + \prob_{\theta} \brk*{ \RR^-_\delta | \xbf} }\\
& \leq \frac{ 12 L_{out} \gamma (\xbf; \theta) }{\delta} \cdot \TV ( \prob_{\theta} \brk{ \cdot | \xbf} || \prob_{\ppotheta} \brk{ \cdot | \xbf} )
\text{\,.}
\end{split}
\]
\qed

\subsection{Proof of~\cref{prop:ppo_kl_vanish_grad}}
\label{app:proofs:ppo_kl_vanish_grad}

We may write the PPO-KL objective (\cref{eq:ppo_kl_sample_objective}) as:
\[
\begin{split}
\ppoklobj (\xbf ; \theta) & = \EE\nolimits_{ \ybf \sim \prob_{\ppotheta} \brk{\cdot | \xbf} } \brk[s]*{ \frac{ \prob_{\theta} \brk{\ybf | \xbf} }{ \prob_{\ppotheta} \brk{\ybf | \xbf} } A_{\ppotheta} (\xbf, \ybf) } - \lambda \cdot \KL \brk*{ \prob_{\ppotheta} \brk{ \cdot | \xbf} || \prob_{\theta} \brk*{\cdot | \xbf} } \\
& = \EE\nolimits_{ \ybf \sim \prob_{\theta} \brk{\cdot | \xbf} } \brk[s]*{ A_{\ppotheta} (\xbf, \ybf) } - \lambda \cdot \KL \brk*{ \prob_{\ppotheta} \brk{ \cdot | \xbf} || \prob_{\theta} \brk*{\cdot | \xbf} }
\text{\,.}
\end{split}
\]
Since $A_{\ppotheta} (\xbf, \ybf) = r (\xbf, \ybf) - \rftobj (\xbf; \ppotheta)$, it holds that $\EE\nolimits_{ \ybf \sim \prob_{\theta} \brk{\cdot | \xbf} } \brk[s]*{ A_{\ppotheta} (\xbf, \ybf) } = \rftobj (\xbf; \theta) - \rftobj (\xbf; \ppotheta)$, where note that $\rftobj (\xbf; \ppotheta)$ does not depend on $\theta$.
Differentiating with respect to $\theta$ thus yields:
\[
\nabla_\theta \ppoklobj (\xbf; \theta) = \nabla_\theta \rftobj (\xbf; \theta) - \lambda \cdot \nabla_\theta \KL \brk*{ \prob_{\ppotheta} \brk{\cdot | \xbf} || \prob_{\theta} \brk{\cdot | \xbf} }
\text{\,,}
\]
and so:
\[
\norm*{ \nabla_\theta \ppoklobj (\xbf; \theta) - \nabla_\theta\rftobj(\xbf; \theta) } = \lambda \cdot \norm*{ \nabla_\theta \KL \brk*{ \prob_{\ppotheta} \brk{\cdot | \xbf} || \prob_{\theta} \brk{\cdot | \xbf} } }
\text{\,.}
\]
We therefore need only show that $\norm{ \nabla_\theta \KL \brk*{ \prob_{\ppotheta} \brk{\cdot | \xbf} || \prob_{\theta} \brk{\cdot | \xbf} } }$ is upper bounded by $4 L_{out} \gamma (\xbf; \theta) \cdot \TV ( \prob_{\theta} \brk{ \cdot | \xbf} || \prob_{\ppotheta} \brk{ \cdot | \xbf} )$.
Writing the KL divergence term explicitly:
\[
\begin{split}
\nabla_\theta \KL \brk*{ \prob_{\ppotheta} \brk{\cdot | \xbf} || \prob_{\theta} \brk{\cdot | \xbf} } & = \nabla_\theta \sum\nolimits_{\ybf \in \X^{L_{out}}} \prob_{\ppotheta} \brk{\ybf | \xbf} \ln \frac{ \prob_{\ppotheta} \brk{\ybf | \xbf} }{ \prob_{\theta} \brk{\ybf | \xbf} } \\
& = - \nabla_\theta \sum\nolimits_{\ybf \in \X^{L_{out}}} \prob_{\ppotheta} \brk{\ybf | \xbf} \ln \prob_{\theta} \brk{\ybf | \xbf} \\
& \hspace{4mm} + \underbrace{ \nabla_\theta \sum\nolimits_{\ybf \in \X^{L_{out}}} \prob_{\ppotheta} \brk{\ybf | \xbf} \ln \prob_{\ppotheta} \brk{\ybf | \xbf} }_{= 0} \\
& = - \sum\nolimits_{\ybf \in \X^{L_{out}}} \prob_{\ppotheta} \brk{\ybf | \xbf} \sum\nolimits_{l = 1}^{L_{out}} \nabla_\theta \ln \prob_{\theta} \brk{y_l | \xbf, \ybf_{\leq l-1}}
\text{\,.}
\end{split}
\]
For any $l \in \{1, \ldots, L_{out}\}$ and $\ybf \in \X^{L_{out}}$ we have that:
\[
\nabla_\theta \ln \prob_{\theta} \brk{y_l | \xbf, \ybf_{\leq l-1}} = J^\top_{ f ( \xbf, \ybf_{\leq l - 1} ; \theta) } \brk*{ \ebf_{y_l} - \prob_{\theta} \brk{ \cdot | \xbf, \ybf_{\leq l - 1}} }
\text{\,,}
\]
where $\ebf_{y_l} \in \R^{\abs{\X}}$ denotes the $y_l$'th standard basis vector, \ie~the $y_l$'th entry of $\ebf_{y_l}$ is one and its other entries are zero, and $\prob_\theta \brk{ \cdot | \xbf, \ybf_{\leq l - 1} } := \smax \brk{ f \brk{ \xbf, \ybf_{\leq l-1} ; \theta } } \in \R^{\abs{\X}}$.
Thus, a straightforward computation shows:
\[
\begin{split}
 & \nabla_\theta \KL \brk*{ \prob_{\ppotheta} \brk{\cdot | \xbf} || \prob_{\theta} \brk{\cdot | \xbf} } \\
 & \hspace{4mm} = - \sum\nolimits_{l = 1}^{L_{out}} \sum\nolimits_{\ybf_{\leq l - 1} \in \X^{l - 1}} \prob_{\ppotheta} \brk{ \ybf_{\leq l-1} | \xbf} \cdot J^\top_{f(\xbf, \ybf_{\leq l - 1} ; \theta)} \brk*{\prob_{\ppotheta} \brk{ \cdot | \xbf, \ybf_{\leq l-1}} - \prob_{\theta} \brk{ \cdot | \xbf, \ybf_{\leq l-1}} }
\text{\,.}
\end{split}
\]
Then, by the triangle inequality and the fact that:
\[
\begin{split}
& \norm*{J^\top_{f(\xbf, \ybf_{\leq l - 1} ; \theta)} \brk*{\prob_{\ppotheta} \brk{ \cdot | \xbf, \ybf_{\leq l-1}} - \prob_{\theta} \brk{ \cdot | \xbf, \ybf_{\leq l-1}} }} \\
& \hspace{4mm} \leq \norm*{ J^\top_{f(\xbf, \ybf_{\leq l - 1} ; \theta)} 
 }_2 \cdot \norm*{ \brk*{\prob_{\ppotheta} \brk{ \cdot | \xbf, \ybf_{\leq l-1}} - \prob_{\theta} \brk{ \cdot | \xbf, \ybf_{\leq l-1}} } } \\
& \hspace{4mm} \leq \gamma (\xbf; \theta) \cdot \norm*{ \brk*{\prob_{\ppotheta} \brk{ \cdot | \xbf, \ybf_{\leq l-1}} - \prob_{\theta} \brk{ \cdot | \xbf, \ybf_{\leq l-1}} } }_1 
\text{\,,}
\end{split}
\]
where $\norm{\cdot}_1$ denotes the $\ell_1$ norm, we obtain:
\be
\begin{split}
& \norm*{ \nabla_\theta \KL \brk*{ \prob_{\ppotheta} \brk{\cdot | \xbf} || \prob_{\theta} \brk{\cdot | \xbf} } } \\
& \hspace{4mm} \leq \gamma (\xbf; \theta) \sum\nolimits_{l = 1}^{L_{out}} \sum\nolimits_{\ybf_{\leq l - 1} \in \X^{l - 1}} \prob_{\ppotheta} \brk{ \ybf_{\leq l-1} | \xbf} \cdot \norm*{\prob_{\ppotheta} \brk{ \cdot | \xbf, \ybf_{\leq l-1}} - \prob_{\theta} \brk*{ \cdot | \xbf, \ybf_{\leq l-1}} }_1
\text{\,.}
\end{split}
\label{eq:kl_div_grad_norm_bound_interm}
\ee
Notice that for any $l \in \{1, \ldots, L_{out}\}$ and $\ybf_{\leq l-1} \in \X^{l - 1}$:
\[
\begin{split}
& \prob_{\ppotheta} \brk{ \ybf_{\leq l-1} | \xbf} \cdot \norm*{\prob_{\ppotheta} \brk{ \cdot | \xbf, \ybf_{\leq l-1}} - \prob_{\theta} \brk{ \cdot | \xbf, \ybf_{\leq l-1}} }_1 \\
& \hspace{4mm} = \prob_{\ppotheta} \brk{ \ybf_{\leq l-1} | \xbf} \sum\nolimits_{y_l \in \X} \abs*{ \prob_{\ppotheta} \brk{ y_l | \xbf, \ybf_{\leq l-1}} - \prob_{\theta} \brk{ y_l | \xbf, \ybf_{\leq l-1}}  } \\
& \hspace{4mm} = \sum\nolimits_{y_l \in \X} \abs*{ \prob_{\ppotheta} \brk{ \ybf_{\leq l} | \xbf } - \prob_{\ppotheta} \brk{ \ybf_{\leq l-1} | \xbf} \prob_{\theta} \brk*{ y_l | \xbf, \ybf_{\leq l-1}}  } \\
& \hspace{4mm} \leq \sum\nolimits_{y_l \in \X} \brk[s]2{ \abs1{ \prob_{\ppotheta} \brk{ \ybf_{\leq l} | \xbf } - \prob_{\theta} \brk{ \ybf_{\leq l} | \xbf}  } + \prob_{\theta} \brk{ y_l | \xbf, \ybf_{\leq l-1}} \cdot \abs1{ \prob_{\theta} \brk{ \ybf_{\leq l-1} | \xbf} - \prob_{\ppotheta} \brk{ \ybf_{\leq l-1} | \xbf} } } \\
& \hspace{4mm} = \abs1{ \prob_{\theta} \brk{ \ybf_{\leq l-1} | \xbf} - \prob_{\ppotheta} \brk{ \ybf_{\leq l-1} | \xbf} } + \sum\nolimits_{y_l \in \X} \abs1{ \prob_{\ppotheta} \brk{ \ybf_{\leq l} | \xbf } - \prob_{\theta} \brk{ \ybf_{\leq l} | \xbf}  }
\text{\,,}
\end{split}
\]
where the penultimate transition is by adding and subtracting $\prob_{\theta} \brk{ \ybf_{\leq l-1} | \xbf} \prob_{\theta} \brk{ y_l | \xbf, \ybf_{\leq l-1}}$ in each summand and the triangle inequality.
Summing over $\ybf_{\leq l-1} \in \X^{l - 1}$ we thus get:
\[
\begin{split}
& \sum\nolimits_{\ybf_{\leq l - 1} \in \X^{l - 1}} \prob_{\ppotheta} \brk{ \ybf_{\leq l-1} | \xbf} \cdot \norm*{\prob_{\ppotheta} \brk{ \cdot | \xbf, \ybf_{\leq l-1}} - \prob_{\theta} \brk{ \cdot | \xbf, \ybf_{\leq l-1}} }_1 \\
& \hspace{4mm} \leq \sum\nolimits_{\ybf_{\leq l - 1} \in \X^{l - 1}} \abs1{ \prob_{\theta} \brk{ \ybf_{\leq l-1} | \xbf} - \prob_{\ppotheta} \brk{ \ybf_{\leq l-1} | \xbf} } + \sum\nolimits_{\ybf_{\leq l} \in \X^l} \abs1{ \prob_{\ppotheta} \brk{ \ybf_{\leq l} | \xbf } - \prob_{\theta} \brk{ \ybf_{\leq l} | \xbf}  }
\text{\,.}
\end{split}
\]
Both sums in right hand side of the inequality above are equal to twice the total variation distance between marginal distributions of $\prob_\theta \brk{ \cdot | \xbf}$ and $\prob_{\ppotheta} \brk{ \cdot | \xbf}$.
Hence, each one of them is upper bounded by $2 \TV ( \prob_{\theta} \brk{ \cdot | \xbf} || \prob_{\ppotheta} \brk{ \cdot | \xbf} )$, \ie:
\[
\sum\nolimits_{\ybf_{\leq l - 1} \in \X^{l - 1}} \prob_{\ppotheta} \brk{ \ybf_{\leq l-1} | \xbf} \cdot \norm*{\prob_{\ppotheta} \brk{ \cdot | \xbf, \ybf_{\leq l-1}} - \prob_{\theta} \brk{ \cdot | \xbf, \ybf_{\leq l-1}} }_1 \leq 4 \TV ( \prob_{\theta} \brk{ \cdot | \xbf} || \prob_{\ppotheta} \brk{ \cdot | \xbf} )
\text{\,.}
\]
Going back to~\cref{eq:kl_div_grad_norm_bound_interm}:
\[
\norm*{ \nabla_\theta \KL \brk*{ \prob_{\ppotheta} \brk{\cdot | \xbf} || \prob_{\theta} \brk{\cdot | \xbf} } } \leq 4 L_{out} \gamma (\xbf; \theta) \cdot \TV ( \prob_{\theta} \brk{ \cdot | \xbf} || \prob_{\ppotheta} \brk{ \cdot | \xbf} )
\text{\,,}
\]
from which it readily follows that
\[
\begin{split}
\norm*{ \nabla_\theta \ppoklobj (\xbf; \theta) - \nabla_\theta \rftobj(\xbf; \theta) } & \leq \lambda \cdot \norm*{ \nabla_\theta \KL \brk*{ \prob_{\ppotheta} \brk{\cdot | \xbf} || \prob_{\theta} \brk{\cdot | \xbf} } } \\
& \leq 4 \lambda L_{out} \gamma (\xbf; \theta) \cdot \TV ( \prob_{\theta} \brk{ \cdot | \xbf} || \prob_{\ppotheta} \brk{ \cdot | \xbf} )
\text{\,.}
\end{split}
\]
\qed

\subsection{Proof of~\cref{prop:linear_rft_sft_optim_separation}}
\label{app:proofs:linear_rft_sft_optim_separation}

The proof is sectioned into two parts.
We first derive the lower bound on the time until correct classification of $\xbf_n$ for RFT, after which we prove the upper bound for SFT.
For simplicity, we assume without loss of generality that $\xbf_1 = \ebf_1, \ldots, \xbf_N = \ebf_N$, with $\ebf_1, \ldots, \ebf_N$ being the first $N$ standard basis vectors of $\R^D$.
The analysis for arbitrary orthonormal inputs $\xbf_1, \ldots, \xbf_N$ follows from analogous arguments, by tracking the evolution of inner products between rows of $\theta$ and $\xbf_1, \ldots, \xbf_N$ instead of individual entries of $\theta$.

\subsubsection{Optimization Time Lower Bound for RFT}

Denote by $w_{\ybf, d} (t) \in \R$ the $(\ybf, d)$'th entry of $\rfttheta (t)$, for $\ybf \in \{1, \ldots, K\}, d \in \{1, \ldots, D\}$.
Under this notation, since $\xbf_{n'} = \ebf_{n'}$ we have that $f (\xbf_{n'}; \rfttheta (t)) = \rfttheta (t) \xbf_{n'} = \brk{ w_{1, n'} (t), \ldots, w_{K, n'} (t) }^\top$ for $n' \in \{1, \ldots, N\}$.
Now, recalling that $r (\xbf_{n'}, \ybf_{n'} ) = 1$ and $r (\xbf_{n'}, \ybf) = -1$ for all $\ybf \neq \ybf_{n'}$, the expected reward at time $t \geq 0$ can be written as:
\[
\begin{split}
\rftobj (\rfttheta (t)) & = \frac{1}{N} \sum\nolimits_{n' = 1}^N \sum\nolimits_{\ybf = 1}^{K} \prob_{\rfttheta (t)} \brk{ \ybf | \xbf_{n'} } r(\xbf_{n'}, \ybf) \\
&= \frac{1}{N} \sum\nolimits_{n' = 1}^N \brk[s]*{ \prob_{\rfttheta (t)} \brk{ \ybf_{n'} | \xbf_{n'} }  - \brk*{1 - \prob_{\rfttheta (t)} \brk{ \ybf_{n'} | \xbf_{n'} } } } \\
& = \frac{1}{N} \sum\nolimits_{n' = 1}^N \brk[s]*{ \frac{\exp ( w_{\ybf_{n'}, n'} (t) ) }{ \sum\nolimits_{\ybf = 1}^K \exp ( w_{\ybf, n'} (t) ) } - \brk*{ 1 - \frac{\exp ( w_{\ybf_{n'}, n'} (t) ) }{ \sum\nolimits_{\ybf = 1}^K \exp ( w_{\ybf, n'} (t) ) } } } \\
& = \frac{1}{N} \sum\nolimits_{n' = 1}^N \brk[s]*{ \frac{2 \exp ( w_{\ybf_{n'}, n'} (t) ) }{ \sum\nolimits_{\ybf = 1}^K \exp ( w_{\ybf, n'} (t) ) } - 1 }
\text{\,.}
\end{split}
\]
By introducing the shorthand $z_{n'} (t) := \sum\nolimits_{\ybf = 1}^K \exp ( w_{\ybf, n'} (t) )$, we have that:
\[
\rftobj (\rfttheta (t)) =\frac{1}{N} \sum\nolimits_{n' = 1}^N \brk[s]*{ \frac{2 \exp ( w_{\ybf_{n'}, n'} (t) ) }{ z_{n'} (t) } - 1 }
\text{\,.}
\]
Focusing on $\xbf_n = \ebf_n$, under gradient flow the parameters $w_{1, n} (t), \ldots,  w_{K, n} (t)$, which determine the output distribution given $\xbf_n$, obey the following differential equations:
\be
\frac{d}{dt} w_{\ybf, n} (t) = \nabla_{w_{\ybf, n}} \rftobj (\rfttheta (t)) = \begin{cases}
\frac{2}{N} \brk[s]*{ \frac{ \exp ( w_{\ybf_n, n} (t) ) }{ z_n (t) } \brk*{ 1 -  \frac{ \exp ( w_{\ybf_n, n} (t) ) }{ z_n (t) } } }    &,~ \ybf = \ybf_n \\[4mm]
- \frac{2}{N} \cdot \frac{ \exp ( w_{\ybf_n, n} (t) ) \exp ( w_{\ybf, n} (t) ) }{ z_n (t)^2 }  &,~ \ybf \neq \ybf_n
\end{cases}
\text{\,,}
\label{eq:gf_rft_ode}
\ee
for all $\ybf \in \{1, \ldots, K\}$.
As can be seen from~\cref{eq:gf_rft_ode}, the evolution of $w_{1, n} (t), \ldots,  w_{K, n} (t)$ is independent of the remaining parameters (this is a result of the orthogonality assumption on $\xbf_1, \ldots, \xbf_N$).
Furthermore, the differential equation governing each parameter $w_{\ybf, n} (t)$ corresponding to an incorrect label $\ybf \neq \ybf_n$ is the same.
Since, by assumption, the softmax probabilities of the incorrect labels for $\xbf_n$ are equal under $\rfttheta (0)$, we know that $w_{\ybf, n} (0) = w_{\ybf', n} (0)$ for all $\ybf, \ybf' \neq \ybf_n$.
Hence, the uniqueness of the solution to~\cref{eq:gf_rft_ode} for an initial condition implies that $w_{\ybf, n} (t) = w_{\ybf', n} (t)$ for all $t \geq 0$ and $\ybf, \ybf' \neq \ybf_n$.
Meaning, the parameters corresponding to incorrect labels of $\xbf_n$ are equal throughout optimization.

Given this observation, we may denote $v(t) := w_{\ybf, n} (t)$ for any $\ybf \neq \ybf_{n}$.
The dynamics for the $n$'th column of $\rfttheta (t)$ thus boil down to:
\be
\begin{split}
\frac{d}{dt} w_{\ybf_n, n} (t) & = \frac{ 2 (K - 1) \exp ( w_{\ybf_n, n} (t) + v(t) ) }{ N \brk[s]*{ 
\exp ( w_{\ybf_n, n} (t)) + (K - 1) \exp (v(t)) }^2} \text{\,,} \\
\frac{d}{dt} v(t) & = - \frac{ 2 \exp ( w_{\ybf_n, n} (t) + v(t) ) }{ N \brk[s]*{ 
\exp ( w_{\ybf_n, n} (t)) + (K - 1) \exp (v(t)) }^2} \text{\,,}
\end{split}
\label{eq:gf_rft_ode_simplified_notation}
\ee
where we used the fact that $z(t) = \exp (w_{\ybf_n, n} (t)) + (K - 1) \exp (v(t))$.

Let $\mu (t) := w_{\ybf_n, n} (t) - v(t)$.
Dividing the nominator and denominator of the terms on the right hand side in~\cref{eq:gf_rft_ode_simplified_notation} by $\exp (2 v(t))$, we may describe the dynamics of $\mu (t)$ through a single differential equation:
\[
\frac{d}{dt} \mu (t) = \frac{d}{dt} w_{\ybf_n, n} (t) - \frac{d}{dt} v(t) = \frac{2K \exp (\mu (t)) }{N \brk[s]*{ \exp (\mu (t)) + K - 1 }^2 }
\text{\,.}
\]
Equivalently, this can be written as:
\be
\frac{d}{dt} \mu (t) = \frac{2K}{N \brk[s]*{ \exp (\mu (t)) + 2(K - 1) + (K - 1)^2 \exp (- \mu (t)) } }
\text{\,.}
\label{eq:rft_mu_ode}
\ee
It can be verified by differentiation that the solution to the above differential equation satisfies:
\[
\begin{split}
& \frac{2K}{N} t + \exp (\mu (0)) + 2 (K - 1) \mu (0) - (K - 1)^2 \exp ( -\mu (0)) \\
& \hspace{4mm} = \exp (\mu (t)) + 2 (K - 1) \mu (t) - (K - 1)^2 \exp (- \mu (t))
\text{\,.}
\end{split}
\]
Notice that by~\cref{eq:rft_mu_ode} we have $\frac{d}{dt} \mu (t) > 0$, and so $\mu (t)$ is monotonically increasing.
Since $\theta^{(0)}$ misclassifies $\xbf_n$, \ie~$\mu (0) < 0$, this implies that $\rfttime$~---~the initial time at which RFT classifies $\xbf_n$ correctly~---~is the unique time at which $\mu (t) = 0$.
Such a time must exist since $\frac{d}{dt} \mu (t)$ is bounded away from zero when $\mu (t) \in [\mu (0), 0]$.
It therefore follows that:
\be
\begin{split}
	\rfttime & = \frac{N (K - 1)^2}{ 2K } \exp \brk*{ - \mu (0) } - \frac{N (K - 1)^2}{ 2K } + \frac{N}{2K} \brk*{ 1 - \exp \brk*{ \mu (0) } } - \frac{ N (K - 1)}{ K} \mu (0) \\
	& \geq \frac{ N (K - 1)^2 }{ 2K } \cdot \brk*{ \exp (- \mu (0) )  - 1 }
	\text{\,,}
\end{split}
\label{eq:rft_mu_lower_boundl}
\ee
where the inequality is due to $\mu (0) < 0$ and $1 - \exp (\mu (0)) > 0$.

Now, recalling that $\exp (w_{\ybf_n, n} (t)) / z_n (t)$ is the probability for outputing the correct class, which receives a reward of $1$, while incorrect labels receive a reward of $-1$, the standard deviation of the reward for $\xbf_n$ at time $t$ is given by:
\[
\begin{split}
\std\nolimits_{\ybf \sim \prob_{\rfttheta (t)} \brk{\ybf | \xbf_n }} \brk[s]*{ r (\xbf_n, \ybf)} & = \sqrt{ 1 - \brk[s]*{ \frac{ \exp ( w_{\ybf_n, n} (t) ) }{ z_n (t) } - \brk*{1 -  \frac{ \exp ( w_{\ybf_n, n} (t) ) }{ z_n (t) } } }^2 } \\ 
& = \sqrt{ \frac{ 4 (K - 1) \exp ( w_{\ybf_n, n} (t) + v(t) ) }{ z_n (t)^2 } } \\
& = \sqrt{ \frac{ 4 (K - 1) }{ \exp (\mu (t)) + 2 (K - 1) + (K - 1)^2 \exp (- \mu (t)) } }
\text{\,,}
\end{split}
\]
where the first transition is due to fact that the standard deviation of a random variable that is $1$ with probability $q$ and $-1$ otherwise is $\sqrt{1 - (q - (1 - q))^2}$, and the last transition is by dividing the nominator and denominator of the fraction by $\exp (w_{\ybf_n, n} (t) +  v(t))$ and plugging in $\mu (t) = w_{\ybf_n, n} (t) - v(t)$.
Specifically, at time $t = 0$:
\be
\frac{1}{ \std\nolimits_{\ybf \sim \prob_{\theta^{(0)}} \brk{\ybf | \xbf_n }} \brk[s]*{ r (\xbf_n, \ybf)} } = \sqrt{ \frac{ \exp (\mu (0)) + 2 (K - 1) + (K - 1)^2 \exp (- \mu (0))  }{ 4 (K - 1) } }
\text{\,.}
\label{eq:reward_std_at_init}
\ee
Applying $\exp (\mu (0)) 
 < 1 < 2 (K - 1)$ and squaring both sides of the inequality leads to:
\[
\frac{1}{ \std\nolimits_{\ybf \sim \prob_{\theta^{(0)}} \brk{\ybf | \xbf_n }} \brk[s]*{ r (\xbf_n, \ybf)}^2  } <  1 + \frac{(K - 1)}{4} \exp (- \mu (0) )
\text{\,,}
\]
which implies:
\[
\exp (- \mu (0) ) >  \frac{4}{K - 1} \cdot \brk*{ \frac{1}{ \std\nolimits_{\ybf \sim \prob_{\theta^{(0)}} \brk{\ybf | \xbf_n }} \brk[s]*{ r (\xbf_n, \ybf)}^2  } - 1 }
\text{\,.}
\]
Combining the inequality above with~\cref{eq:rft_mu_lower_boundl}, we may conclude:
\[
\begin{split}
\rfttime & \geq \frac{ 2 N (K - 1)}{ K } \cdot \brk*{ \frac{ 1 }{ \std\nolimits_{\ybf \sim \prob_{\theta^{(0)}} \brk{ \cdot | \xbf_n} } \brk[s]{ r (\xbf_n, \ybf) }^2 } - 1 - \frac{K - 1}{4} } \\
& = \Omega \brk3{ \frac{ 1 }{ \std\nolimits_{\ybf \sim \prob_{\theta^{(0)}} \brk{ \cdot | \xbf_n} } \brk[s]{ r (\xbf_n, \ybf) }^2 } } 
\text{\,.}
\end{split}
\]

\subsubsection{Optimization Time Upper Bound for SFT}

Similarly to the analysis for RFT, we denote by $w_{\ybf, d} (t) \in \R$ the $(\ybf, d)$'th entry of $\sfttheta (t)$, for $\ybf \in \{1, \ldots, K\}, d \in \{1, \ldots, D\}$, and let $z_{n'} (t) = \sum\nolimits_{\ybf = 1}^K \exp ( w_{\ybf, n'} (t))$ for $n' \in \{1, \ldots, N\}$
Under this notation, the SFT loss is:
\[
\begin{split}
\sftobj (\sfttheta (t)) & = - \frac{1}{N} \sum\nolimits_{n' = 1}^N \ln \brk*{ \prob_{\sfttheta (t)} \brk{ \ybf_n | \xbf_{n'}} } \\
& = - \frac{1}{N} \sum\nolimits_{n' = 1}^N \ln \brk*{ \frac{ \exp (w_{\ybf_{n'}, n'} (t) }{ z_{n'} (t) } }
\text{\,.}
\end{split}
\]
Focusing on $\xbf_n = \ebf_n$, the parameters $w_{1, n} (t), \ldots,  w_{K, n} (t)$, which determine the output distribution given $\xbf_n$, obey the following differential equations:
\be
\frac{d}{dt} w_{\ybf, n} (t) = - \nabla_{w_{\ybf, n}} \sftobj (\sfttheta (t)) = \begin{cases}
\frac{1}{N} \brk*{ 1 -  \frac{ \exp ( w_{\ybf_n, n} (t) ) }{ z_n (t) } }  &,~ \ybf = \ybf_n \\[4mm]
- \frac{1}{N} \cdot \frac{ \exp ( w_{\ybf, n} (t) ) }{ z_n (t) }  &,~ \ybf \neq \ybf_n
\end{cases}
\text{\,,}
\label{eq:gf_sft_ode}
\ee
for all $\ybf \in \{1, \ldots, K\}$.
As in RFT, the evolution of $w_{1, n} (t), \ldots,  w_{K, n} (t)$ is independent of the remaining parameters due to the orthogonality assumption on $\xbf_1, \ldots, \xbf_N$.
Furthermore, every $w_{\ybf, n} (t)$ corresponding to an incorrect label $\ybf \neq \ybf_n$ is governed by the same differential equation, and, by assumption, initially $w_{\ybf, n} (0) = w_{\ybf', n} (0)$ for all $\ybf, \ybf' \neq \ybf_n$.
As a result, the uniqueness of the solution to~\cref{eq:gf_sft_ode} for an initial condition implies that $w_{\ybf, n} (t) = w_{\ybf', n} (t)$ for all $t \geq 0$ and $\ybf, \ybf' \neq \ybf_n$.

Denoting $v(t) := w_{\ybf, n} (t)$ for some $\ybf \neq \ybf_{n}$, the dynamics for the $n$'th column of $\sfttheta (t)$ are therefore given by:
\be
\begin{split}
\frac{d}{dt} w_{\ybf_n, n} (t) & = \frac{ (K - 1) \exp ( v(t) - w_{\ybf_n, n} (t) ) }{ N \brk[s]*{ 1 + (K  - 1) \exp ( v(t) - w_{\ybf_n, n} (t) ) } } \text{\,,} \\
\frac{d}{dt} v(t) & = - \frac{ \exp ( v(t) - w_{\ybf_n, n} (t) ) }{ N \brk[s]*{ 1 + (K - 1) \exp ( v(t) - w_{\ybf_n, n} (t) ) } } \text{\,,}
\end{split}
\label{eq:gf_sft_ode_simplified_notation}
\ee
Let $\mu (t) := w_{\ybf_n, n} (t) - v(t)$.
We can express the dynamics of $\mu (t)$ through:
\[
\frac{d}{dt} \mu (t) = \frac{d}{dt} w_{\ybf_n, n} (t) - \frac{d}{dt} v(t) = \frac{ K }{N \brk[s]*{ \exp ( \mu (t) ) + (K - 1) } }
\text{\,,}
\]
for which it can be verified by differentiation that the unique solution satisfies:
\be
\frac{K}{N} t + (K - 1) \mu (0) + \exp (\mu (0)) = (K - 1) \mu (t) + \exp ( \mu (t) )
\text{\,.}
\label{eq:sft_mu_sol}
\ee

We are now in a position to extract from~\cref{eq:sft_mu_sol} an upper bound on $\sfttime$~---~the initial time at which SFT classifies $\xbf_n$ correctly.
Notice that $\frac{d}{dt} \mu (t) > 0$, and so $\mu (t)$ is monotonically increasing.
Since $\theta^{(0)}$ misclassifies $\xbf_n$, \ie~$\mu (0) < 0$, this implies that $\sfttime$ is the unique time at which $\mu (t) = 0$.
Note that such a time must exist since $\frac{d}{dt} \mu (t)$ is bounded away from zero when $\mu (t) \in [\mu (0), 0]$.
Thus, from~\cref{eq:sft_mu_sol} it follows that:
\be
\begin{split}
\sfttime & = \frac{ N \brk[s]*{ 1 - (K - 1) \mu (0) - \exp ( \mu (0) ) } }{ K } \\
& < \frac{ N }{K } - \frac{ N ( K - 1) }{K} \mu (0)
\text{\,.}
\end{split}
\label{eq:sft_time_upper_bound_interm}
\ee
where the inequality is due to $1 - \exp (\mu (0)) < 1$.
Recall from~\cref{eq:reward_std_at_init} that:
\[
\begin{split}
\frac{1}{ \std\nolimits_{\ybf \sim \prob_{\theta^{(0)}} \brk{\ybf | \xbf_n }} \brk[s]*{ r (\xbf_n, \ybf)} } & = \sqrt{ \frac{ \exp (\mu (0)) + 2 (K - 1) + (K - 1)^2 \exp (- \mu (0))  }{ 4 (K - 1) } } \\
& \geq \sqrt{ \frac{ (K - 1) \exp (- \mu (0))  }{ 4 } } 
\text{\,.}
\end{split}
\]
Taking the natural logarithm of both sides and rearranging the inequality yields:
\[
- \mu (0) \leq \ln \brk*{ \frac{4}{ (K - 1) \std\nolimits_{\ybf \sim \prob_{\theta^{(0)}} \brk{\ybf | \xbf_n }} \brk[s]*{ r (\xbf_n, \ybf)}^2 } }
\text{\,.}
\]
Hence, combined with~\cref{eq:sft_time_upper_bound_interm}, we arrive at the desired result:
\[
\begin{split}
\sfttime & < \frac{N}{K} + \frac{N (K - 1)}{K} \cdot \ln \brk*{ \frac{4}{ (K - 1) \std\nolimits_{\ybf \sim \prob_{\theta^{(0)}} \brk{\ybf | \xbf_n }} \brk[s]*{ r (\xbf_n, \ybf)}^2 } } \\
& = \OO \brk2{ \ln \brk2{ \frac{ 1 }{ \std\nolimits_{\ybf \sim \prob_{\theta^{(0)}} \brk{ \cdot | \xbf_n} } \brk[s]{ r (\xbf_n, \ybf) } } } } 
\text{\,.}
\end{split}
\]
\qed

        \section{Examples of Inputs with Small and Inputs with Large Reward Standard Deviation Under the Pretrained Language Model}
\label{app:text_samples}

\cref{fig:narqa_examples,,fig:totto_examples,fig:commongen_examples} give examples from the NarrativeQA, ToTTo, and CommonGen datasets, respectively, of inputs with small and inputs with large pretrain reward standard deviation.
We also include outputs produced by the pretrained, RFT, and SFT models along with their rewards.
The examples showcase how: \emph{(i)} the pretrained model often generates continuations for the input text instead of obliging to the task at hand; and \emph{(ii)} RFT struggles to maximize the reward of inputs with a small pretrain reward standard deviation compared to those with a large pretrain reward standard deviation.

\begin{figure*}[t]
	\vspace{0mm}
	\begin{center}
		\includegraphics[width=1\textwidth]{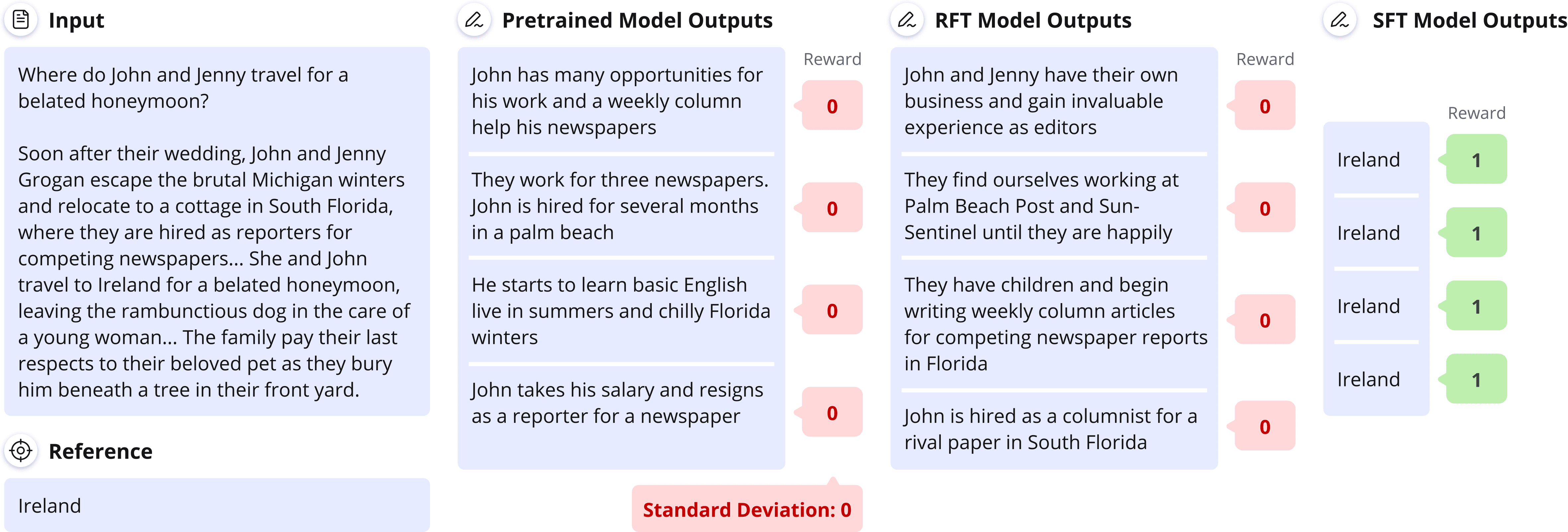} \\
            \vspace{5mm}
  		\includegraphics[width=1\textwidth]{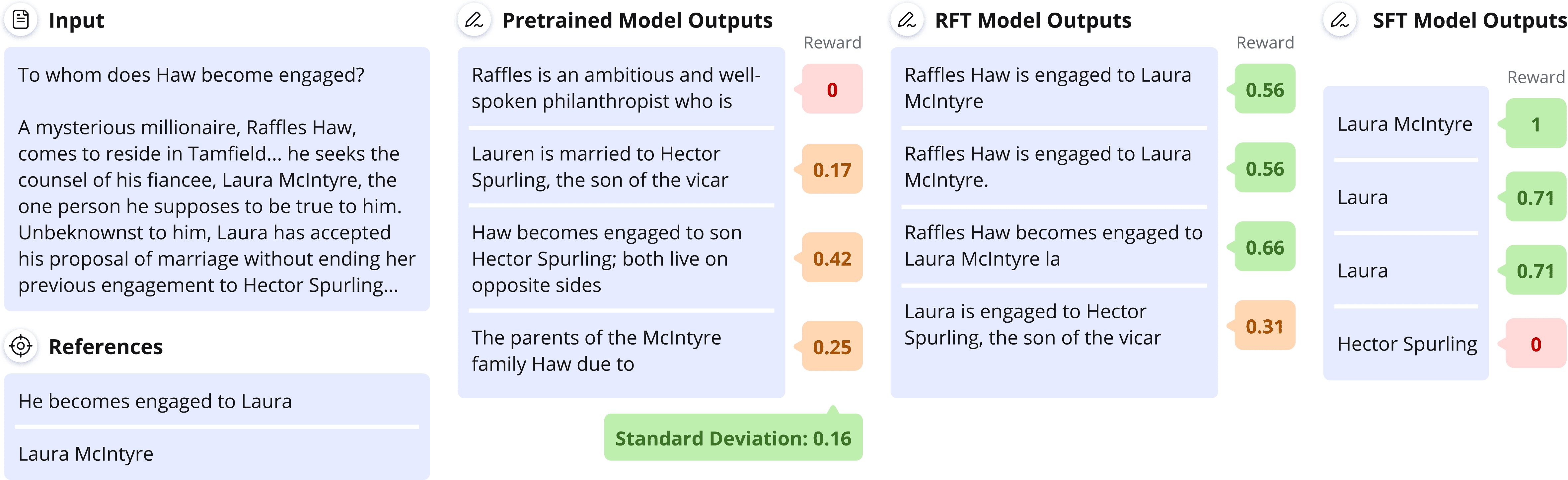}
	\end{center}
	\vspace{-2mm}
	\caption{
            Representative example of an input with small (top) and an input with large (bottom) reward standard deviation under the pretrained model from the NarrativeQA dataset.
        }
	\label{fig:narqa_examples}
\end{figure*}

\begin{figure*}[t]
	\vspace{0mm}
	\begin{center}
 	\includegraphics[width=1\textwidth]{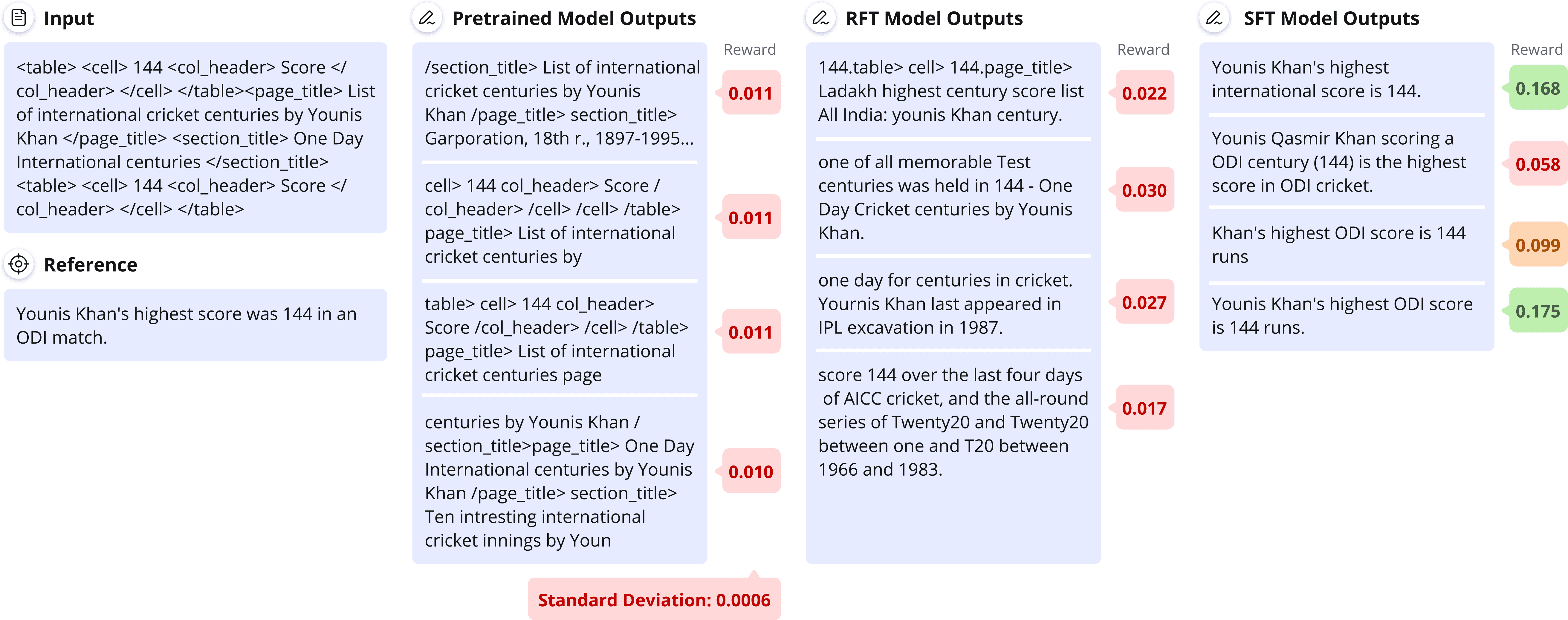} \\
        \vspace{5mm}
        \includegraphics[width=1\textwidth]{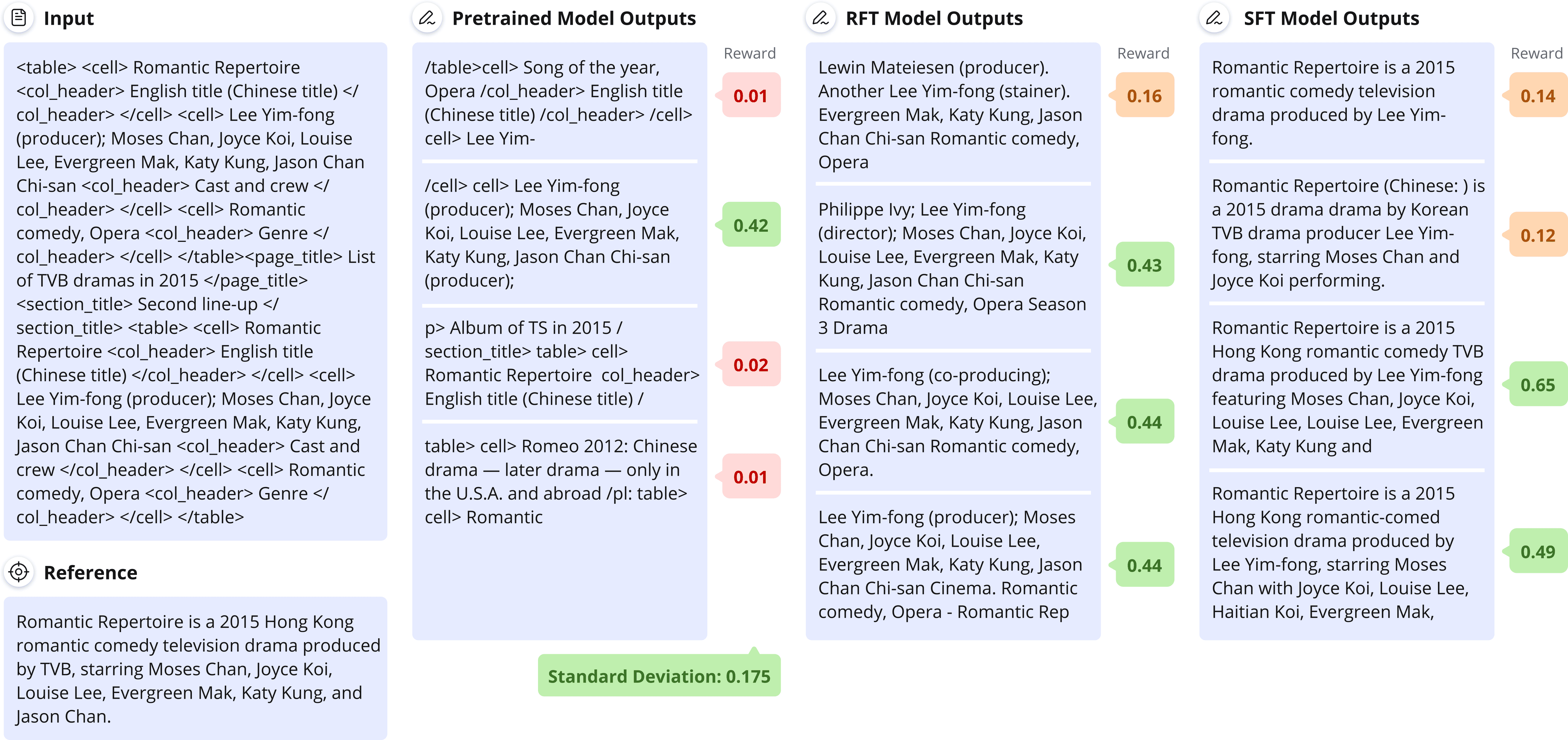}
        \end{center}
	\vspace{-2mm}
	\caption{
	   Representative example of an input with small (top) and an input with large (bottom) reward standard deviation under the pretrained model from the ToTTo dataset.
        }
\label{fig:totto_examples}
\end{figure*}

\begin{figure*}[t]
	\vspace{0mm}
	\begin{center}
 	\includegraphics[width=1\textwidth]{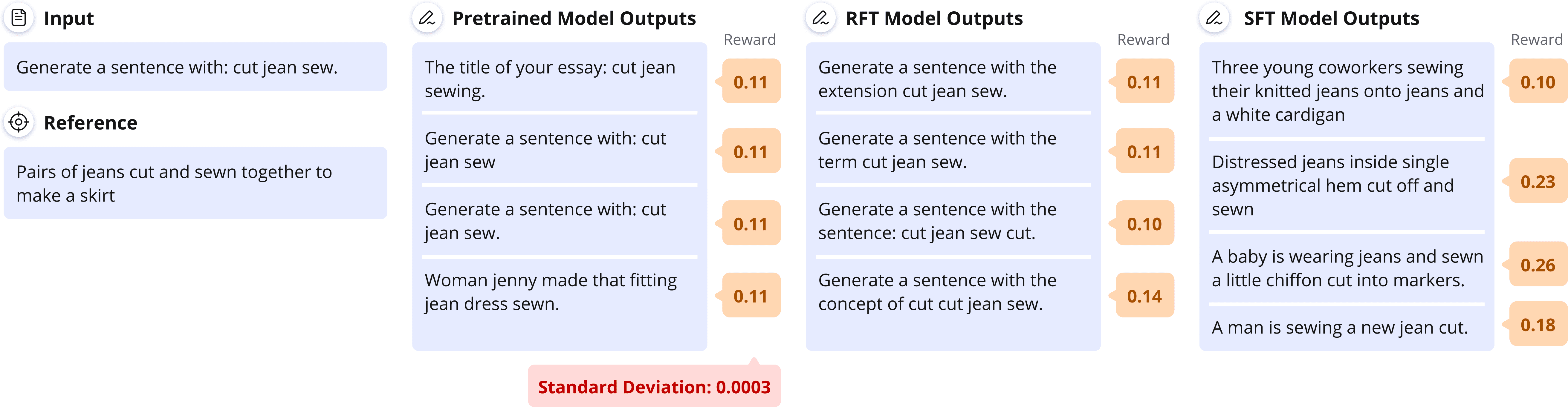} \\
        \vspace{5mm}
        \includegraphics[width=1\textwidth]{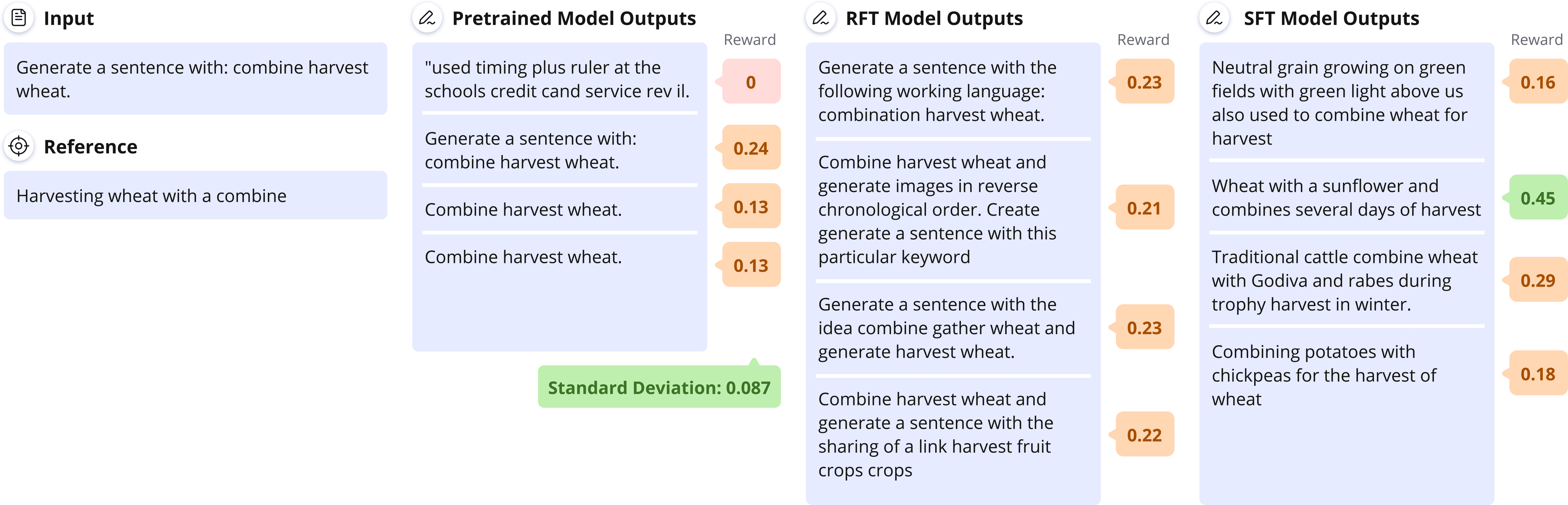}
        \end{center}
	\vspace{-2mm}
	\caption{
	   Representative example of an input with small (top) and an input with a relatively large (bottom) reward standard deviation under the pretrained model from the CommonGen dataset.
        }
\label{fig:commongen_examples}
\end{figure*}

	\section{Further Experiments and Implementation Details}
\label{app:experiments}

\subsection{Further Experiments}
\label{app:experiments:further}

Listed below are additional experiments, omitted from~\cref{sec:evidence,sec:overcoming}.

\textbf{Vanishing gradients in the GRUE benchmark (\cref{sec:evidence:grue}):}

\begin{itemize}[leftmargin=2em]
	\vspace{-2mm}
    \item \cref{fig:grue_reward_mean_std_scatter_fig_app} supplements~\cref{fig:grue_reward_mean_std_scatter_fig}, by plotting the reward mean and standard deviation of train samples, under the pretrained, RFT, and SFT models, for the remaining GRUE datasets: CommonGen, IWSLT 2017, CNN/Daily Mail, and DailyDialog.

    \item \cref{table:reward_std_reward_change_pearson_app} supplements~\cref{table:reward_std_reward_change_pearson} by reporting, for the remaining GRUE datasets (see above), the Pearson correlation between the reward standard deviation under the pretrained model and the absolute reward mean change due to finetuning, across the train samples.

    \item \cref{fig:rf_sf_rew_diff_vs_pre_reward_sd_percentiles} supplements~\cref{fig:rf_sf_rew_diff_vs_pre_reward_sd}, by showing that the correlation between the pretrain reward standard deviation percentile and the reward difference between RFT and SFT is not sensitive to the choice of percentile.

    \item \cref{table:grue_rft_sft_reward} reports the train and test rewards that the pretrained, RFT, and SFT models obtain over each of the considered GRUE datasets.

    \item \cref{fig:grue_reward_mean_std_scatter_fig_nlpo},~\cref{table:reward_std_reward_change_pearson_nlpo}, and \cref{fig:rf_sf_rew_diff_vs_pre_reward_sd_nlpo} extend~\cref{fig:grue_reward_mean_std_scatter_fig},~\cref{table:reward_std_reward_change_pearson}, and~\cref{fig:rf_sf_rew_diff_vs_pre_reward_sd}, respectively, by considering RFT runs carried out using NLPO~\citep{ramamurthy2023reinforcement}, an adaptation of PPO for language generation tasks, instead of PPO.
\end{itemize}

\textbf{Controlled experiments (\cref{sec:evidence:controlled}):}

\begin{itemize}[leftmargin=2em]
	\vspace{-2mm}
    \item \cref{fig:sft_vs_rft_controlled_exps_sgd} presents experiments identical to~\cref{fig:sft_vs_rft_controlled_exps}, except that RFT and SFT were optimized using stochastic gradient descent (SGD) as opposed to Adam.

    \item \cref{fig:sft_vs_rft_controlled_exps_high_init_reward} is identical to~\cref{fig:sft_vs_rft_controlled_exps}, except that the pretrain expected reward of inputs with small pretrain reward standard deviation is $0.5$ instead of $-1$, \ie~it is relatively high at the start of finetuning.
\end{itemize}

\textbf{Overcoming vanishing gradients in RFT (\cref{sec:overcoming}):}

\begin{itemize}[leftmargin=2em]
	\vspace{-2mm}
    \item \cref{fig:narqa_rft_hyperparam_reward_mean_std_scatter_fig} presents reward means and standard deviations of individual train samples from NarrativeQA (analogously to~\cref{fig:grue_reward_mean_std_scatter_fig}) after applying RFT with an increased learning rate, temperature, or entropy regularization coefficient.
    
    \item \cref{table:totto_commongen_rft_hyperparam} supplements~\cref{table:narqa_rft_hyperparam} by including results for identical experiments on the ToTTo and CommonGen datasets.

    \item \cref{fig:narqa_partial_sft_app} supplements~\cref{fig:narqa_partial_sft} by reporting the same metrics based on the mean test reward, in addition to the metrics based on the mean train reward, for the partial SFT experiments on NarrativeQA.

    \item \cref{fig:narqa_partial_sft_reward_mean_std_scatter} compares the reward means and standard deviations of individual train samples (analogously to~\cref{fig:grue_reward_mean_std_scatter_fig}), for the NarrativeQA dataset, after performing a full SFT phase and after a partial SFT phase with $40\%$ of the optimization steps and $1\%$ of the labeled samples.

    \item \cref{fig:totto_partial_sft_app,fig:commongen_partial_sft_app} supplement~\cref{fig:narqa_partial_sft} by including results for identical experiments on the ToTTo and CommonGen datasets, respectively.
\end{itemize}

\subsection{Further Implementation Details}
\label{app:experiments:details}

We provide implementation details omitted from our experimental reports (\cref{sec:evidence,sec:overcoming,app:experiments:further}).
Code for reproducing our results, based on the PyTorch~\citep{paszke2017automatic}, HuggingFace~\citep{wolf2019huggingface}, and RL4LMs~\citep{ramamurthy2023reinforcement} libraries, can be found at \url{https://github.com/apple/ml-rlgrad}.
Each finetuning experiment was run on four Nvidia A100 80GB GPUs, and each experiment from~\cref{sec:evidence:controlled} was run on a single Nvidia V100 32GB GPU, except for those with MLPs for which we used a standard laptop.

We note that, since the test sets of ToTTo and CommonGen are not publicly available, we report results over their validation sets instead.
This does not pose an issue as our interest lies in optimization, \ie~the ability to achieve a higher reward over the train set, and we do not conduct any hyperparameter tuning beyond taking the default values from~\citet{ramamurthy2023reinforcement}.

\subsubsection{Experiments on the GRUE Benchmark (\cref{sec:evidence:grue,sec:overcoming})}
\label{app:experiments:details:grue}

\textbf{Modifications to the default setup from~\citet{ramamurthy2023reinforcement}:}
In our experiments with the GRUE benchmark we 
followed the experimental setup of~\citet{ramamurthy2023reinforcement}, including default hyperparameters for RFT and SFT, up to the following adjustments.

\begin{itemize}[leftmargin=2em]
	\vspace{-1.5mm}
    \item For the NarrativeQA dataset,~\citet{ramamurthy2023reinforcement} use an SFT model that was finetuned on multiple question answering datasets.
    In contrast, we finetune it only on NarrativeQA for a fairer comparison with RFT, which does not have access to the additional datasets.

    \item For the IMDB dataset,~\citet{ramamurthy2023reinforcement} perform SFT using positively labeled train samples only, whereas RFT is performed over the whole training set.
    For a fairer comparison, we use for RFT the same train samples used for SFT.

    \item For the CommonGen dataset, we do not apply a repeat penalty to the reward function.
\end{itemize}

\textbf{Reward functions:}
The GRUE benchmark suggests a few reward functions for each dataset, from which we used those specified in \cref{table:reward_funcs}.

\begin{table*}[t]
	     \caption{
			The reward functions used in our experiments for datasets from the GRUE benchmark.
		}
        \vspace{-2mm}
        \begin{center}
        \fontsize{9}{13}\selectfont
        \begin{tabular}{ll}
            \toprule
            Dataset & Reward \\
            \midrule
            NarrativeQA & ROUGE-L-Max~\citep{lin2004rouge} \\
            ToTTo & SacreBLEU~\citep{post2018call} \\
            CommonGen & METEOR~\citep{banerjee2005meteor} \\
            IWSLT 2017 & SacreBLEU \\
            CNN/Daily Mail & METEOR \\
            DailyDialog & METEOR \\
            IMDB & Learned  sentiment classifier~\citep{ramamurthy2023reinforcement} \\
            \bottomrule
        \end{tabular}
        \end{center}
    	\label{table:reward_funcs}
\end{table*}

\subsubsection{Controlled Experiments (\cref{sec:evidence:controlled})}
\label{app:experiments:details:controlled}

\textbf{Models:}
We used the default ResNet18 and BERT-mini implementations from the PyTorch and HuggingFace libraries, respectively.
The MLP had two hidden layers, of sizes $250$ and $100$, with ReLU non-linearity.

\textbf{Data:}
To facilitate more efficient experimentation, we subsampled the MNIST and CIFAR10 training sets, choosing uniformly at random $1000$ samples from MNIST and $10,\!000$ from CIFAR10.
For pretraining, we assigned $4$ additional labels for each sample in MNIST and CIFAR10, \ie~each sample had a total of $5$ pretraining labels, while each sample in STS-B was assigned $2$ additional labels, \ie~a total of $3$ pretraining labels.
During finetuning, every train sample was assigned a single, randomly chosen, ground truth label.
For $10\%$ of the samples the label was chosen such that it was not included in their pretraining labels.
As a result, their pretrain reward standard deviation was small.
The rest of the samples were assigned a label such that it was one of their pretraining labels.
Meaning, their pretrain reward standard deviation was relatively large.

\textbf{Optimization:}
In the experiments of~\cref{fig:sft_vs_rft_controlled_exps,fig:sft_vs_rft_controlled_exps_high_init_reward}, the cross-entropy loss was minimized via the Adam optimizer with learning rate $0.0001$, default $\beta_1, \beta_2$ coefficients, and a batch size of $512$.
Pretraining and finetuning were carried out for $1000$ and $5000$ epochs, respectively.
For~\cref{fig:sft_vs_rft_controlled_exps_sgd} we used an identical setup, except that Adam was replaced by SGD with learning rate $0.01$ for the MLP and ResNet18 models.
To improve optimization stability, for BERT-mini we reduced the learning rate to $0.001$.
Varying the learning rate (for both experiments with Adam and SGD) did not yield a noticeable improvement in the ability of RFT to maximize the reward of train samples with small pretrain reward standard deviation.

\clearpage

\begin{figure*}[t]
	\vspace{0mm}
	\begin{center}
            \includegraphics[width=1\textwidth]{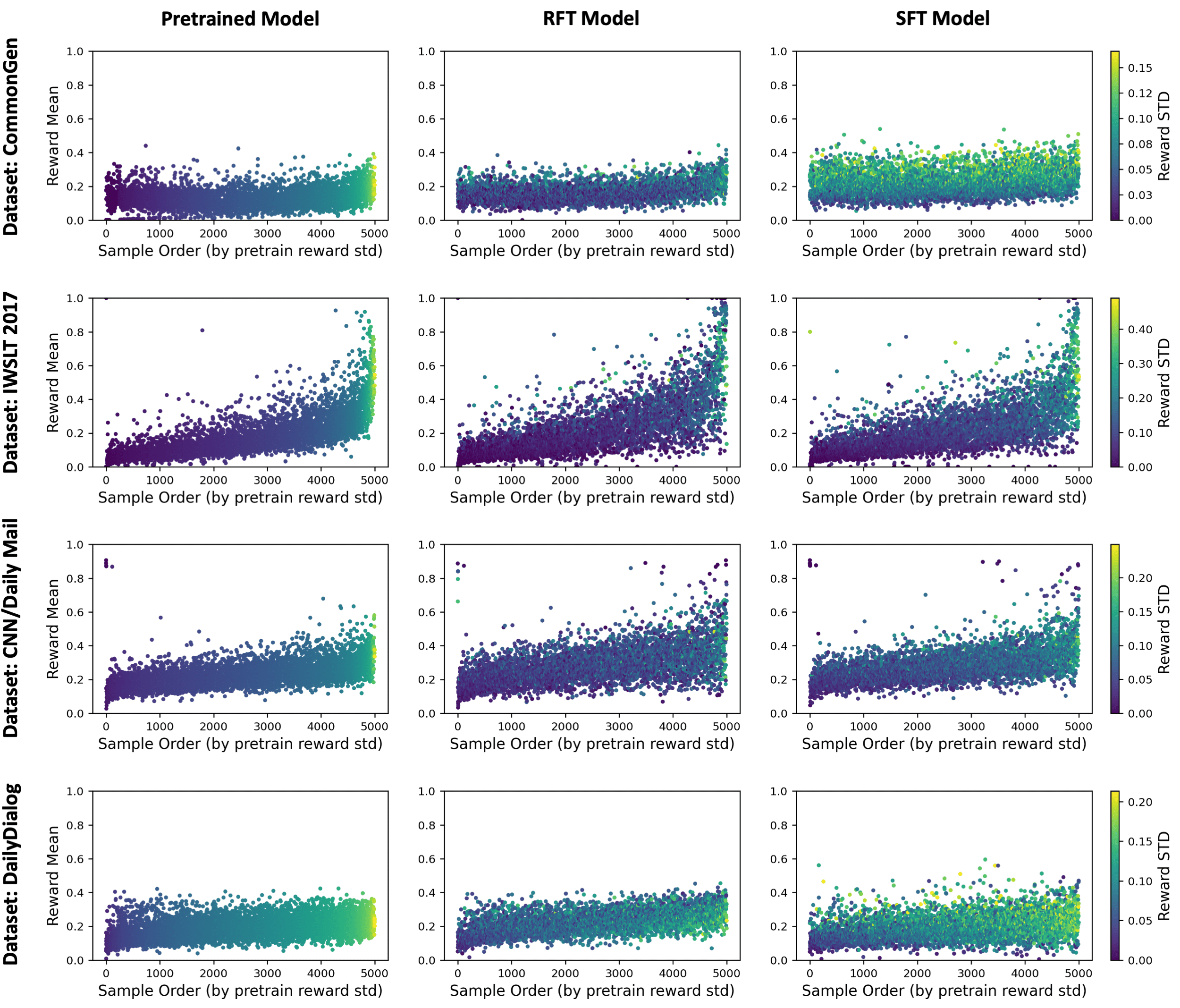}
	\end{center}
	\vspace{-3.5mm}
	\caption{
            Inputs with small reward standard deviation under the pretrained model, \ie~with vanishing expected gradient, are prevalent in the GRUE benchmark.
            This figure supplements~\cref{fig:grue_reward_mean_std_scatter_fig}, by including identical plots for the remaining GRUE datasets.
            For further details see caption of~\cref{fig:grue_reward_mean_std_scatter_fig}.
        }
    \label{fig:grue_reward_mean_std_scatter_fig_app}
\end{figure*}

\begin{table*}[t]
        \caption{
            RFT affects inputs with small reward standard deviation under the pretrained model less than it affects other inputs.
            This table supplements~\cref{table:reward_std_reward_change_pearson} by reporting, for the remaining GRUE datasets, the Pearson correlation between the pretrain reward standard deviation and the absolute reward mean change due to finetuning, across the train samples.
            As anticipated, the correlation is considerably higher for RFT compared to SFT.
            For further details see caption of~\cref{table:reward_std_reward_change_pearson}.
        }
        \vspace{-2mm}
        \begin{center}
        \fontsize{8}{11}\selectfont
        \begin{tabular}{lcccc}
            \toprule
            & CommonGen & IWSLT 2017 & CNN/Daily Mail & DailyDialog \\
            \midrule
            RFT & $0.18$ & $0.51$ & $0.33$ & $0.33$ \\
            SFT & $0.07$ & $0.37$ & $0.23$ & $0.21$ \\
            \bottomrule
        \end{tabular}
        \end{center}
        \label{table:reward_std_reward_change_pearson_app}
\end{table*}

\begin{figure*}[t]
	\vspace{0mm}
	\begin{center}
            \hspace*{-2mm}
            \includegraphics[width=1\textwidth]{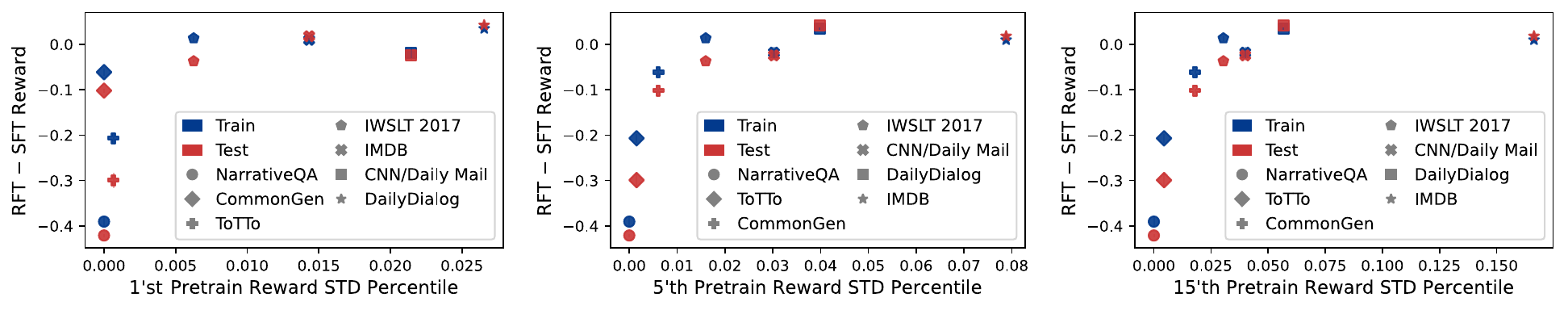}
	\end{center}
	\vspace{-3.5mm}
	\caption{
            RFT performance (relative to SFT) is worse
            when inputs with small reward standard deviation
            are prevalent. 
            This figure supplements~\cref{fig:rf_sf_rew_diff_vs_pre_reward_sd}, by including identical plots with different choices for the measured pretrain reward standard deviation percentile: $1\%, 5\%$, and $15\%$ (instead of $10\%$).
            As can be seen, the correlation between the pretrain reward standard deviation percentile and the reward difference between RFT and SFT is not sensitive to reasonable choices of the percentile.
            For further details see caption of~\cref{fig:rf_sf_rew_diff_vs_pre_reward_sd}.
        }
    \label{fig:rf_sf_rew_diff_vs_pre_reward_sd_percentiles}
\end{figure*}

\begin{table*}[t]
        \caption{
            Mean reward attained by the pretrained, RFT, and SFT models over each of the considered GRUE datasets.
            Means and standard deviations are taken over five different random seeds per model and dataset.
            Since the test sets of ToTTo and CommonGen are not publicly available, reported are the rewards over their validation sets instead.
            This does not pose an issue as our interest lies in optimization, \ie~the ability to achieve a higher reward over the train set, and we do not conduct any hyperparameter tuning beyond taking the default values from~\citet{ramamurthy2023reinforcement}.
        }
        \vspace{-2mm}
        \begin{center}
        \fontsize{6.2}{10}\selectfont
        \begin{tabular}{lcccccc}
            \toprule
            & \multicolumn{3}{c}{Train Reward} & \multicolumn{3}{c}{Test Reward} \\
            \cmidrule(lr){2-4}\cmidrule(lr){5-7}
            & Pretrained Model & RFT Model & SFT Model & Pretrained Model & RFT Model & SFT Model \\
            \midrule
            NarrativeQA & $0.110\,\pm\,0.000$ & $0.102\,\pm\,0.012$ & $0.493\,\pm\,0.003$ & $0.117\,\pm\,0.000$ & $0.116\,\pm\,0.002$ & $0.537\,\pm\,0.001$ \\  
            ToTTo & $0.024\,\pm\,0.000$ & $0.098\,\pm\,0.014$ & $0.305\,\pm\,0.005$ & $0.036\,\pm\,0.000$ & $0.144\,\pm\,0.019$ & $0.443\,\pm\,0.003$ \\  
            CommonGen & $0.130\,\pm\,0.000$ & $0.163\,\pm\,0.007$ & $0.224\,\pm\,0.000$ & $0.179\,\pm\,0.000$ & $0.206\,\pm\,0.005$ & $0.308\,\pm\,0.001$ \\  
            IWSLT 2017 & $0.175\,\pm\,0.000$ & $0.239\,\pm\,0.005$ & $0.226\,\pm\,0.000$ & $0.307\,\pm\,0.000$ & $0.299\,\pm\,0.002$ & $0.337\,\pm\,0.000$ \\
            CNN/Daily Mail & $0.227\,\pm\,0.000$ & $0.260\,\pm\,0.011$ & $0.279\,\pm\,0.000$ & $0.255\,\pm\,0.000$ & $0.284\,\pm\,0.010$ & $0.308\,\pm\,0.000$ \\
            DailyDialog & $0.191\,\pm\,0.000$ & $0.224\,\pm\,0.003$ & $0.189\,\pm\,0.005$ & $0.190\,\pm\,0.001$ & $0.222\,\pm\,0.004$ & $0.180\,\pm\,0.005$ \\
            IMDB & $0.831\,\pm\,0.001$ & $0.890\,\pm\,0.005$ & $0.880\,\pm\,0.001$ & $0.498\,\pm\,0.002$ & $0.565\,\pm\,0.009$ & $0.547\,\pm\,0.002$ \\
            \bottomrule
        \end{tabular}
        \end{center}
        \label{table:grue_rft_sft_reward}
\end{table*}

\begin{figure*}[t]
	\vspace{-2mm}
	\begin{center}
            \includegraphics[width=0.93\textwidth]{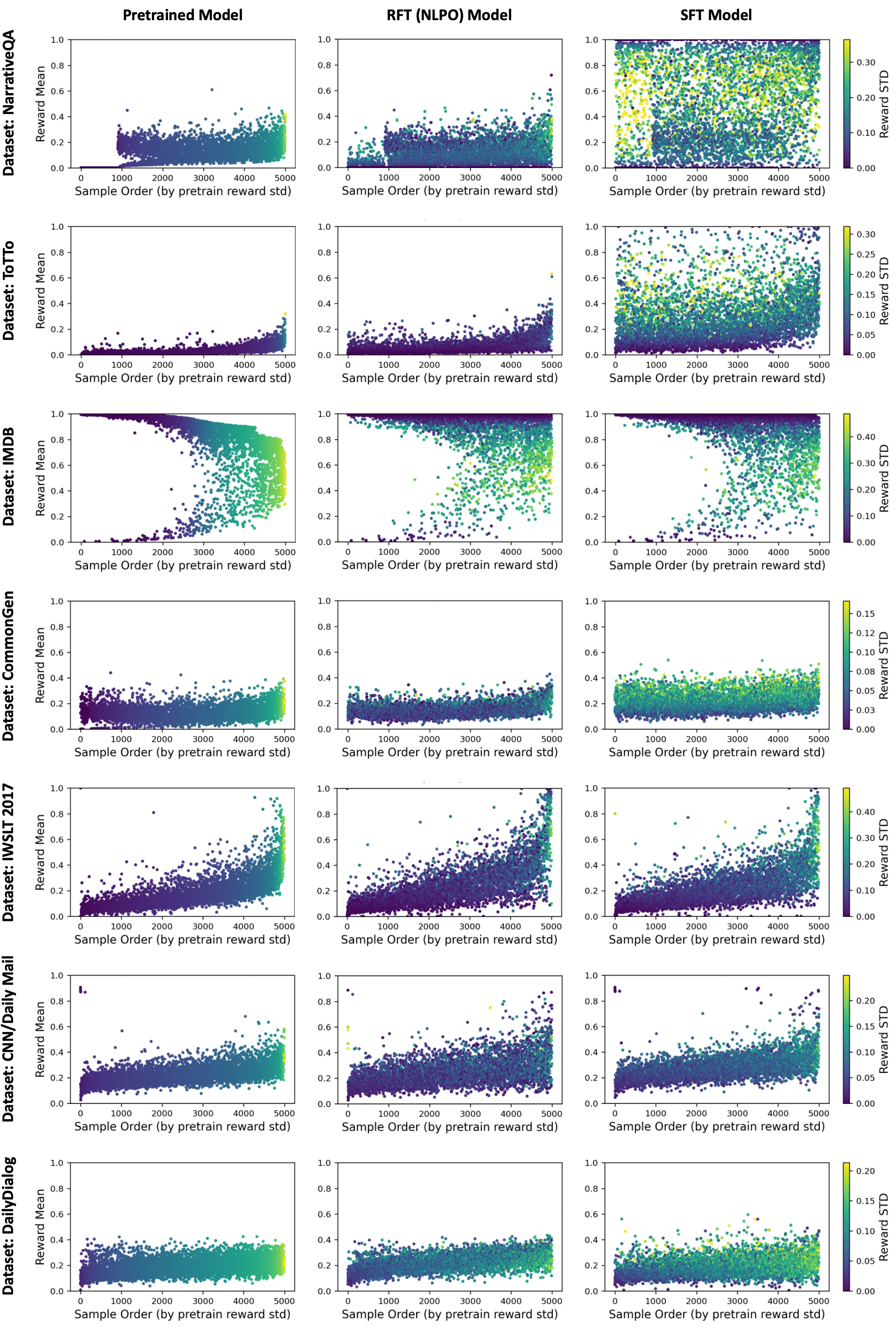}
	\end{center}
	\vspace{-3.5mm}
	\caption{
            Inputs with small reward standard deviation under the pretrained model, \ie~with vanishing expected gradient, are prevalent in the GRUE benchmark.
            This figure is identical to~\cref{fig:grue_reward_mean_std_scatter_fig,fig:grue_reward_mean_std_scatter_fig_app}, except that NLPO~\citep{ramamurthy2023reinforcement} is used instead of PPO for RFT.
            The pretrained model (left) and SFT model (right) columns are repeated here for ease of comparison.
            Notice that, the effect of RFT with NLPO on the rewards of train samples is qualitatively similar to that of RFT with PPO.
            For further details see caption of~\cref{fig:grue_reward_mean_std_scatter_fig}.
        }
    \label{fig:grue_reward_mean_std_scatter_fig_nlpo}
\end{figure*}

\begin{table*}[t]
        \caption{
            RFT (using NLPO) affects inputs with small reward standard deviation under the pretrained model less than it affects other inputs.
            This table supplements~\cref{table:reward_std_reward_change_pearson,table:reward_std_reward_change_pearson_app}, by including the Pearson correlations for RFT performed via NLPO~\citep{ramamurthy2023reinforcement} instead of PPO.
            Similarly to RFT with PPO, the correlation between the pretrain reward standard deviation of a train sample and its absolute reward mean change for RFT with NLPO is considerably higher than for SFT.
            For further details see caption of~\cref{table:reward_std_reward_change_pearson}.
        }
        \vspace{-2mm}
        \begin{center}
        \fontsize{7.8}{11}\selectfont
        \begin{tabular}{lccccccc}
            \toprule
            & NarrativeQA & ToTTo & IMDB & CommonGen & IWSLT 2017 & CNN/Daily Mail & DailyDialog  \\
            \midrule
            RFT (NLPO) & $0.42$ & $0.48$ & $0.71$ & $0.19$ & $0.51$ & $0.34$ & $0.29$ \\
            RFT & $0.48$ & $0.46$ & $0.72$ & $0.18$ & $0.51$ & $0.33$ & $0.33$ \\
            SFT & $0.05$ & $0.16$ & $0.72$ & $0.07$ & $0.37$ & $0.23$ & $0.21$ \\
            \bottomrule
        \end{tabular}
        \end{center}
        \label{table:reward_std_reward_change_pearson_nlpo}
\end{table*}

\begin{figure*}[t]
	\vspace{0mm}
	\begin{center}
            \hspace*{-2mm}
            \includegraphics[width=0.45\textwidth]{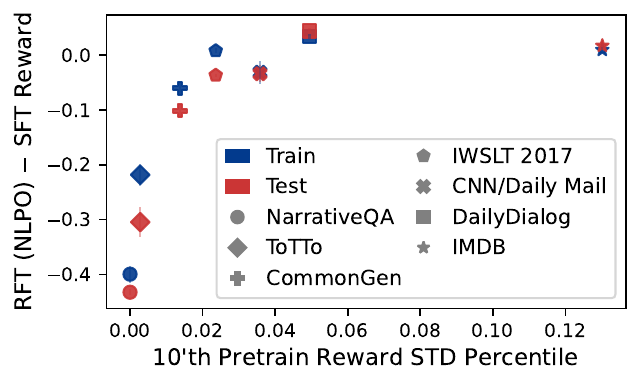}
	\end{center}
	\vspace{-3.5mm}
	\caption{
            RFT (using NLPO) performance (relative to SFT) is worse
            when inputs with small reward standard deviation
            are prevalent. 
            This figure is identical to~\cref{fig:rf_sf_rew_diff_vs_pre_reward_sd}, except that NLPO~\citep{ramamurthy2023reinforcement} is used instead of PPO for RFT.
            For further details see caption of~\cref{fig:rf_sf_rew_diff_vs_pre_reward_sd}.
        }
    \label{fig:rf_sf_rew_diff_vs_pre_reward_sd_nlpo}
\end{figure*}

\begin{figure*}[t]
	\vspace{0mm}
	\begin{center}
            \hspace*{-2mm}
            \includegraphics[width=1\textwidth]{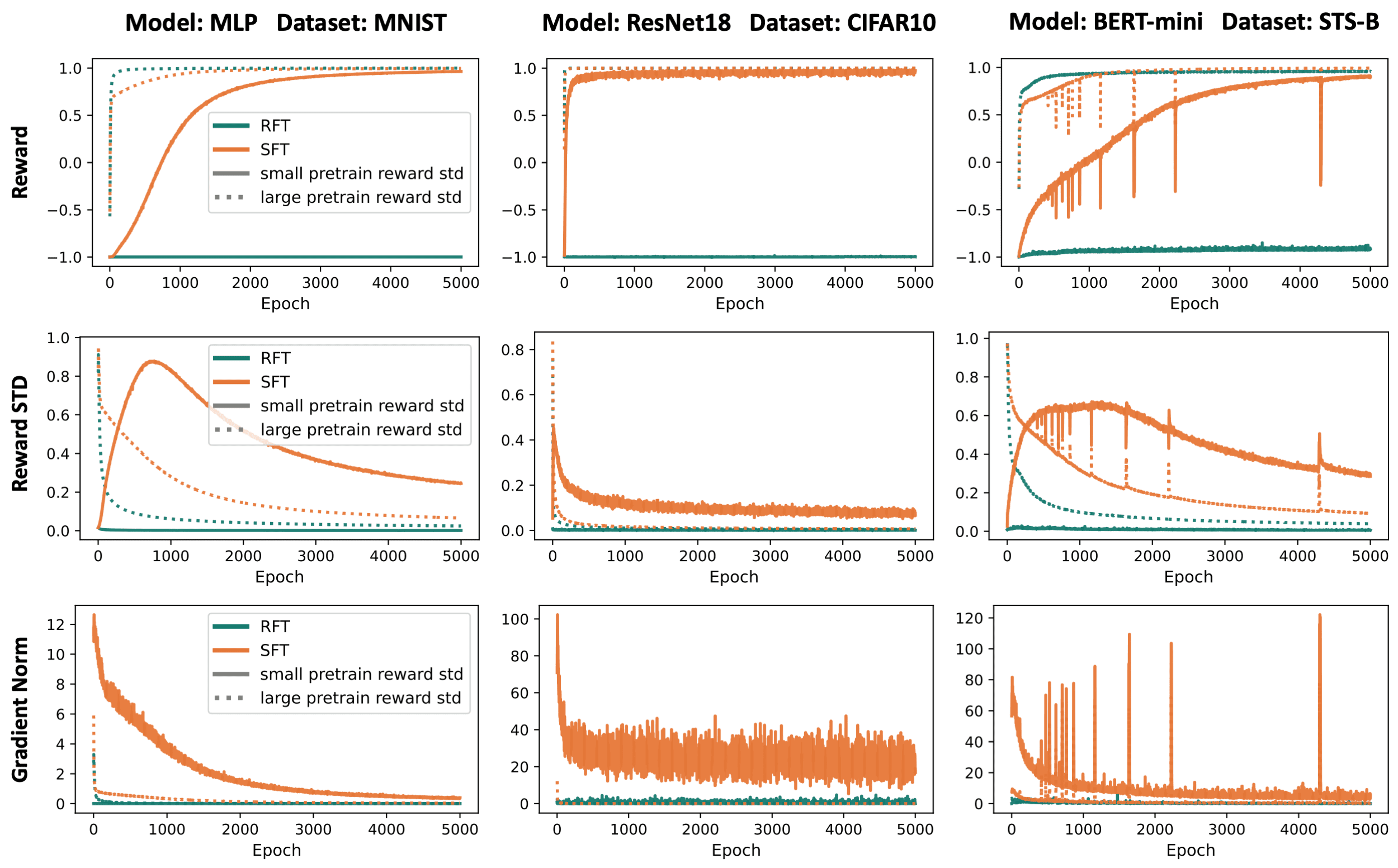}
	\end{center}
	\vspace{-3.5mm}
	\caption{
            RFT struggles to maximize the reward over inputs with small reward standard deviation under the pretrained model, \ie~inputs with vanishing expected gradient, even with perfect exploration (using stochatic gradient descent).
            On the contrary, SFT easily leads to maximal reward.
            This figure is identical to~\cref{fig:sft_vs_rft_controlled_exps}, except that RFT and SFT were optimized using stochastic gradient descent (SGD) as opposed to Adam (note that in RFT we use SGD to minimize the negative expected reward).
            As can be seen, the inability of RFT to maximize the rewards of inputs with small reward standard deviation is even more severe when using SGD instead of Adam.
            For further details see caption of~\cref{fig:sft_vs_rft_controlled_exps}.
        }
	\label{fig:sft_vs_rft_controlled_exps_sgd}
\end{figure*}

\begin{figure*}[t]
	\vspace{0mm}
	\begin{center}
            \hspace*{-2mm}
            \includegraphics[width=1\textwidth]{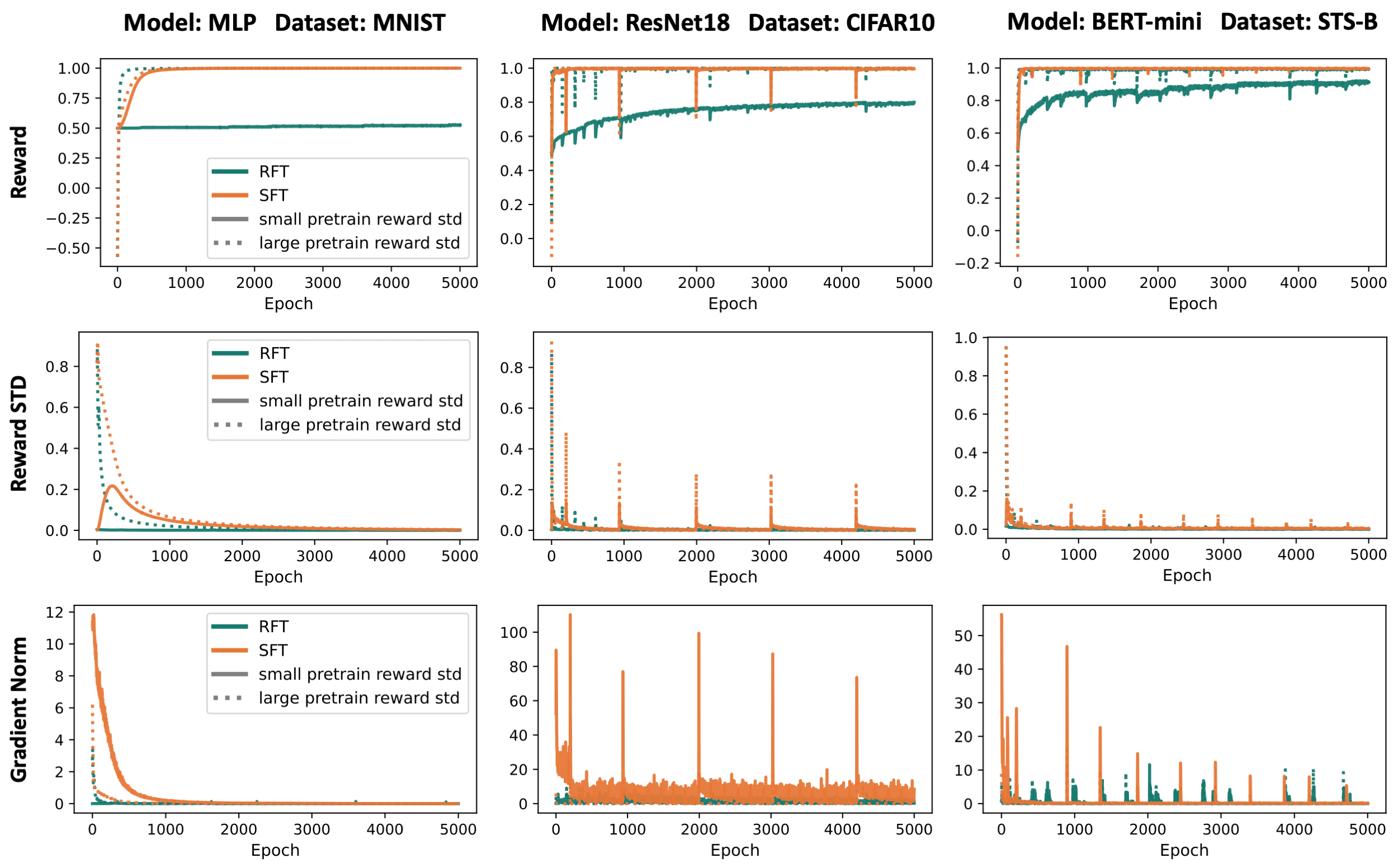}
	\end{center}
	\vspace{-3.5mm}
	\caption{
            RFT struggles to maximize the reward over inputs with small reward standard deviation under the pretrained model, \ie~inputs with vanishing expected gradient, even with perfect exploration and when their expected reward under the pretrained model is relatively high.
            On the contrary, SFT easily leads to maximal reward.
            This figure is identical to~\cref{fig:sft_vs_rft_controlled_exps}, except that for inputs with small pretrain reward standard deviation (\ie~inputs for which the finetuning label was not included in their pretraining labels), the reward of an incorrect label is set to $0.5$ as opposed to $-1$.
            Consequently, the expected reward of such inputs at the beginning of finetuning is $0.5$, while it is $-1$ in the experiments of~\cref{fig:sft_vs_rft_controlled_exps}.
            Nonetheless, RFT struggles to increase their reward, with the reward dynamics mimicking those from~\cref{fig:sft_vs_rft_controlled_exps}.
            For further details see caption of~\cref{fig:sft_vs_rft_controlled_exps}.
        }
	\label{fig:sft_vs_rft_controlled_exps_high_init_reward}
\end{figure*}

\begin{figure*}[t]
	\vspace{0mm}
	\begin{center}
            \includegraphics[width=1\textwidth]{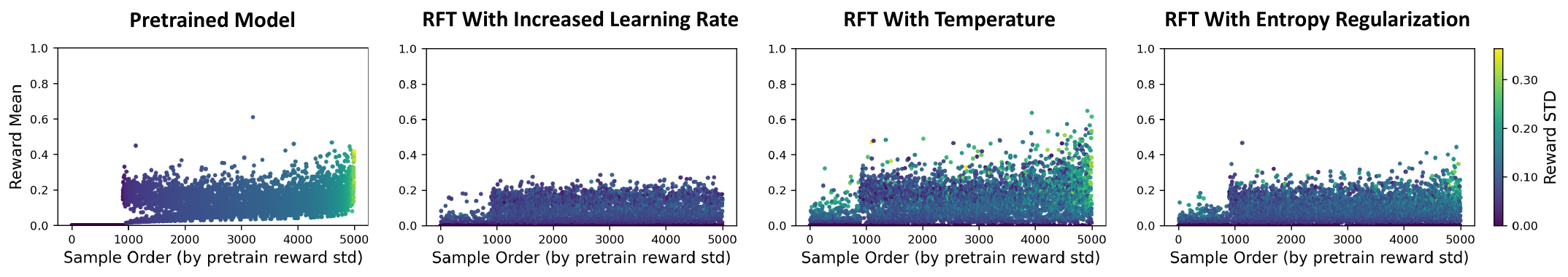}
	\end{center}
	\vspace{-3.5mm}
	\caption{
            Conventional heuristics are inadequate for overcoming vanishing gradients in RFT.
            For the same randomly chosen NarrativeQA train samples from~\cref{fig:grue_reward_mean_std_scatter_fig}, presented are the reward means and standard deviations (estimated based on ten generations per input) under the pretrained model and RFT models optimized with an increased learning rate, temperature, or entropy regularization coefficient.
            The learning rate, temperature, and entropy regularization values with which the plots were created are those achieving the highest train reward (excluding the default values), as reported in~\cref{table:narqa_rft_hyperparam}.
            Notice that the effect of these RFT runs is qualitatively similar to that of the default RFT configuration (shown in~\cref{fig:grue_reward_mean_std_scatter_fig}).
            Meaning, the heuristics do not alleviate the difficulty of RFT to improve the reward of inputs with small pretrain reward standard deviation.
        }
    \label{fig:narqa_rft_hyperparam_reward_mean_std_scatter_fig}
\end{figure*}

\begin{table*}[t]
        \caption{
            Conventional heuristics are inadequate for overcoming vanishing gradients in RFT.
            This table supplements~\cref{table:narqa_rft_hyperparam} by including results for the ToTTo and CommonGen datasets.
            Reported are means and standard deviations of train and test rewards, taken over three different random seeds, with best result in bold.
            While the common heuristics of increasing the learning rate, temperature, and entropy regularization coefficient are unable to improve upon the default RFT configuration, an initial SFT phase is highly effective.
        }
        \vspace{-2mm}
        \begin{center}
        \fontsize{8}{10.5}\selectfont
        \begin{tabular}{lcc}
            \multicolumn{3}{l}{Dataset: ToTTo} \\[0.3em]
            \toprule
            & Train Reward & Validation Reward \\
            \midrule
            RFT\textsuperscript{*} & $0.095\,\pm\,0.026$ & $0.137\,\pm\,0.037$ \\
            SFT + RFT  & \best{$\mathbf{0.342\,\pm\,0.001}$} & \best{$\mathbf{0.439\,\pm\,0.001}$} \\
            \midrule
            RFT with learning rate $2 \cdot 10^{-5}$ & $0.065\,\pm\,0.005$ & $0.131\,\pm\,0.003$ \\
            RFT with learning rate $2 \cdot 10^{-4}$ & $0.000\,\pm\,0.000$ & $0.000\,\pm\,0.000$ \\
            RFT with learning rate $2 \cdot 10^{-3}$ & $0.000\,\pm\,0.000$ & $0.000\,\pm\,0.000$ \\ 
            \midrule
            RFT with temperature $1.5$ & $0.059\,\pm\,0.007$ & $0.096\,\pm\,0.022$ \\
            RFT with temperature $2$ & $0.043\,\pm\,0.004$ & $0.073\,\pm\,0.005$ \\
            RFT with temperature $2.5$ & $0.019\,\pm\,0.001$ & $0.038\,\pm\,0.002$ \\
            \midrule
            RFT with entropy regularization $0.01$ & $0.061\,\pm\,0.013$ & $0.139\,\pm\,0.014$ \\
            RFT with entropy regularization $0.1$ & $0.000\,\pm\,0.000$ & $0.018\,\pm\,0.015$ \\    
            RFT with entropy regularization $1$ & $0.000\,\pm\,0.000$ & $0.000\,\pm\,0.000$ \\ 
            \bottomrule\\[-2mm]
            \multicolumn{3}{l}{\textsuperscript{*}With default hyperparameters: learning rate $2 \cdot 10^{-6}$, temperature $1$, entropy regularization $0$} \\
        \end{tabular} \\
        \vspace{4mm}
        \begin{tabular}{lcc}
            \multicolumn{3}{l}{Dataset: CommonGen} \\[0.3em]
            \toprule
            & Train Reward & Validation Reward \\
            \midrule
            RFT\textsuperscript{*} & $0.164\,\pm\,0.004$ & $0.209\,\pm\,0.005$ \\
            SFT + RFT  & \best{$\mathbf{0.265\,\pm\,0.001}$} & \best{$\mathbf{0.319\,\pm\,0.008}$} \\
            \midrule
            RFT with learning rate $2 \cdot 10^{-5}$ & $0.098\,\pm\,0.019$ & $0.134\,\pm\,0.015$ \\
            RFT with learning rate $2 \cdot 10^{-4}$ & $0.112\,\pm\,0.094$ & $0.157\,\pm\,0.072$ \\
            RFT with learning rate $2 \cdot 10^{-3}$ & $0.055\,\pm\,0.029$ & $0.106\,\pm\,0.048$ \\ 
            \midrule
            RFT with temperature $1.5$ & $0.139\,\pm\,0.004$ & $0.190\,\pm\,0.017$ \\
            RFT with temperature $2$ & $0.147\,\pm\,0.004$ & $0.189\,\pm\,0.000$ \\
            RFT with temperature $2.5$ & $0.140\,\pm\,0.007$ & $0.182\,\pm\,0.006$ \\
            \midrule
            RFT with entropy regularization $0.1$ & $0.009\,\pm\,0.000$ & $0.083\,\pm\,0.020$ \\
            RFT with entropy regularization $1$ & $0.002\,\pm\,0.000$ & $0.004\,\pm\,0.003$ \\    
            RFT with entropy regularization $10$ & $0.001\,\pm\,0.000$ & $0.000\,\pm\,0.000$ \\   
            \bottomrule\\[-2mm]
            \multicolumn{3}{l}{\textsuperscript{*}With default hyperparameters: learning rate $2 \cdot 10^{-6}$, temperature $1$, entropy regularization $0.01$} \\
        \end{tabular}
        \end{center}
        \label{table:totto_commongen_rft_hyperparam}
\end{table*}

\begin{figure*}[t]
	\begin{center}
        \hspace*{-2mm}
        \includegraphics[width=1\textwidth]{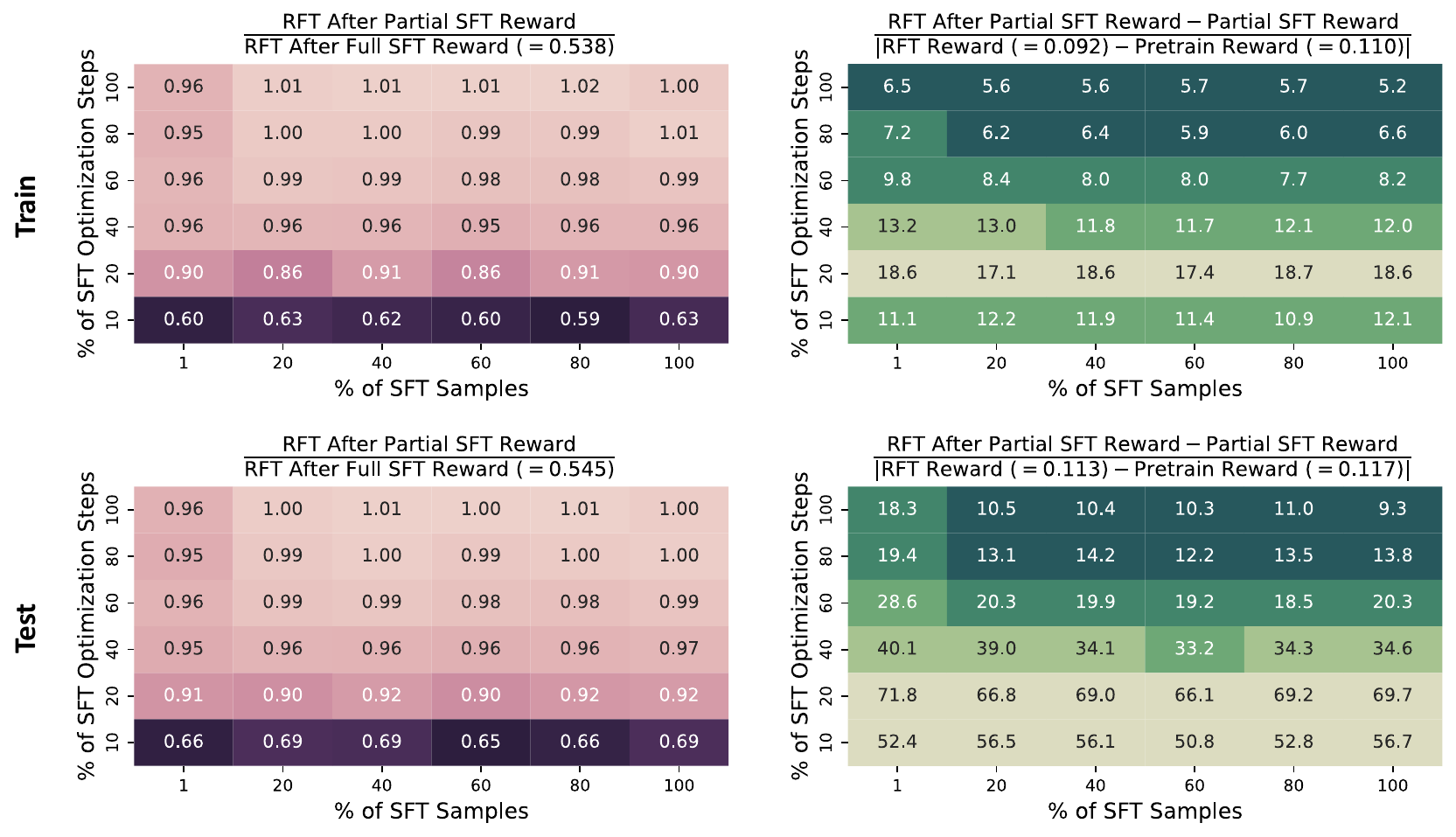}
	\end{center}
	\vspace{-3.5mm}
	\caption{
            On datasets in which RFT suffers from vanishing gradients, a few initial SFT optimization steps on a small number of labeled inputs substantially boost the efficacy of RFT.
            This figure supplements~\cref{fig:narqa_partial_sft} by reporting metrics based on the mean test reward, in addition to on the mean train reward.
            The top row is identical to~\cref{fig:narqa_partial_sft}, repeated here for ease of comparison.
        }
	\label{fig:narqa_partial_sft_app}
\end{figure*}

\begin{figure*}[t]
	\vspace{0mm}
	\begin{center}
            \includegraphics[width=1\textwidth]{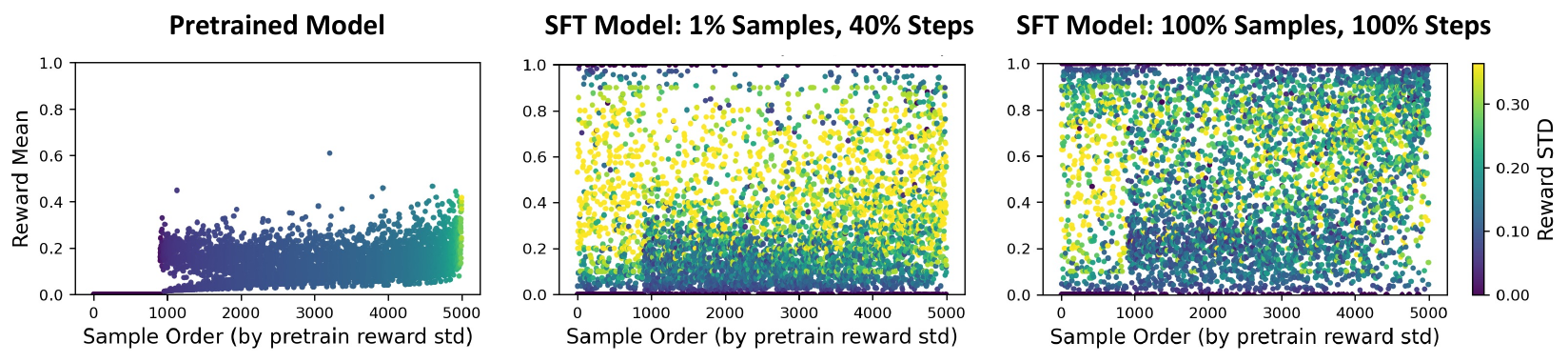}
	\end{center}
	\vspace{-3.5mm}
	\caption{
            A few initial SFT optimization steps on a small number of labeled inputs effectively mitigate vanishing gradients in RFT.
            For the same randomly chosen NarrativeQA train samples from~\cref{fig:grue_reward_mean_std_scatter_fig}, presented are the reward means and standard deviations (estimated based on ten generations per input) under the pretrained, a partial SFT, and the full (default) SFT models.
            The pretrained model (left) and SFT model (right) plots are identical to the respective plots in~\cref{fig:grue_reward_mean_std_scatter_fig}, and are repeated here for ease of comparison.
            Observe that the number of samples with small reward standard deviation, \ie~with vanishing expected gradient, is considerably reduced after the partial SFT.
            We highlight that, despite the reward mean obtained by the partial SFT model being noticeably lower than that obtained by the full SFT model, after performing RFT the gap is closed almost completely.
            That is, as shown in~\cref{fig:narqa_partial_sft}, RFT over the partial SFT model reaches roughly the same reward as RFT over the full SFT model.
            This highlights the effectiveness of a partial initial SFT phase for addressing vanishing gradients in RFT.
        }
    \label{fig:narqa_partial_sft_reward_mean_std_scatter}
\end{figure*}

\begin{figure*}[t]
	\begin{center}
        \hspace*{-2mm}
        \includegraphics[width=1\textwidth]{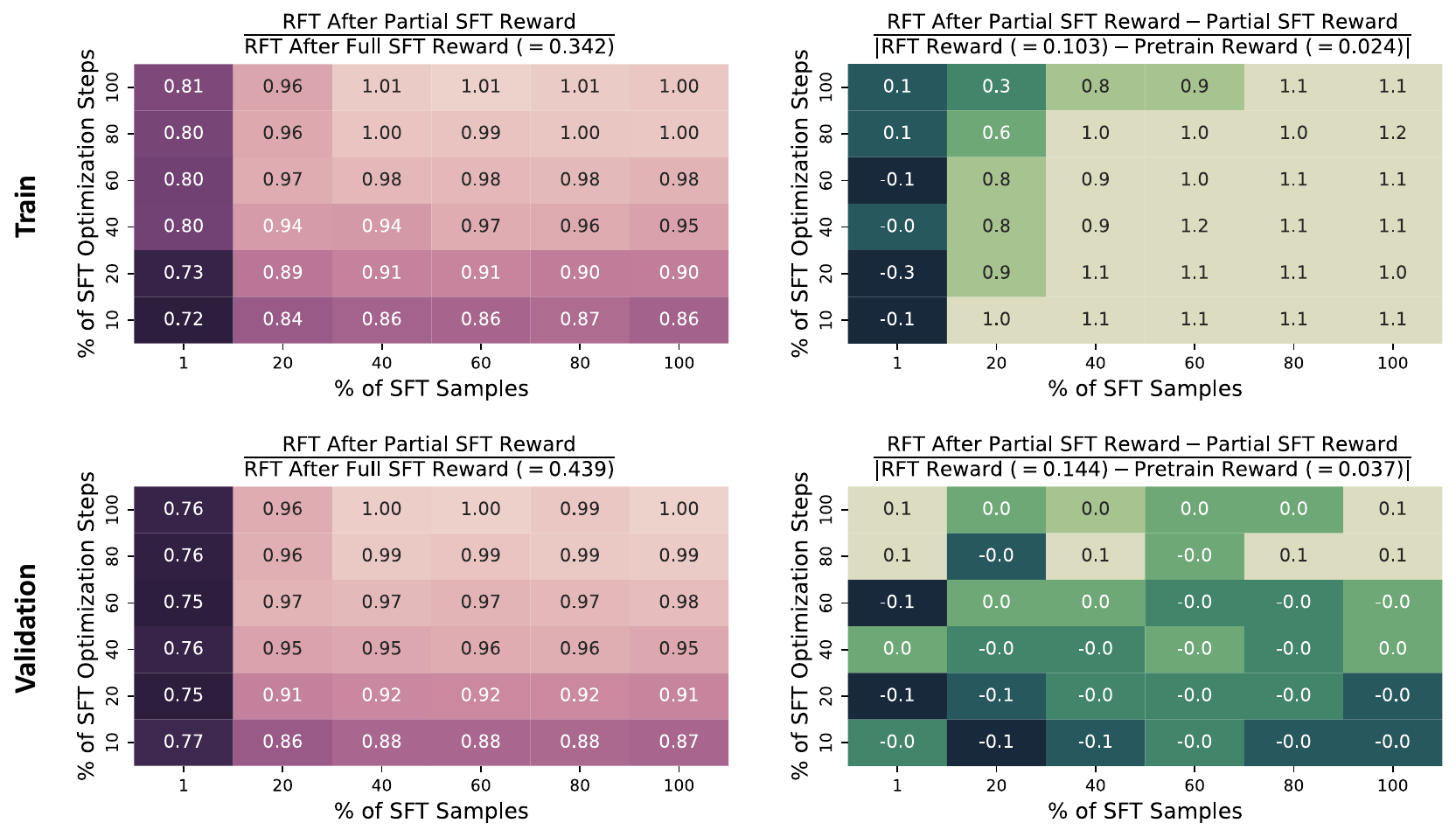}
	\end{center}
	\vspace{-3.5mm}
	\caption{
            On datasets in which RFT suffers from vanishing gradients, a few initial SFT optimization steps on a small number of labeled inputs substantially boost the efficacy of RFT.
            This figure is identical to~\cref{fig:narqa_partial_sft_app}, except that the experiments were carried out over the ToTTo dataset (as opposed to NarrativeQA).
        }
	\label{fig:totto_partial_sft_app}
\end{figure*}

\begin{figure*}[t]
	\begin{center}
        \hspace*{-2mm}
        \includegraphics[width=1\textwidth]{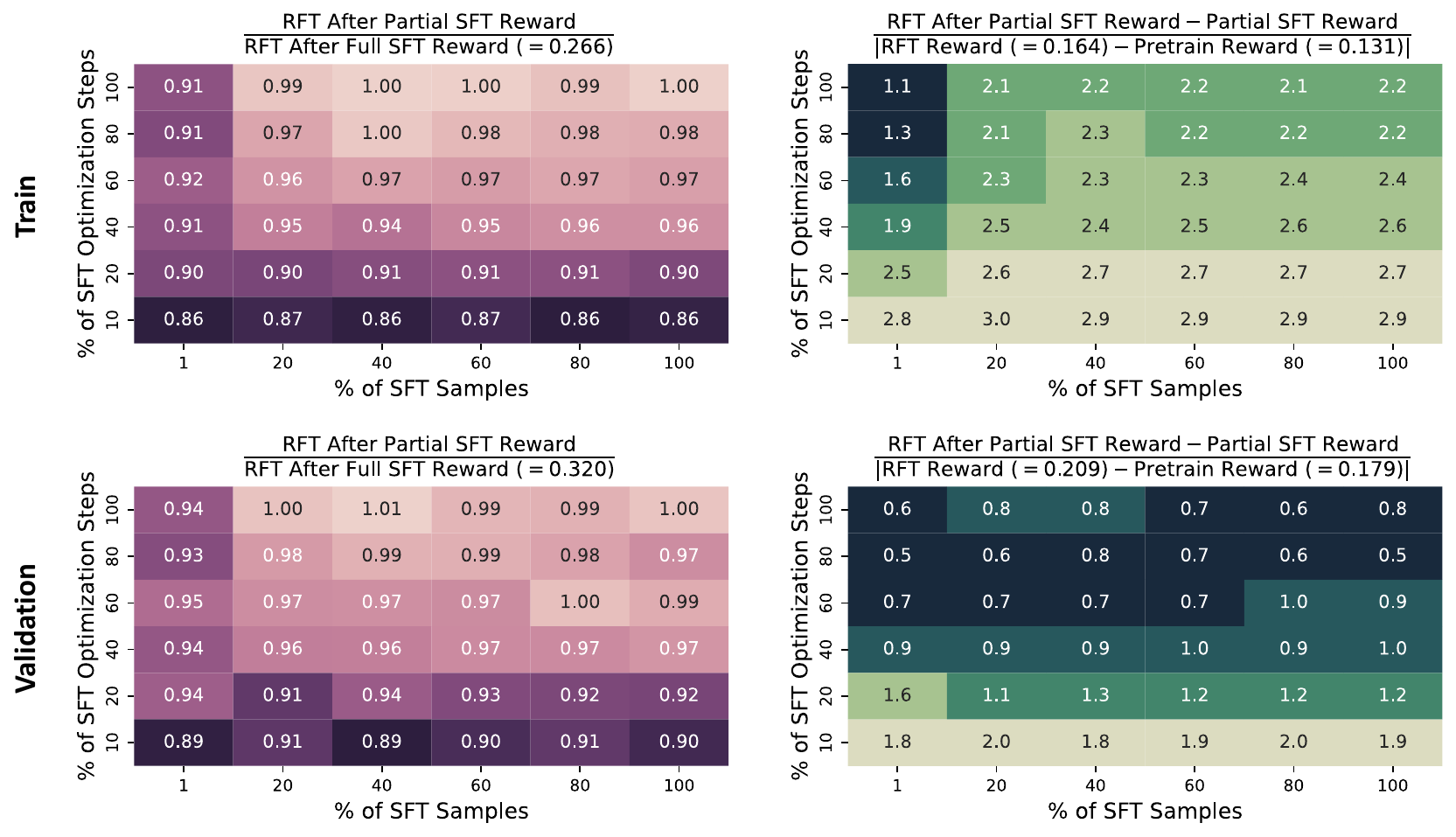}
	\end{center}
	\vspace{-3.5mm}
	\caption{
            On datasets in which RFT suffers from vanishing gradients, a few initial SFT optimization steps on a small number of labeled inputs substantially boost the efficacy of RFT.
            This figure is identical to~\cref{fig:narqa_partial_sft_app}, except that the experiments were carried out over the CommonGen dataset (as opposed to NarrativeQA).
        }
	\label{fig:commongen_partial_sft_app}
\end{figure*}

\end{document}